\newcommand{\PP}{\mathcal{P}}
\newcommand{\LL}{\mathcal{L}}
\newcommand{\EE}{\mathbb{E}\,}
\newcommand{\DD}{\mathcal{D}}
\newcommand{\RR}{\mathbb{R}}
\newcommand{\NN}{\mathcal{N}}
\newcommand{\ww}{\mathbf{w}}
\newcommand{\xx}{\mathbf{x}}
\renewcommand{\aa}{\mathbf{a}}
\newcommand{\Var}{\mathbb{V}\mathrm{ar}\,}
\newcommand{\const}{\mathrm{const}}
\newcommand{\kld}[2]{\mathrm{KL}({#1}\; || \;{#2})}
\newcommand{\probd}[3]{D_{#1}({#2}\; || \;{#3})}
\DeclareMathOperator*{\Div}{div}
\newtheorem{prop}{Proposition}
\newtheorem{lemma}{Lemma}
\newtheorem{condition}{Condition}
\title{Dynamically Stable Infinite-Width Limits of Neural Classifiers}
\author{
    Eugene A. Golikov\\
    Neural Networks and Deep Learning lab.\\
    Moscow Institute of Physics and Technology\\
    Moscow, Russia \\
    \texttt{golikov.ea@mipt.ru} \\
  }
\begin{document}

\maketitle

\begin{abstract}
    Recent research has been focused on two different approaches to studying neural networks training in the limit of infinite width (1) a mean-field (MF) and (2) a constant neural tangent kernel (NTK) approximations.
    These two approaches have different scaling of hyperparameters with the width of a network layer and as a result, different infinite-width limit models.
    We propose a general framework to study how the limit behavior of neural models depends on the scaling of hyperparameters with network width. %that allow for a dynamically stable limit behavior of corresponding models.
    Our framework allows us to derive scaling for existing MF and NTK limits, as well as an uncountable number of other scalings that lead to  a dynamically stable limit behavior of corresponding models. %well-defined limit models.
    However, only a finite number of distinct limit models are induced by these scalings.
    Each distinct limit model corresponds to a unique combination of such properties as boundedness of logits and tangent kernels at initialization or stationarity of tangent kernels.
    Existing MF and NTK limit models, as well as one novel limit model, satisfy most of the properties demonstrated by finite-width models. %, compared to other limit models induced by hyperparameter scaling.
    We also propose a novel initialization-corrected mean-field limit that satisfies all properties noted above, and its corresponding model is a simple modification for a finite-width model.
    % Source code to reproduce all the reported results is available on GitHub.\footnote{\url{https://github.com/deepmipt/research/tree/master/Infinite_Width_Limits_of_Neural_Classifiers}}
\end{abstract}

\section{Introduction}
\label{sec:intro}

For a couple of decades neural networks have proved to be useful in a variety of applications.
However, their theoretical understanding is still lacking.
Several recent works have tried to simplify the object of study by approximating a training dynamics of a finite-width neural network with its limit counterpart in the limit of a large number of hidden units; we refer it as an "infinite-width" limit.
The exact type of the limit training dynamics depends on how hyperparameters of the training dynamics scale with width.
In particular, two different types of limit models have been already extensively discussed in the literature: an NTK model \cite{jacot2018ntk} and a mean-field limit model \cite{mei2018mf,mei2019mf,rotskoff2019mf,sirignano2018lln,chizat2018mf,yarotsky2018mf}.
A recent work \cite{golikov2020towards} attempted to provide a link between these two different types of limit models by building a framework for choosing a scaling of hyperparameters that lead to a "well-defined" limit model.
Our work is the next step in this direction. 
We study infinite-width limits for networks with a single hidden layer trained to minimize cross-entropy loss with gradient descent.
%Our contributions are summarized in the upcoming section.
Our contributions are following.

%\subsection{Contributions}

\begin{figure}[t]
    \centering
    \begin{subfigure}{0.4\linewidth}
        \begin{tikzpicture}[every node/.style={scale=0.7}, scale=0.8]
            % \draw [help lines] (-2,-2) grid (2,2);
    
            \draw [->] (0,-3) -- (0,3) node [right] {$\tilde q$};
            \draw [->] (-3,0) -- (3,0) node [below] {$q_\sigma$};
    
            \draw [very thick] (-0.1,2) -- (0.1,2) node [above right] {$1$};
            \draw [very thick] (-0.1,-2) -- (0.1,-2) node [below right] {$-1$};
            \draw [very thick] (2,-0.1) -- (2,0.1) node [below right] {$1$};
            \draw [very thick] (-2,-0.1) -- (-2,0.1) node [below left] {$-1$};
    
            \draw [green, fill=green!50] (2,-3) -- (-3,2) -- (-3,3) -- (3,-3);
            % \draw [red, fill=red, opacity=0.5] (-3,3) -- (3,-3) -- (3,3);
            % \draw [black, fill=black, opacity=0.5] (-3,2) -- (2,-3) -- (-3,-3);
            \node [rotate=-45] at (0.8,-1.3) {dynamically stable model evolution};
    
            \draw [very thick, dashed] (-1,0) -- (-1,1);
            \draw [very thick, dashed] (-1,0) -- (-2,2);
            \draw [very thick, dashed] (-3,3) -- (3,-3);
            \draw [very thick, dashed] (-3,2) -- (2,-3);
    
            \draw [fill=violet!70] (-1,1) circle [radius=0.1];
            \draw [fill=orange] (-1,0) circle [radius=0.1];
            \draw [fill=blue!70] (-2,2) circle [radius=0.1];
    
            \draw [->, thick] (2,-1) node [right,align=left] {evolving \\ kernels} to [out=210,in=45] (1.5,-1.45);
            \draw [->, thick] (0.5,0.7) node [right,align=left] {finite logits \\ at initialization} to [out=180,in=15] (-0.95,0.5);
            \draw [->, thick] (0.25,1.5) node [right,align=left] {sym-default} to [out=180,in=45] (-0.9,1.1);
            \draw [->, thick] (-1,2.5) node [right,align=left] {MF} to [out=180,in=45] (-1.9,2.1);
            \draw [->, thick] (-2.1,-0.5) node [below,align=left] {finite tangent kernels \\ at initialization} to [out=90,in=220] (-1.55,0.95);
            \draw [->, thick] (-0.5,-1.5) node [below,align=left] {NTK} to [out=90,in=315] (-0.9,-0.1);
            \draw [->, thick] (-0.3,-2.5) node [left,align=left] {logits and kernels \\ are of the same order \\ at initialization} to [out=0,in=200] (1.2,-2.3);
        \end{tikzpicture}
    \end{subfigure}
    \begin{subfigure}{0.59\linewidth}
        \includegraphics[width=0.49\linewidth]{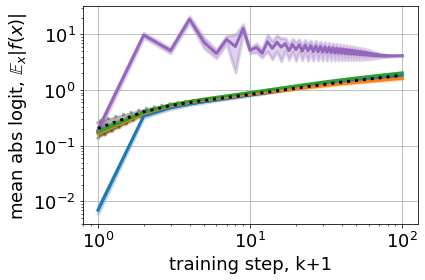}
        \includegraphics[width=0.49\linewidth]{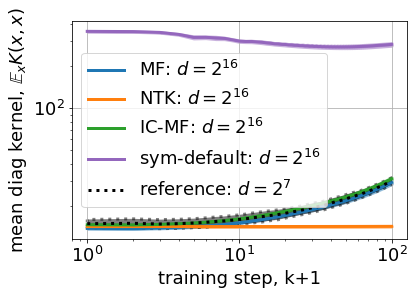}
        \\
        \includegraphics[width=0.49\linewidth]{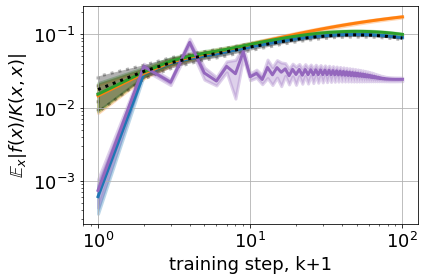}
        \includegraphics[width=0.49\linewidth]{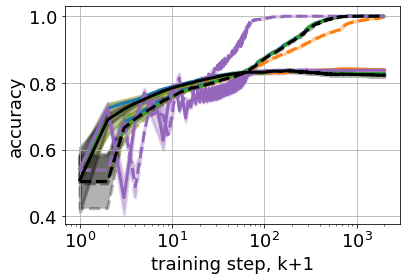}
    \end{subfigure}
    \caption{
        \textbf{
            A diagram on the left specifies several properties demonstrated by finite-width models.
            As plots on the right demonstrate, our novel IC-MF limit model satisfy all of these properties, while MF and NTK limit models, as well as sym-default limit model presented in the paper violate some of them.
        }
        \textit{Left:} A band of scaling exponents $(q_\sigma,\tilde q)$ that lead to dynamically stable model evolutions in the limit of infinite width, as well as dashed lines of special properties that corresponding limits satisfy. 
        Three colored points correspond to limit models that satisfy most of these properties.
        \textit{Right:} Training dynamics of three models that correspond to color points on the left plot, as well as of initialization-corrected mean-field model (IC-MF), which does not correspond to any point of the left plot. 
        These models are results of scaling of a reference model of width $d = 2^7$ (black line) up to width $d = 2^{16}$ (colored lines). 
        Solid lines correspond to the test set, while dashed lines are for the train set.
        See Appendix \ref{sec:app_experiments} for details.
    }
    \label{fig:scaling_plane_and_losses_and_accs}
\end{figure}

\begin{enumerate}
    \item We develop a framework for reasoning about scaling of hyperparameters, which allows one to infer scaling parameters that allow for a dynamically stable model evolution in the limit of infinite width. This framework allows us to derive both mean-field and NTK limits that have been extensively studied in the literature, as well as the "intermediate limit" introduced in \cite{golikov2020towards}. 
    \item Our framework demonstrates that there are only 13 distinct stable model evolution equations in the limit of infinite width that can be induced by scaling hyperparameters of a finite-width model. Each distinct limit model corresponds to a region (two-, one-, or zero-dimensional) of a green band of the Figure \ref{fig:scaling_plane_and_losses_and_accs}, left.
    \item We consider a list of properties that are statisfied by the evolution of finite-width models, but not generally are for its infinite-width limits. We demonstrate that mean-field and NTK limit models, as well as "sym-default" limit model which was not discussed in the literature previously, are special in the sense that they satisfy most of these properties among all limit models induced by hyperparameter scalings. We propose a model modification that allows for all of these properties in the limit of infinite width and call the corresponding limit "initialization-corrected mean-field limit (IC-MF)".
    \item We discuss the ability of limit models to approximate the training dynamics of finite-width ones. We show that our proposed IC-MF limiting model is the best among all other possible limit models.
\end{enumerate}

While our present analysis is restricted to networks with a single hidden layer, we discuss a high-level plan for generalizing it to deep nets, as well as an expected outcome of this research program, in App.~\ref{sec:app_deep_nets}.

\section{Training a one hidden layer net with SGD}
\label{sec:main}

Here we consider training a one hidden layer net $f_d$ with $d$ hidden units with SGD.
We assume the hyperparameters, namely, initialization variances and learning rates, are scaled as power-laws of $d$.
Each scaling induces a limit model $f_\infty = \lim_{d\to\infty} f_d$.
We present a notion of dynamical stability, which states that the change of logits after a single gradient step is comparable to logits themselves.
We derive a necessary condition for dynamical stability in terms of the power-law exponents of hyperparameters.
We then present a list of conditions that divide the class of scalings into 13 subclasses; each subclass corresponds to a unique distinct limit model.

Consider a one hidden layer network:
\begin{equation}
    f(\xx; \aa, W) =
    \aa^T \phi(W^T \xx) = 
    \sum_{r=1}^d a_r \phi(\ww_r^T \xx),
\end{equation}
where $\xx \in \RR^{d_\xx}$, $W = [\ww_1, \ldots, \ww_d] \in \RR^{d_\xx \times d}$, and $\aa = [a_1, \ldots, a_d]^T \in \RR^d$.
We assume a nonlinearity to be real analytic and asymptotically linear: $\phi(z) = \Theta_{z\to\infty}(z)$.
Such a nonlinearity can be, e.g. "leaky softplus": $\phi(z) = \ln(1+e^z) - \alpha \ln(1+e^{-z})$ for $\alpha > 0$.
This is a technical assumption introduced to simplify proofs.
We assume the loss function $\ell(y,z)$ to be the standard binary cross-entropy loss: $\ell(y,z) = \ln(1 + e^{-yz})$, where labels $y \in \{-1,1\}$.
% The data distribution loss is defined as $\LL(\aa, W) = \EE_{\xx,y \in D_{train}} \ell(y, f(\xx; \aa,W))$, where $D_{train}$ is a train dataset sampled from the data distribution $\DD$.
The data distribution loss is defined as $\LL(\aa, W) = \EE_{\xx,y \sim \DD} \ell(y, f(\xx; \aa,W))$.

Weights are initialized with isotropic gaussians with zero means: $\ww_r^{(0)} \sim \NN(0, \sigma_w^2 I)$, $a_r^{(0)} \sim \NN(0, \sigma_a^2)$ $\forall r = 1\ldots d$.
The evolution of weights is driven by the stochastic gradient descent (SGD):
\begin{equation}
    \Delta\theta^{(k)} = 
    \theta^{(k+1)} - \theta^{(k)} = -
    \eta_\theta \frac{\partial \ell(y_\theta^{(k)}, f(\xx_\theta^{(k)}; \aa,W))}{\partial \theta},
    \quad
    (\xx_\theta^{(k)}, y_\theta^{(k)}) \sim \DD,
\end{equation}
where $\theta$ is either $\aa$ or $W$.
We assume that gradients for $\aa$ and $W$ are estimated using independent data samples $(\xx_a^{(k)}, y_a^{(k)})$ and $(\xx_w^{(k)}, y_w^{(k)})$.
While this assumption is indeed non-standard, we note that corresponding stochastic gradients still give unbiased estimates for true gradients.
% Now we introduce scaled quantities:
Define:
\begin{equation}
    \hat a_r^{(k)} = \frac{a_r^{(k)}}{\sigma_a}, \quad
    \hat\ww_r^{(k)} = \frac{\ww_r^{(k)}}{\sigma_w}, \quad
    \hat \eta_a = \frac{\eta_a}{\sigma_a^2}, \quad
    \hat \eta_w = \frac{\eta_w}{\sigma_w^2}.    
\end{equation}
Then the dynamics transforms to:
\begin{equation}
    \Delta\hat\theta_r^{(k)} = 
    \hat\eta_\theta \frac{\partial \ell(y_\theta^{(k)}, f(\xx_\theta^{(k)}; \sigma_a \hat\aa, \sigma_w \hat W))}{\partial \hat\theta_r},
    \quad
    (\xx_\theta^{(k)}, y_\theta^{(k)}) \sim \DD,
\end{equation}
while scaled initial conditions become: $\hat a_r^{(0)} \sim \NN(0, 1)$, $\hat\ww_r^{(0)} \sim \NN(0, I)$ $\forall r = 1\ldots d$.

By expanding gradients, we get the following: 
\begin{equation}
    \Delta \hat a_r^{(k)} =
    % \hat a_r^{(k+1)} - \hat a_r^{(k)} =
    -\hat\eta_a \sigma_a \nabla_{f_d}^{(k)} \ell(\xx_a^{(k)},y_a^{(k)}) \; \phi(\sigma_w \hat\ww_r^{(k),T} \xx_a^{(k)}),
    \quad
    \hat a_r^{(0)} \sim \NN(0, 1), 
    \label{eq:1hid_sgd_a}
\end{equation}
\begin{equation}
    \Delta \hat\ww_r^{(k)} =
    % \hat\ww_r^{(k+1)} - \hat\ww_r^{(k)} =
    -\hat\eta_w \sigma_a \sigma_w \nabla_{f_d}^{(k)} \ell(\xx_w^{(k)},y_w^{(k)}) \; \hat a_r^{(k)} \phi'(\ldots) \xx_w^{(k)},
    \quad 
    \hat\ww_r^{(0)} \sim \NN(0, I),
    \label{eq:1hid_sgd_w}
\end{equation}
% where $\xx_{a/w}^{(k)},y_{a/w}^{(k)} \sim \DD$.
% Here $\nabla_{f_d}^{(k)} \ell(\xx,y) = \left. \frac{\partial \ell(y,z)}{\partial z} \right|_{z=f_d^{(k)}(\xx)}$, where $f_d^{(k)}(\xx) = \sigma_a \sum_{r=1}^d \hat a^{(k)}_r \phi(\sigma_w \hat\ww_r^{(k),T} \xx)$.
\begin{equation*}
    \nabla_{f_d}^{(k)} \ell(\xx,y) = 
    \left. \frac{\partial \ell(y,z)}{\partial z} \right|_{z=f_d^{(k)}(\xx)} =
    \frac{-y}{1 + \exp(f_d^{(k)}(\xx) y)},
    \quad
    f_d^{(k)}(\xx) = 
    \sigma_a \sum_{r=1}^d \hat a^{(k)}_r \phi(\sigma_w \hat\ww_r^{(k),T} \xx).
\end{equation*}

Without loss of generality assume $\sigma_w = 1$ (we can rescale inputs $\xx$ otherwise).
We shall omit a subscript of $\sigma_a$ from now on.
Assume hyperparameters that drive the dynamics obey power-law dependence on $d$:
\begin{equation}
    \sigma(d) = \sigma^* (d/d^*)^{q_\sigma}, \quad
    \hat\eta_a(d) = \hat\eta_a^* (d/d^*)^{\tilde q_a}, \quad
    \hat\eta_w(d) = \hat\eta_w^* (d/d^*)^{\tilde q_w}.
    \label{eq:hyperparameter_scaling}
\end{equation}
Given this, a network of width $d^*$ has hyperparameters $\sigma^*$ and $\hat\eta_{a/w}^*$.
Here and then we write "$a/w$" meaning "$a$ or $w$".

This assumption is quite natural: for He initialization \cite{he2015init} commonly used in practice $\sigma \propto d^{-1/2}$, while we keep learning rates in the original parameterization constant while changing width by default: $\eta_{a/w} = \const$, which implies $\hat\eta_a \propto d$ and $\hat\eta_w \propto d^0$.
On the other hand, NTK scaling \cite{jacot2018ntk,lee2019wide} requires scaled learning rates to be constants: $\hat\eta_{a/w} \propto d^0$.

Scaling exponents $(q_\sigma, \tilde q_a, \tilde q_w)$ together with proportionality factors $(d^*, \sigma^*, \hat\eta_a^*, \hat\eta_w^*)$ define a limit model $f_\infty^{(k)}(\xx) = \lim_{d\to\infty} f_d^{(k)}(\xx)$.
We call a model "dynamically stable in the limit of large width" if it satisfies the following condition which we state formally in Appendix~\ref{sec:app_formalism}:
\begin{condition}[informal version of Condition~\ref{cond:well_def_evolution} in Appendix~\ref{sec:app_formalism}]
    \label{cond:well_def_evolution_informal}
    Let $\Delta f_d^{(k)}(\xx) = f_d^{(k+1)}(\xx) - f_d^{(k)}(\xx)$.
    \begin{equation*}
        \exists k_{balance} \in \mathbb{N}: \;
        \forall k \geq k_{balance} \quad
        \frac{\Delta f_d^{(k)}}{f_d^{k_{balance}}} \; \text{stays finite for large $d$}.
    \end{equation*}
\end{condition}
Roughly speaking, this condition states that the change of logits after a single step is comparable to logits themselves.
This means that the model learns.

Note that this condition is weaker than the one used in \cite{golikov2020towards}, because it allows logits to vanish or diverge with width.
Such situations are fine, because only logit signs matter for the binary classification.

For simplicity assume $\tilde q_a = \tilde q_w = \tilde q$.
We prove the following in Appendix~\ref{sec:app_prop_well_def_evolution_proof}:
\begin{prop}
    \label{prop:well_def_evolution}
    Suppose $\tilde q_a = \tilde q_w = \tilde q$ and $\DD$ is a continuous distribution.
    Then Condition \ref{cond:well_def_evolution_informal} requires $q_\sigma + \tilde q \in [-1/2,0]$ to hold.
\end{prop}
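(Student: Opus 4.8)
The plan is to track the order of magnitude in $d$ of the logits $f_d^{(k)}(\xx)$ and of the scaled parameters $\hat a_r^{(k)},\hat\ww_r^{(k)}$ along the SGD trajectory, reading each $\Theta(\cdot)$ as the typical size (in probability over the random initialization and the data samples) and using continuity of $\DD$ so that a generic $\xx$ and generic samples are drawn. Write $\delta:=q_\sigma+\tilde q$. Two preliminary facts drive the argument. First, at initialization $f_d^{(0)}(\xx)=\sigma\sum_{r=1}^d\hat a_r^{(0)}\phi(\hat\ww_r^{(0),T}\xx)$ is a sum of $d$ i.i.d.\ terms with mean $0$ (as $\hat a_r^{(0)}$ is centred and independent of $\hat\ww_r^{(0)}$) and variance $\Theta(1)$, so the central limit theorem gives $f_d^{(0)}(\xx)=\Theta(\sigma\sqrt d)=\Theta(d^{q_\sigma+1/2})$ for a.e.\ $\xx$. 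Second, \eqref{eq:1hid_sgd_a}--\eqref{eq:1hid_sgd_w} together with $|\nabla_{f_d}\ell|\le 1$ show that one SGD step changes a parameter by $|\Delta\hat a_r^{(k)}|\le\hat\eta_a\sigma\,|\phi(\cdots)|$ and $\|\Delta\hat\ww_r^{(k)}\|\le\hat\eta_w\sigma\,|\hat a_r^{(k)}|\,|\phi'(\cdots)|$, i.e.\ by $O(d^{\delta})$ times the current scale of that parameter whenever the logits do not saturate the loss derivative.

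Suppose first $\delta<-1/2$. The one-step increments are then $O(d^{\delta})=o(1)$, so an induction on $k$ shows that for every fixed $k$ all parameters stay $\Theta(1)$; hence the kernel-type sums $\sum_r\phi(\hat\ww_r^{(k),T}\xx)\phi(\hat\ww_r^{(k),T}\xx')$ and $\sum_r(\hat a_r^{(k)})^2\phi'(\cdots)\phi'(\cdots)$ are $\Theta(d)$, so $\Delta f_d^{(k)}(\xx)=O(\hat\eta_{a/w}\sigma^2 d)=O(d^{q_\sigma+1/2+(\delta+1/2)})$, while $f_d^{(k)}(\xx)=\Theta(d^{q_\sigma+1/2})$ for every fixed $k$ (each increment is of strictly lower order than $f_d^{(0)}$ and only finitely many accumulate before step $k$). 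Thus, for any candidate $k_{balance}$ and every $k\ge k_{balance}$, $\Delta f_d^{(k)}(\xx)/f_d^{(k_{balance})}(\xx)=O(d^{\delta+1/2})\to 0$ as $d\to\infty$: the logit increments become asymptotically negligible relative to the logits, the model fails to learn, and Condition~\ref{cond:well_def_evolution_informal} (in its precise form, Condition~\ref{cond:well_def_evolution}) is violated. If $q_\sigma+1/2>0$ the logits diverge and $\nabla_{f_d}\ell$ may saturate, but that only shrinks the increments further and does not change the conclusion.

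Suppose now $\delta>0$. The one-step increments then dominate the current parameter values, so, using asymptotic linearity of $\phi$ (whence $\phi(z)=\Theta(z)$ for large $|z|$ and $\phi'$ bounded), an induction on $k$ gives $|\hat a_r^{(k)}|,\,\|\hat\ww_r^{(k)}\|=\Theta(d^{k\delta})$: with $A_k,B_k$ the $\log_d$ of these scales, the update rule yields $A_{k+1}=\max(A_k,\delta+B_k)$, $B_{k+1}=\max(B_k,\delta+A_k)$, $A_0=B_0=0$, a recursion with no finite fixed point for $\delta>0$, which forces $A_k=B_k=k\delta\to\infty$. Hence the $d$-exponent of $f_d^{(k)}(\xx)=\sigma\sum_r\hat a_r^{(k)}\phi(\hat\ww_r^{(k),T}\xx)$ grows (at least) linearly in $k$: it is $\ge q_\sigma+1/2+2k\delta$ by a central-limit lower bound on the sum over hidden units. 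Since this exponent is unbounded in $k$ while $k_{balance}$ must be a fixed integer, telescoping $\Delta f_d^{(j)}=f_d^{(j+1)}-f_d^{(j)}$ over $j\in\{k_{balance},\dots,k-1\}$ for a large enough fixed $k$ exhibits a step $j\ge k_{balance}$ with $\Delta f_d^{(j)}(\xx)/f_d^{(k_{balance})}(\xx)\to\infty$; so no $k_{balance}$ makes the trajectory dynamically stable. Combining the two cases, Condition~\ref{cond:well_def_evolution_informal} can hold only if $q_\sigma+\tilde q\in[-1/2,0]$.

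The step I expect to be the main obstacle is the $\delta>0$ case, precisely because saturation of $\nabla_{f_d}\ell$ might halt the blow-up: once $|f_d^{(k)}|$ is a large power of $d$, the loss derivative at a sampled point is $\Theta(1)$ only if that point is misclassified (or has vanishing margin) and is exponentially small otherwise, so the recursion above is only justified at steps where a low-margin point is drawn. One has to argue that, with probability bounded away from $0$ over the SGD draws, the network does not separate $\DD$ with margin after $O(1)$ huge steps, so that low-margin points keep being sampled and the blow-up continues; in the complementary, degenerate event the dynamics freezes, $\Delta f_d^{(k)}\equiv 0$ while $f_d^{(k_{balance})}$ is a large power of $d$, and the ratio collapses to $0$ --- so Condition~\ref{cond:well_def_evolution_informal} fails either way. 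The rest is routine: concentration and (in the $\delta>0$ case) the bias estimates for the sums over hidden units, from asymptotic linearity of $\phi$ and continuity of $\DD$, plus translating the informal ``stays finite'' into the precise two-sided requirement of Condition~\ref{cond:well_def_evolution} in Appendix~\ref{sec:app_formalism}.
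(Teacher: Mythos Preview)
Your approach is essentially the same as the paper's: both track the $d$-exponents of the scaled parameters and of the logits along the SGD trajectory and show these exponents can balance in the sense of Condition~\ref{cond:well_def_evolution} only when $\delta:=q_\sigma+\tilde q\in[-1/2,0]$. The paper packages this into explicit exponent variables $q_{a/w}^{(k)},q_f^{(k)},q_{\Delta f'}^{(k)}$ and three technical lemmas (Lemmas~\ref{lemma:technical_app}--\ref{lemma:technical3_app}), but your recursion $A_{k+1}=\max(A_k,\delta+B_k)$, $B_{k+1}=\max(B_k,\delta+A_k)$ is exactly the content of those lemmas, and your $\delta<-1/2$ argument matches the paper's comparison of $q_{\Delta f'}^{(k)}=2q_\sigma+1+\tilde q$ against $q_f^{(k_{balance})}=q_\sigma+1/2$.

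There is one genuine slip in your $\delta>0$ case. You write that in the ``degenerate event'' where saturation freezes the dynamics, ``$\Delta f_d^{(k)}\equiv 0$ \dots and the ratio collapses to $0$ --- so Condition~\ref{cond:well_def_evolution_informal} fails either way.'' This is backwards: the formal condition (Condition~\ref{cond:well_def_evolution}) is an \emph{implication} whose antecedent is $y_{a/w}^{(k)}f_\infty^{(k)}(\xx_{a/w}^{(k)})<0$. If the current sample is correctly classified and the loss gradient has saturated, the antecedent is false and the implication holds vacuously --- that event does \emph{not} violate the condition. The paper resolves your obstacle not through a freezing dichotomy but by building the hypothesis $k_{term,\infty}=+\infty$ into Condition~\ref{cond:well_def_evolution}; this guarantees that at every step a misclassified sample is drawn with positive probability, and on that positive-probability event $q_{\nabla\ell}^{(k)}=0$, your exponent recursion is valid, and $q_{\Delta f'}^{(k)}=q_\sigma+1+(2k+1)\delta$ grows without bound in $k$, so it cannot equal the fixed exponent of $f_d^{(k_{balance})}$ for any choice of $k_{balance}$. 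That direct exponent comparison also makes your telescoping detour unnecessary: once you know $q_{\Delta f'}^{(k)}$ on the event $yf<0$, compare it directly to $q_f^{(k_{balance})}$ rather than summing increments.
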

This statement gives a necessary condition for growth rates of $\sigma$ and $\hat\eta$ to lead to a well-defined limit model evolution.
This condition corresponds to a band in $(q_\sigma,\tilde q)$-plane: see Figure \ref{fig:scaling_plane_and_losses_and_accs}, left.
We refer it as a "band of dynamical stability".

Each point of this band corresponds to a dynamically stable limit model evolution.
% But how many distinct limit model evolutions do we have here?
% It turns out that there are a finite number of them.
We present several conditions that separate the dynamical stability band into regions.
We then show that each region corresponds to a single limit model evolution.

We start with defining tangent kernels.
Since $\phi$ is smooth, we have:
\begin{multline}
    \Delta f_d^{(k)}(\xx) =
    f_d^{(k+1)}(\xx) - f_d^{(k)}(\xx) =
    \sum_{r=1}^d \left.\frac{\partial f_d(\xx)}{\partial \hat\theta_r}\right|_{\hat\theta_r=\hat\theta_r^{(k)}} \Delta\hat\theta_r^{(k)} + O_{\hat\eta_{a/w}^* \to 0} (\hat\eta_a^* \hat\eta_w^* + \hat\eta_w^{*,2}) =
    \\=
    % \sigma^2 \sum_{r=1}^d \Bigl(
    %     \hat\eta_a \nabla_f^{(k)} \ell(\xx_a^{(k)},y_a^{(k)}) \; \phi(\hat\ww_r^{(k),T} \xx) \phi(\hat\ww_r^{(k),T} \xx_a^{(k)}) +\\+
    %     \hat\eta_w \nabla_f^{(k)} \ell(\xx_w^{(k)},y_w^{(k)}) \; |\hat a_r^{(k)}|^2 \phi'(\hat\ww_r^{(k),T} \xx) \phi'(\hat\ww_r^{(k),T} \xx_w^{(k)}) \xx^T \xx_w^{(k)}
    % \Bigr) + O_{\hat\eta_{a/w}^* \to 0} (\hat\eta_a^* \hat\eta_w^* + \hat\eta_w^{*,2}) =
    % \\=
    -\hat\eta_a^* \nabla_{f_d}^{(k)} \ell(\xx_a^{(k)},y_a^{(k)}) \; K_{a,d}^{(k)}(\xx,\xx_a^{(k)}) -
    \hat\eta_w^* \nabla_{f_d}^{(k)} \ell(\xx_w^{(k)},y_w^{(k)}) \; K_{w,d}^{(k)}(\xx,\xx_w^{(k)}) + O(\hat\eta_a^* \hat\eta_w^* + \hat\eta_w^{*,2}),
\end{multline}
where we have defined kernels:
\begin{equation}
    K_{a,d}^{(k)}(\xx,\xx') =
    (d / d^*)^{\tilde q_a} \sigma^2 \sum_{r=1}^d \phi(\hat\ww_r^{(k),T} \xx) \phi(\hat\ww_r^{(k),T} \xx'),
    \label{eq:K_a}
\end{equation}
\begin{equation}
    K_{w,d}^{(k)}(\xx,\xx') =
    (d / d^*)^{\tilde q_w} \sigma^2 \sum_{r=1}^d |\hat a_r^{(k)}|^2 \phi'(\hat\ww_r^{(k),T} \xx) \phi'(\hat\ww_r^{(k),T} \xx') \xx^T \xx'.
    \label{eq:K_w}
\end{equation}
Here we deviate from the traditional definition of tangent kernels (e.g. from \cite{jacot2018ntk}) in embedding learning rate growth factors into kernels.
This is done for avoiding $0 \times \infty$ ambiguity when $\hat\eta_{a/w}$ grows width $d$ while $\sigma$ vanishes so that "a learning rate times a kernel" stays finite.
This is the case for the mean-field scaling: $\hat\eta_{a/w} \propto d$, while $\sigma \propto d^{-1}$.

While for the NTK scaling kernels stop evolving with $k$ in the limit of large $d$, this is not the case generally.
Indeed, for the mean-field scaling mentioned above we have:
\begin{equation}
    K_{a,d}^{(k)}(\xx,\xx') =
    \sigma^{*,2} (d / d^*)^{-1} \sum_{r=1}^d \phi(\hat\ww_r^{(k),T} \xx) \phi(\hat\ww_r^{(k),T} \xx').
\end{equation}
Similarly to the NTK case, the kernel above converges due to the Law of Large Numbers, however in contrast to the NTK case the weights evolve in the limit: $\hat\ww_r^{(k)} \nrightarrow \hat\ww_r^{(0)}$.
This is due to the fact that weight increments are proportional to $\hat\eta_w \sigma$ which is $ \propto d^0$ for the mean-field scaling but $ \propto d^{-1/2}$ for the NTK one.
For this reason, similarly to model increments $\Delta f_d^{(k)}$ we define kernel increments:
\begin{equation}
    \Delta K_{a/w,d}^{(k)}(\xx,\xx') =
    K_{a/w,d}^{(k+1)}(\xx,\xx') - K_{a/w,d}^{(k)}(\xx,\xx').
\end{equation}
\begin{condition}[informal version of Condition~\ref{cond:separating_conditions} in Appendix~\ref{sec:app_formalism}]
    \label{cond:separating_conditions_informal}
    Following conditions separate the band of dynamical stability (Figure \ref{fig:scaling_plane_and_losses_and_accs}, left):
    \begin{enumerate}
        \item $f_d^{(0)}$ stays finite for large $d$.
        \item $K_{a/w,d}^{(0)}$ stays finite for large $d$.
        \item $K_{a/w,d}^{(0)} / f_d^{(0)}$ stays finite for large $d$.
        \item $\Delta K_{a/w,d}^{(0)} / K_{a/w,d}^{(0)}$ stays finite for large $d$.
    \end{enumerate}
\end{condition}
We prove the following in Appendix \ref{sec:app_prop_sep_conds_proof}:
\begin{prop}[Separating conditions]
    \label{prop:separating_conditions}
    Given Condition \ref{cond:well_def_evolution_informal}, Condition \ref{cond:separating_conditions_informal} reads as, point by point:
    \begin{enumerate}
        \item A limit model at initialization is finite: $q_\sigma + 1/2 = 0$.
        \item Tangent kernels at initialization are finite: $2q_\sigma + \tilde q + 1 = 0$.
        \item Tangent kernels and a limit model are of the same order at initialization: $q_\sigma + \tilde q + 1/2 = 0$.
        \item Tangent kernels start to evolve: $q_\sigma + \tilde q = 0$.
    \end{enumerate}
\end{prop}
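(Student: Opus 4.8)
The plan is to read each of the four items of Condition~\ref{cond:separating_conditions_informal} as a statement about the typical (in-probability) order of magnitude of a concrete random quantity, to compute that order as an explicit power $d^{\,p(q_\sigma,\tilde q)}$ using the scalings \eqref{eq:hyperparameter_scaling}, and then to identify ``stays finite'' with the locus $p(q_\sigma,\tilde q)=0$ on which the quantity has a nondegenerate limit. Continuity of $\DD$ and real-analyticity together with asymptotic linearity of $\phi$ enter only to guarantee that the Gaussian moments appearing below exist and are nondegenerate, i.e.\ that no accidental cancellations occur.

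For item~1, $f_d^{(0)}(\xx)=\sigma\sum_{r=1}^d\hat a_r^{(0)}\phi(\hat\ww_r^{(0),T}\xx)$ is $\sigma$ times a sum of i.i.d.\ centered terms with finite variance (centered because $\EE\,\hat a_r^{(0)}=0$ and $\hat a_r^{(0)}\perp\hat\ww_r^{(0)}$), so the CLT gives $f_d^{(0)}(\xx)=\Theta(\sigma\sqrt d)=\Theta(d^{\,q_\sigma+1/2})$, which is finite iff $q_\sigma+1/2=0$. For item~2, the summands in \eqref{eq:K_a} and \eqref{eq:K_w} have \emph{nonzero} mean: on the diagonal $\xx=\xx'$ they are $\EE\,\phi(\hat\ww^{T}\xx)^2>0$ and $\EE\,|\hat a|^2\phi'(\hat\ww^{T}\xx)^2\|\xx\|^2>0$, so the LLN gives $\sum_{r=1}^d(\cdots)=\Theta(d)$ and hence $K_{a/w,d}^{(0)}=\Theta\big(d^{\,\tilde q}\sigma^2 d\big)=\Theta(d^{\,2q_\sigma+\tilde q+1})$, finite iff $2q_\sigma+\tilde q+1=0$. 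Item~3 is immediate from items~1--2: $K_{a/w,d}^{(0)}/f_d^{(0)}=\Theta\big(d^{\,(2q_\sigma+\tilde q+1)-(q_\sigma+1/2)}\big)=\Theta(d^{\,q_\sigma+\tilde q+1/2})$, finite iff $q_\sigma+\tilde q+1/2=0$.

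Item~4 is the substantive one. By \eqref{eq:1hid_sgd_w} (and \eqref{eq:1hid_sgd_a} for the $\hat a$-part of $K_{w,d}$), each per-neuron increment $\Delta\hat\ww_r^{(0)}$ or $\Delta\hat a_r^{(0)}$ has size $\Theta(\hat\eta_{a/w}\sigma)=\Theta(d^{\,q_\sigma+\tilde q})$, which is $O(1)$ on the band by Proposition~\ref{prop:well_def_evolution}. For fixed $\xx,\xx'$ write $\Delta K_{a,d}^{(0)}=(d/d^*)^{\tilde q}\sigma^2\sum_{r=1}^d\delta_r$ with $\delta_r=\phi(\hat\ww_r^{(1),T}\xx)\phi(\hat\ww_r^{(1),T}\xx')-\phi(\hat\ww_r^{(0),T}\xx)\phi(\hat\ww_r^{(0),T}\xx')$. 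If $q_\sigma+\tilde q<0$ the increments vanish, so a Taylor expansion of $\phi$ is valid; the linear term of $\delta_r$ carries the factor $\hat a_r^{(0)}$ and hence averages to zero over the neurons, so $\sum_r\delta_r$ is governed by its $\Theta\big(d(\hat\eta_w\sigma)^2\big)$-sized mean (from the quadratic term) and its $\Theta(\hat\eta_w\sigma\sqrt d)$-sized CLT fluctuation; since $q_\sigma+\tilde q\ge-1/2$ the mean dominates, giving $\Delta K_{a,d}^{(0)}/K_{a,d}^{(0)}=\Theta(d^{\,2(q_\sigma+\tilde q)})\to0$. If $q_\sigma+\tilde q=0$ the increments are $\Theta(1)$: the neurons' law moves by an $\Theta(1)$ amount, the $\hat a_r^{(0)}$-linear cancellation no longer applies, and $\delta_r$ is an i.i.d.\ $\Theta(1)$ variable whose mean — as a function of the $\Theta(1)$ perturbation size — is a non-constant real-analytic function (here non-affineness and analyticity of $\phi$ rule out an exact cancellation), hence nonzero, so $\sum_r\delta_r=\Theta(d)$ and $\Delta K_{a/w,d}^{(0)}/K_{a/w,d}^{(0)}=\Theta(1)$ (in particular it does not vanish). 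Thus this ratio stays finite and nonzero iff $q_\sigma+\tilde q=0$, and the same argument with \eqref{eq:1hid_sgd_a} handles $K_{w,d}$. En route one checks that the scalar prefactor $\nabla_{f_d}^{(0)}\ell$ enters only at order $\Theta(1)$ (it is bounded by $1$ and, on a drawn sample, bounded below with probability bounded away from $0$) and that its weak dependence on each $\hat a_r^{(0)}$, extracted by a leave-one-out/Stein step, contributes only lower-order terms.

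The step I expect to be the main obstacle is this bookkeeping inside item~4: one must verify that throughout the \emph{open} band the fluctuation, the quadratic mean, and the $\nabla_{f_d}^{(0)}\ell$-coupling mean of $\sum_r\delta_r$ jointly stay of order $o(d)$, and that on $q_\sigma+\tilde q=0$ the $\Theta(1)$ perturbations genuinely displace the kernel's LLN limit; the sub-case $q_\sigma>-1/2$, where the initial logit on the drawn sample diverges with a random sign so that $\nabla_{f_d}^{(0)}\ell$ is atypically small or close to $\pm1$, is the delicate point and is why the formal version in Appendix~\ref{sec:app_formalism} is phrased in terms of in-probability orders rather than deterministic ones.
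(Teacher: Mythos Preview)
Your items~1--3 coincide with the paper's proof line for line: the CLT gives $f_d^{(0)}=\Theta(d^{q_\sigma+1/2})$, the LLN on the positive-mean summands in \eqref{eq:K_a}--\eqref{eq:K_w} gives $K_{a/w,d}^{(0)}=\Theta(d^{2q_\sigma+\tilde q+1})$, and item~3 is the quotient. For item~4 you take a genuinely different route. The paper does not work with the full increment $\Delta K_{a/w,d}^{(0)}$ at all; it first passes to the formal Condition~\ref{cond:separating_conditions} in Appendix~\ref{sec:app_formalism}, which replaces the kernel increment by its \emph{linearization} in $\hat\eta_{a/w}^*$, namely the objects $\Delta K_{aw,d}^{(0),\prime}$, $\Delta K_{wa,d}^{(0),\prime}$, $\Delta K_{ww,d}^{(0),\prime}$ written out explicitly in eqs.~\eqref{eq:app_K_aw}--\eqref{eq:app_K_wa}. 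These carry one extra factor $(d/d^*)^{\tilde q}\sigma$ relative to $K_{a/w,d}^{(0)}$, and the paper's argument is then a one-sentence exponent count. You instead keep the full $\Delta K$, Taylor-expand each $\delta_r$ in the per-neuron displacement, note that the first-order piece is odd in $\hat a_r^{(0)}$ so that its sum is only CLT-sized, and let the LLN-sized second-order mean drive $\Delta K/K=\Theta(d^{2(q_\sigma+\tilde q)})$ inside the band, with a measure-displacement argument on the boundary $q_\sigma+\tilde q=0$. Your approach is more self-contained---you never need the $\Delta K^{(0),\prime}$ machinery---and makes the cancellation mechanism explicit (the same zero-mean structure is exploited by the paper only later, in its analysis of the default scaling in Appendix~\ref{sec:app_default_scaling}), at the price of exactly the bookkeeping you flag at the end: the leave-one-out control of the $\nabla_{f_d}^{(0)}\ell$ coupling and the diverging-logit regime $q_\sigma>-1/2$. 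The paper sidesteps both by having already reformulated the condition in terms of the linearized object.
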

We have also checked this Proposition numerically for limit models discussed below: see Figure \ref{fig:scaling_plane_and_losses_and_accs}, right.
Each condition corresponds to a straight line in the $(q_\sigma,\tilde q)$-plane: see Figure \ref{fig:scaling_plane_and_losses_and_accs}, left.
These four lines divide the well-definiteness band into 13 regions: three are two-dimensional, seven are one-dimensional, and three are zero-dimensional.
In Appendix \ref{sec:app_finite_limit_models} we show that each region corresponds to a single distinct limit model evolution; we also list corresponding evolution equations.
Note that a segment (a one-dimensional region) that corresponds to the Condition \ref{cond:separating_conditions_informal}-2 exactly coincides with a family of "intermediate scalings" introduced in \cite{golikov2020towards}.

\section{Capturing the behavior of finite-width nets}
\label{sec:approximating_finite_nets}

A possible use-case for a limit model is being a proxy for a given finite-width net, useful for theoretical considerations.
For example, a number of theoretical properties, including convergence to a global minimum and generalization, are already proven for nets near the NTK limit: see \cite{arora2019fine}.

Note that a typical finite-width model satisfies all four statements of Condition \ref{cond:separating_conditions_informal} (if we exclude the word "limit" from them).
% Indeed, a finite-width model is bounded to output finite logits; moreover, neural nets are typically initialized with He initialization \cite{he2015init} that guarantees finite $f_d^{(0)}$ even in the limit of large width $d$.
Indeed, neural nets are typically initialized with He initialization \cite{he2015init} that guarantees finite $f_d^{(0)}$ even for large width $d$.
Since learning rates of finite nets are finite, the tangent kernels are finite as well.
Nevertheless, a neural tangent kernel of a typical finite-width network evolves significantly: \cite{arora2019exact} have shown that freezing NTK of practical convolutional nets sufficiently reduces their generalization ability; \cite{woodworth2019kernel} also noticed that evolution of NTK is sufficient for good performance.

Consequently, if we want a limit model to capture the dynamics of a finite-width net, we have to satisfy all four statements of Condition \ref{cond:separating_conditions_informal}.
However, as one can see from Figure \ref{fig:scaling_plane_and_losses_and_accs}, we cannot satisfy all of them simultaneously.
We say that one limit model captures the behavior of a finite-width one better than the other, if all statements of Conditions \ref{cond:separating_conditions_informal} satisfied by the latter are satisfied by the former too.
If we say in this case that "the former dominates the latter" then one can easily notice that there are only three "non-dominated" limit models which we discuss in the upcoming section.
% \begin{enumerate}
%     \item NTK limit model: Conditions \ref{cond:separating_conditions}-1,2 and 3 are satisfied.
%     \item Mean-field limit model: Conditions \ref{cond:separating_conditions}-2 and 4 are satisfied.
%     \item "Sym-default" limit model: Conditions \ref{cond:separating_conditions}-1 and 4 are satisfied.
% \end{enumerate}
% We will discuss these three "non-dominated" limit models in the upcoming section.
After that, we introduce a model modification that allows for a limit satisfying all four statements.

\subsection{"Non-dominated" limit models: MF, NTK and "sym-default"}

% \paragraph*{NTK limit model.}
Obviously, the three "non-dominated" limit models are exactly three zero-dimensional regions (points) in Figure \ref{fig:scaling_plane_and_losses_and_accs}, left.
First suppose statements 1, 2 and 3 hold, hence tangent kernels are constant throughout training (see Figure \ref{fig:scaling_plane_and_losses_and_accs}, right).
A corresponding point $q_\sigma = -1/2$, $\tilde q = 0$ reads as $\sigma \propto d^{-1/2}$ and $\hat\eta = \const$, which is the case considered in the seminal paper on NTK \cite{jacot2018ntk}.
The limit dynamics is then given as (see App. \ref{sec:app_ntk_limit} and App. \ref{sec:app_finite_limit_models} for the general derivation):
\begin{equation*}
    f_{\text{ntk},\infty}^{(k+1)}(\xx) = 
    f_{\text{ntk},\infty}^{(k)}(\xx) - \hat\eta_a^* \nabla_{f_{\text{ntk}}}^{(k)} \ell(\xx_a^{(k)},y_a^{(k)}) \; K_{a,\infty}^{(0)}(\xx,\xx_a^{(k)}) -
    \hat\eta_w^* \nabla_{f_{\text{ntk}}}^{(k)} \ell(\xx_w^{(k)},y_w^{(k)}) \; K_{w,\infty}^{(0)}(\xx,\xx_w^{(k)}),
\end{equation*}
\begin{equation}
    f_{\text{ntk},\infty}^{(0)}(\xx) \sim \NN(0, \sigma^{*,2} \sigma^{(0),2}(\xx)),
    \label{eq:ntk_model_evolution}
\end{equation}
where $(\xx_{a/w}^{(k)},y_{a/w}^{(k)}) \sim \DD$ and limit tangent kernels $K_{a/w,\infty}^{(0)}$ and standard deviations at the initialization $\sigma^{(0)}(\xx)$ can be calculated along the same lines as in \cite{lee2019wide}.

% \paragraph*{MF limit model.}
Next, suppose statements 2 and 4 hold.
In this case $K_{\infty}^{(k)}$ does not coincide with $K_{\infty}^{(0)}$ (see Figure \ref{fig:scaling_plane_and_losses_and_accs}, right), hence the dynamics analogous to (\ref{eq:ntk_model_evolution}) is not closed.
However, the limit dynamics can be expressed as an evolution of a weight-space measure (see \cite{rotskoff2019mf,chizat2018mf} for a similar dynamics for the gradient flow, App. \ref{sec:app_mf_limit} and App. \ref{sec:app_finite_limit_models} for the general derivation):
\begin{equation}
    \mu_\infty^{(k+1)} =
    \mu_\infty^{(k)} + \Div(\mu_\infty^{(k)} \Delta\theta_{\text{mf}}^{(k)}),
    \quad
    \mu_\infty^{(0)} = \NN(0, I_{1+d_\xx}),
    \label{eq:mf_measure_evolution}
\end{equation}
\begin{equation}
    f_{\text{mf},\infty}^{(k)}(\xx) = \sigma^* \int \hat a \phi(\hat\ww^T \xx) \, \mu_\infty^{(k)}(d\hat a, d\hat\ww),
\end{equation}
where the vector field $\Delta\theta_{\text{mf}}^{(k)}$ is defined as follows:
\begin{equation}
    \Delta\theta_{\text{mf}}^{(k)}(\hat a, \hat\ww) =
    -[\nabla_{f_{\text{mf}}}^{(k)} \ell(\xx_a^{(k)},y_a^{(k)}) \phi(\hat\ww^T \xx_a^{(k)}), \nabla_{f_{\text{mf}}}^{(k)} \ell(\xx_w^{(k)},y_w^{(k)}) \hat a \phi'(\hat\ww^T \xx_w^{(k)}) \xx_w^{(k),T}]^T,
    \label{eq:mf_Delta_thetak}
\end{equation}
% where $(\xx_{a/w}^{(k)},y_{a/w}^{(k)}) \sim \DD$.
where we write "$[\mathbf{u}, \mathbf{v}]$" meaning a concatenation of two row vectors $\mathbf{u}$ and $\mathbf{v}$.
Here we have $q_\sigma = -1$, $\tilde q = 1$, hence $\sigma \propto d^{-1}$ and $\hat\eta \propto d$; this hyperparameter scaling were used in 
%above-mentioned studies on the mean-field limit of neural networks 
\cite{rotskoff2019mf,chizat2018mf}.
Note that since a measure at the initialization $\mu_\infty^{(0)}$ has a zero mean, a limit model vanishes at the initialization $f_{\text{mf},\infty}^{(0)} = 0$ (see Figure \ref{fig:scaling_plane_and_losses_and_accs}, right) thus violating statements 1 and 3 of Condition \ref{cond:separating_conditions_informal}.

Finally, consider a point for which statements 1 and 4 hold: $q_\sigma = -1/2$, $\tilde q = 1/2$.
This situation is very similar to what we call "default" scaling.
Consider He initialization \cite{he2015init}, typically used in practice: $\sigma_a \propto d^{-1/2}$ and $\sigma_w \propto d_\xx^{-1/2}$.
Assume learning rates (in original parameterization) are not modified with width: $\eta_a = \const$ and $\eta_w = \const$.
This implies $\hat\eta_a \propto d$ and $\hat\eta_w \propto 1$, or $\tilde q_a = 1$ and $\tilde q_w = 0$.
% As we show in Appendix \ref{sec:app_default_scaling}, this scaling still statisfies statements 1 and 4, while violating statements 2 and 3.
We refer the scaling $q_\sigma = -1/2$, $\tilde q_a = 1$ and $\tilde q_w = 0$ as "default", and the scaling $q_\sigma = -1/2$, $\tilde q = 1/2$ as "sym-default".
A limit model evolution for the sym-default scaling looks as follows (see App. \ref{sec:app_sym_def_limit} for an equivalent formulation and App. \ref{sec:app_finite_limit_models} for the general derivation):
\begin{equation}
    \mu_\infty^{(k+1)} =
    \mu_\infty^{(k)} + \Div(\mu_\infty^{(k)} \Delta\theta_{\text{sym-def}}^{(k)}),
    \quad
    \mu_\infty^{(0)} = \NN(0, I_{1+d_\xx}),
    \label{eq:sym_def_measure_evolution}
\end{equation}
\begin{equation}
    f_{\text{sym-def},\infty}^{(0)}(\xx) \sim \NN(0, \sigma^{*,2} \sigma^{(0),2}(\xx)),
    \quad
% \end{equation}
% \begin{equation}
    z_{\text{sym-def},\infty}^{(k)}(\xx) = \left[\int \hat a \phi(\hat\ww^T \xx) \, \mu_\infty^{(k)}(d\hat a, d\hat\ww) > 0\right],
\end{equation}
% where the vector field $\Delta\theta_{\text{sym-def}}^{(0)}$ is defined similarly to the mean-field case (\ref{eq:mf_Delta_thetak}):
% \begin{equation*}
%     \Delta\theta_{\text{sym-def}}^{(0)}(\hat a, \hat\ww) =
%     -[\nabla_{f_{\text{sym-def}}}^{(0)} \ell(\xx_a^{(0)},y_a^{(0)}) \phi(\hat\ww^T \xx_a^{(0)}), \nabla_{f_{\text{sym-def}}}^{(0)} \ell(\xx_w^{(0)},y_w^{(0)}) \hat a \phi'(\hat\ww^T \xx_w^{(0)}) \xx_w^{(0),T}]^T,
% \end{equation*}
% while for $k \geq 1$ it is defined using a different loss gradient:
% \begin{equation}
%     \Delta\theta_{\text{sym-def}}^{(k)}(\hat a, \hat\ww) =
%     -[\nabla_{z_{\text{sym-def}}}^{(k)} \ell(\xx_a^{(k)},y_a^{(k)}) \phi(\hat\ww^T \xx_a^{(k)}), \nabla_{z_{\text{sym-def}}}^{(k)} \ell(\xx_w^{(k)},y_w^{(k)}) \hat a \phi'(\hat\ww^T \xx_w^{(k)}) \xx_w^{(k),T}]^T,
%     \label{eq:sym_def_Delta_thetak}
% \end{equation}
where the vector field $\Delta\theta_{\text{sym-def}}^{(k)}$ is defined similarly to the MF case (\ref{eq:mf_Delta_thetak}):
\begin{equation*}
    \Delta\theta_{\text{sym-def}}^{(k)}(\hat a, \hat\ww) =
    -[\nabla_{f_{\text{sym-def}}}^{(k)} \ell(\xx_a^{(k)},y_a^{(k)}) \phi(\hat\ww^T \xx_a^{(k)}), \nabla_{f_{\text{sym-def}}}^{(k)} \ell(\xx_w^{(k)},y_w^{(k)}) \hat a \phi'(\hat\ww^T \xx_w^{(k)}) \xx_w^{(k),T}]^T,
    % \label{eq:sym_def_Delta_thetak}
\end{equation*}
\begin{equation}
    \nabla_{f_{\text{sym-def}}}^{(k)} \ell(\xx,y) = -y [y z_{\text{sym-def},\infty}^{(k)}(\xx) < 0] \quad \text{for $k \geq 1$}.
    \label{eq:sym_def_nabla_ell}
\end{equation}
As we show in Appendix \ref{sec:app_default_scaling}, the default scaling leads to an almost similar limit dynamics as the sym-default scaling.
The quantity $z_{\text{sym-def},\infty}^{(k)}$ should be perceived as a sign of $f_{\text{sym-def},\infty}^{(k)} = \sigma^* \lim_{d\to\infty} \left(d^{q_\sigma+1} \int \hat a \phi(\hat\ww^T \xx) \, \mu_d^{(k)}(d\hat a, d\hat\ww)\right)$.
The reason why we have to switch from logits to their signs is that the limit model diverges for $k \geq 1$: $\lim_{d\to\infty} f_d^{(k)}(\xx) = \infty$.
Nevertheless the gradient of the cross-entropy loss is well-defined even for infinite logits: it just degenerates into the gradient of a hinge-type loss: $\lim_{f \to +\infty \times z} \frac{\partial \ell(y,f)}{\partial f} = -y [y z < 0]$.
For this reason, we redefine the loss gradient for $k \geq 1$ in terms of logit signs: eq.~(\ref{eq:sym_def_nabla_ell}).
Note that besides of the fact that logits diverge in the limit of large width, the measure in the parameter space $\mu_d^{(k)}$ stays well-defined.

% Since $f_{\text{sym-def}}^{(k)}$ is not defined for $k \geq 1$ (it diverges due to $\sigma \propto d^{-1/2}$, while for MF case $\sigma \propto d^{-1}$), we have to redefine $\nabla_{f_{\text{sym-def}}}^{(k)} \ell$: $\nabla_{f_{\text{sym-def}}}^{(k)} \ell(\xx,y) = -y [y z_{\text{sym-def},\infty}^{(k)}(\xx) < 0]$ for $k \geq 1$.

\subsection{Initialization-corrected mean-field (IC-MF) limit}

Here we propose a dynamics that satisfy all four statements of Condition \ref{cond:separating_conditions_informal}.
We then show how to modify the network training for the finite width in order to ensure that in the limit of the infinite width its training dynamics converge to the proposed limit one.
Consider the following:
\begin{equation}
    \mu_\infty^{(k+1)} =
    \mu_\infty^{(k)} + \Div(\mu_\infty^{(k)} \Delta\theta_{\text{icmf}}^{(k)}),
    \qquad
    \mu_\infty^{(0)} = \NN(0, I_{1+d_\xx}),
    \label{eq:icmf_measure_evolution}
\end{equation}
\begin{equation}
    f_{\text{icmf},\infty}^{(k)}(\xx) = \sigma^* \int \hat a \phi(\hat\ww^T \xx) \, \mu_\infty^{(k)}(d\hat a, d\hat\ww) + f_{\text{ntk},\infty}^{(0)}(\xx),
\end{equation}
where $f_{\text{ntk},\infty}^{(0)}$ is defined similarly to above:
\begin{equation}
    f_{\text{ntk},\infty}^{(0)}(\xx) \sim \NN(0, \sigma^{*,2} \sigma^{(0),2}(\xx)),
\end{equation}
the vector field $\Delta\theta_{\text{icmf}}^{(k)}$ is defined analogously to the mean-field case:
\begin{equation}
    \Delta\theta_{\text{icmf}}^{(k)}(\hat a, \hat\ww) =
    -[\nabla_{f_{\text{icmf}}}^{(k)} \ell(\xx_a^{(k)},y_a^{(k)}) \phi(\hat\ww^T \xx_a^{(k)}), \nabla_{f_{\text{icmf}}}^{(k)} \ell(\xx_w^{(k)},y_w^{(k)}) \hat a \phi'(\hat\ww^T \xx_w^{(k)}) \xx_w^{(k),T}]^T,
    \label{eq:icmf_Delta_thetak}
\end{equation}
The only difference between this dynamics and the mean-field dynamics is a bias term $f_{\text{ntk},\infty}^{(0)}$ in the definition of logits.
This bias term does not depend on $k$ and stays finite for large $d$ in contrast to $f_{\text{mf},\infty}^{(0)}$ which vanishes for large $d$; it ensures Condition \ref{cond:separating_conditions_informal}-1 to hold.
As for Condition \ref{cond:separating_conditions_informal}-4, tangent kernels evolve with $k$ simply because the measure $\mu_\infty^{(k)}$ evolves with $k$ similarly to the mean-field case (see Figure \ref{fig:scaling_plane_and_losses_and_accs}, right).
Indeed,
\begin{equation}
    K_{w,\infty}^{(k)}(\xx',\xx) =
    \sigma^{*,2} d^* \int |\hat a^{(k)}|^2 \phi'(\hat\ww^{(k),T} \xx) \phi'(\hat\ww^{(k),T} \xx') \, \mu_\infty^{(k)}(d\hat a, d\hat\ww),
\end{equation}
and the limit of $K_{a,d}^{(k)}$ is written in a similar way.
Kernels at initialization $K_{a/w,\infty}^{(0)}$ are finite due to the Law of Large Numbers (Condition \ref{cond:separating_conditions_informal}-2); this, and the finiteness of $f_{\text{ntk}}^{(0)}$ ensures Condition \ref{cond:separating_conditions_informal}-3.

As we show in Appendix \ref{sec:app_icmf_limit_model} the dynamics (\ref{eq:icmf_measure_evolution}) is a limit for the GD dynamics of the following model with learning rates $\hat\eta_{a/w} = \hat\eta_{a/w}^* (d / d^*)^{1}$:
\begin{equation}
    f_{\text{icmf},d}(\xx; \hat\aa, \hat W) =
    %f_{d,mf}^{(k)}(\xx) + f_{d,ntk}^{(0)}(\xx) =
    \sigma^* (d / d^*)^{-1} \sum_{r=1}^d \hat a_r \phi(\hat\ww_r^T \xx) +
    \sigma^* ((d / d^*)^{-1/2} - (d / d^*)^{-1}) \sum_{r=1}^d \hat a_r^{(0)} \phi(\hat\ww_r^{(0),T} \xx).
    \label{eq:icmf_model}
\end{equation}
Note that $f_{\text{icmf},d^*}(\xx) = \sigma^* \sum_{r=1}^{d^*} \hat a_r \phi(\hat\ww_r^T \xx)$: we have not altered the model definition at $d = d^*$.
% The reason for using a factor $(d^{-1/2} - d^{-1})$ before the second term instead of $d^{-1/2}$ will be made clear in the next Section.
% Note that this does not alter the limit.

\subsection{Experiments}

Consider a network of width $d^*$ initialized with a standard deviation $\sigma^*$ and trained with learning rates $\hat\eta_{a/w}^*$.
We call this model a "reference".
Consider a family of models indexed by a width $d$ with hyperparameters specified by the power-law scaling~(\ref{eq:hyperparameter_scaling}).
% Consider a family of models indexed by a width $d$ initialized with a standard deviation $\sigma(d)$ and trained with learning rates $\hat\eta_{a/w}(d)$ with following properties: (1) $\sigma(d^*) = \sigma^*$, $\hat\eta_{a/w}(d^*) = \hat\eta_{a/w}^*$, (2) $\sigma(d) = \Theta_{d\to\infty}(d^{q_\sigma})$, $\hat\eta_{a/w}(d) = \Theta_{d\to\infty}(d^{\tilde q})$ for some pre-defined scaling exponents $(q_\sigma, \tilde q)$.
% The first property ensures that a model of width $d^*$ coincides with the reference model, while the second property ensures that a model converges to a limiting model defined by corresponding scaling parameters.
% Additionaly assume $\sigma(d) \propto d^{q_\sigma}$, $\hat\eta_{a/w}(d) \propto d^{\tilde q}$.
% This ensures that a model of the reference width at the initialization $f_{d^*}^{(0)}$ provides an unbiased estimate for a limit model at the initialization $f_\infty^{(0)}$, as well as kernels at the initialization $K_{a/w,d^*}^{(0)}$ provide unbiased estimates for limit kernels $K_{a/w,\infty}^{(0)}$.
% Given this, we slightly abuse the notation and consider $\sigma(d) = \sigma^* (d/d^*)^{q_\sigma}$ and $\hat\eta_{a/w}(d) = \hat\eta_{a/w}^* (d/d^*)^{\tilde q}$.
% Here we note that the model (\ref{eq:icmf_model}) does not alter the limit behavior as $d \to \infty$, at the same time ensuring that the model for $d = d^*$ coincides with the reference model.
We train a reference network of width $d^* = 128$ for the binary classification with a cross-entropy loss on the CIFAR2 dataset (a subset of first two classes of CIFAR10).
We track the divergence of a limit network from the reference one using the following quantity:
$\EE_{\xx\sim\DD_{test}} \probd{logits}{f_\infty^{(k)}(\xx)}{f_{d^*}^{(k)}(\xx)}$, where
\begin{equation}
    \probd{logits}{\xi}{\xi^*} =
    \kld{\NN(\EE\xi, \Var\xi)}{\NN(\EE\xi^*, \Var\xi^*)}.
\end{equation}

Results are shown in Figure \ref{fig:1hid_cifar2_sgd_divergence}.
The NTK limit tracks the reference network well only for the first 20 training steps; a similar observation has been already made by \cite{lee2019wide}.
At the same time, the mean-field limit starts with a high divergence (since the initial limit model is zero in this case), however, after the 80-th step, it becomes smaller than that of the NTK limit.
This can be the implication of non-stationary kernels.
As for the default case, divergence of logits results in a blow-up of the KL-divergence.

The best overall case is the proposed IC-MF limit, which retains the small KL-divergence related to the reference model throughout the training process.
Capturing the behavior of finite-width nets is also possible by introducing finite-width corrections for the NTK \cite{dyer2019asymptotics,huang2019dynamics}. 
However, this gives us an infinite sequence of equations, which is intractable. 
We have to cut this sequence; this gives us an approximate dynamics, which is still complicated. 
In contrast, our IC-MF limit is a simple modification of the MF limit, and at the same time, a good proxy for finite-width networks.

% We track the divergence of a limiting network from a reference one, which can be done in two ways.
% The first one is tracking divergence directly between logits: $\EE_{\xx\sim\DD_{test}} \probd{logits}{f_\infty^{(k)}(\xx)}{f_{d^*}^{(k)}(\xx)}$ for some divergence measure $\probd{}{\cdot}{\cdot}$.
% The second is tracking divergence between responses: $\EE_{\xx\sim\DD_{test}} \probd{class}{[f_\infty^{(k)}(\xx) > 0]}{[f_{d^*}^{(k)}(\xx) > 0]}$.

% We choose a KL-divergence for the first case.
% However, measuring KL-divergence between logits is hardly possible, since we do not have an access for the distribution of $f^{(k)}(\xx)$ as a random variable depending on initialization.
% For this reason, we fit a gaussian to its samples:
% \begin{equation}
%     \probd{logits}{\xi}{\xi^*} =
%     \kld{\NN(\EE\xi, \Var\xi)}{\NN(\EE\xi^*, \Var\xi^*)}.
% \end{equation}
% This case is depicted in Figure \ref{fig:1hid_cifar2_sgd_divergence}, left.

% As for the second case, we may have wanted to again measure a KL-divergence between Bernoulli variables with success probabilities $\EE [f^{(k)} > 0]$, where expectations are estimated with Monte-Carlo.
% However, since these estimates are typically exactly 0 or 1, the KL-divergence becomes infinite.
% For this reason we consider the following:
% \begin{equation}
%     \probd{class}{\xi}{\xi^*} =
%     |\EE [\xi > 0] - \EE [\xi^* > 0]|.
% \end{equation}
% This case is plotted in Figure \ref{fig:1hid_cifar2_sgd_divergence}, right.

\begin{figure}[t]
    \centering
    \includegraphics[width=0.5\linewidth]{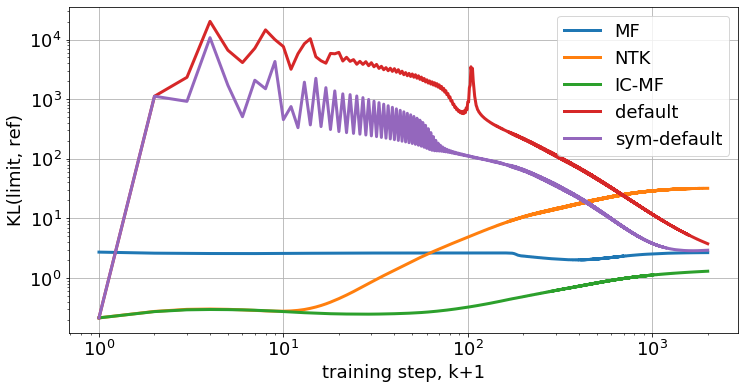}
    \caption{
        \textbf{Initialization-corrected mean-field (IC-MF) limit captures the behavior of a given finite-width network best among other limit models.}
        We plot a KL-divergence of logits of different infinite-width limits of a fixed finite-width reference model relative to logits of this reference model.
        \textit{Setup:} we train a one hidden layer network with SGD on CIFAR2 dataset; see Appendix \ref{sec:app_experiments} for details. 
        KL-divergences are estimated using gaussian fits with 10 samples.
        % See Appendix \ref{sec:app_experiments} for details.
    }
    \label{fig:1hid_cifar2_sgd_divergence}
\end{figure}

\section{Related work}
\label{sec:related}

% \paragraph*{Neural nets as kernel methods.}
A pioneering work of \cite{jacot2018ntk} have shown that a gradient descent training of a neural net can be viewed as a kernel gradient descent in the space of predictors.
The corresponding kernel is called a neural tangent kernel (NTK).
Generally, NTK is random and non-stationary, however \cite{jacot2018ntk} have shown that in the limit of infinite width it becomes constant given a network is parameterized appropriately.
% In our terms, this parameterization corresponds to $\sigma \propto d^{-1/2}$ and $\hat\eta = \const$.
In this case the evolution of the model is determined by this constant kernel; see eq. (\ref{eq:ntk_model_evolution}).
The training regime when NTK is hardly varying is coined as "lazy training", as opposed to the "rich" training regime, when NTK evolves significantly \cite{woodworth2019kernel}.
% \cite{chizat2019lazy} have noted that the training becomes lazy for a finite width, if one scales the output of the network appropriately.
While being theoretically appealing, "laziness" assumption turns out to have a number of limitations in explaining the success of deep learning \cite{arora2019exact,ghorbani2019limitations}.

% \paragraph*{Neural nets training as an evolution of a weight-space measure.}
Another line of works considers the evolution of weights as an evolution of a weight-space measure, similar to eq. (\ref{eq:mf_measure_evolution}) \cite{mei2018mf,mei2019mf,sirignano2018lln,chizat2018mf,rotskoff2019mf,yarotsky2018mf}.
% The evolution of weights can be casted as the evolution of measure.
This weight-space measure becomes deterministic in the limit of infinite width, given the network is parameterized appropriately; the corresponding limit dynamics is called "mean-field".
Note that the parameterization required here for the convergence to a limit dynamics differs from the one used in the NTK literature. %: $\sigma \propto d^{-1}$ and $\hat\eta \propto d$. 
% While being theoretically simpler compared to the evolution of the finite number of weights, 
% the mean-field formalism does not allow for a straightforward generalization to multi-layer nets (see Section 3.3 of \cite{sirignano2019deep_mf}, and also \cite{golikov2020towards}).
% the limit evolution of the deterministic measure (\ref{eq:mf_measure_evolution}) is harder to analyze than the model evolution driven by a constant kernel (\ref{eq:ntk_model_evolution}).
% Moreover, 
% generalizing the mean-field formalism to multi-layer nets appears to be challenging (see Section 3.3 of \cite{sirignano2019deep_mf}, and also \cite{golikov2020towards}).

% \paragraph*{Comparison to \cite{golikov2020towards}.}
Our framework for reasoning about scaling of hyperparameters is similar in spirit to the one used in \cite{golikov2020towards}.
However, there are several crucial differences.
First, we do not consider weight increments, as well as a model decomposition, and do not try to estimate exponents of the former and for terms of the latter, which arguebly complicates the work of \cite{golikov2020towards}.
Instead, we present derivations in terms of the limit behavior of logits and kernels which appears to be simpler and clearer.
% Instead, we consider the exponent for $\Delta f$, which can be computed in a straightforward way.
% Second, we do not assume constancy of $\nabla_{f_d} \ell$.
Second, our criterion of "dynamical stability" of scaling is weaker compared to the one of \cite{golikov2020towards} and more suitable for classification problems, since it allows for diverging or vanishing logits, as long as they give meaningful classification responses.
In particular, our dynamical stability condition covers practically important "default" limit for which learning rates are kept constant while width grow up to infinity.
Note that "intermediate limits" investigated in \cite{golikov2020towards} exactly correspond to limit models which satisfy Condition \ref{cond:separating_conditions_informal}-2.
Moreover, both "sym-default" and IC-MF limit models we propose in the present work have not been discussed previously; we present limit evolution equations for both of them (see Appendix~\ref{sec:app_finite_limit_models}).
Finally, our analysis suggests that there are only 13 distinct limit models that can be induced by power-law scaling of hyperparameters.

\section{Conclusions}
\label{sec:conclusions}

The current work follows a direction started in \cite{golikov2020towards}: we study how one should scale hyperparameters of a neural network with a single hidden layer in order to converge to a "dynamically stable" limit training dynamics.
A weaker dynamical stability condition leads us to a richer class of possible limit models as compared to \cite{golikov2020towards}.
In particular, the class of limit models we consider includes a "default" limit model that corresponds to a network with infinitely large number of nodes and finite learning rates in the original parameterization.
This "default" limit model does not satisfy a "well-definiteness" condition of \cite{golikov2020towards}.

Moreover, we show that the class of limit models that can be achieved by scaling hyperparameters of finite-width nets is finite.
The space of hyperparameter scalings is divided by regions with certain conditions on the training dynamics, and each region corresponds to a single limit model.
All of these conditions are satisfied by finite-width networks, but cannot be satisfied by limit models all simultaneously.
We propose a modification of a finite-width model; the limit of this modification corresponds to a limit model that satisfy all of the conditions mentioned above and tracks the dynamics of a "reference" finite-width net better than other limit models.

\section*{Acknowledgments}

This work  was  supported  by  National  Technology  Initiative  and  PAO  Sberbank  project  ID0000000007417F630002. 
We thank Mikhail Burtsev and Biswarup Das for valuable discussions and suggestions, as well as for help in improving the final version of the text.

\bibliography{main}
\bibliographystyle{unsrt}

\appendix

\section{Formal conditions for Section~\ref{sec:main}}
\label{sec:app_formalism}

Here we present formal definitions for notions that appear in Section~\ref{sec:main}; they are required for mathematical rigor.
First, recall the definition of tangent kernels:
\begin{equation}
    K_{a,d}^{(k)}(\xx,\xx') =
    (d / d^*)^{\tilde q_a} \sigma^2 \sum_{r=1}^d \phi(\hat\ww_r^{(k),T} \xx) \phi(\hat\ww_r^{(k),T} \xx'),
    \label{eq:app_K_a}
\end{equation}
\begin{equation}
    K_{w,d}^{(k)}(\xx,\xx') =
    (d / d^*)^{\tilde q_w} \sigma^2 \sum_{r=1}^d |\hat a_r^{(k)}|^2 \phi'(\hat\ww_r^{(k),T} \xx) \phi'(\hat\ww_r^{(k),T} \xx') \xx^T \xx'.
    \label{eq:app_K_w}
\end{equation}
The kernels are used to express a model increment:
\begin{multline}
    \Delta f_d^{(k)}(\xx) =
    f_d^{(k+1)}(\xx) - f_d^{(k)}(\xx) =
    \sum_{r=1}^d \left.\frac{\partial f_d(\xx)}{\partial \hat\theta_r}\right|_{\hat\theta_r=\hat\theta_r^{(k)}} \Delta\hat\theta_r^{(k)} + O_{\hat\eta_{a/w}^* \to 0} (\hat\eta_a^* \hat\eta_w^* + \hat\eta_w^{*,2}) =
    \\=
    % \sigma^2 \sum_{r=1}^d \Bigl(
    %     \hat\eta_a \nabla_f^{(k)} \ell(\xx_a^{(k)},y_a^{(k)}) \; \phi(\hat\ww_r^{(k),T} \xx) \phi(\hat\ww_r^{(k),T} \xx_a^{(k)}) +\\+
    %     \hat\eta_w \nabla_f^{(k)} \ell(\xx_w^{(k)},y_w^{(k)}) \; |\hat a_r^{(k)}|^2 \phi'(\hat\ww_r^{(k),T} \xx) \phi'(\hat\ww_r^{(k),T} \xx_w^{(k)}) \xx^T \xx_w^{(k)}
    % \Bigr) + O_{\hat\eta_{a/w}^* \to 0} (\hat\eta_a^* \hat\eta_w^* + \hat\eta_w^{*,2}) =
    % \\=
    -\hat\eta_a^* \nabla_{f_d}^{(k)} \ell(\xx_a^{(k)},y_a^{(k)}) \; K_{a,d}^{(k)}(\xx,\xx_a^{(k)}) -
    \hat\eta_w^* \nabla_{f_d}^{(k)} \ell(\xx_w^{(k)},y_w^{(k)}) \; K_{w,d}^{(k)}(\xx,\xx_w^{(k)}) + O(\hat\eta_a^* \hat\eta_w^* + \hat\eta_w^{*,2}),
\end{multline}
Define the linear part of the model increment with respect to learning rate proportionality factors:
\begin{equation}
    \Delta f_{d,a/w}^{(k),\prime}(\xx) =
    \left.\frac{\partial \Delta f_d^{(k)}(\xx)}{\partial \hat\eta_{a/w}^*}\right|_{\substack{\hat\eta_a^* = 0 \\ \hat\eta_w^* = 0}} =
    -\nabla_{f_d}^{(k)} \ell(\xx_{a/w}^{(k)},y_{a/w}^{(k)}) \; K_{a/w,d}^{(k)}(\xx,\xx_{a/w}^{(k)}).
\end{equation}
We use this quantity to rewrite the model increment:
\begin{equation}
    \Delta f_d^{(k)}(\xx) =
    \hat\eta_a^* \Delta f_{d,a}^{(k),\prime}(\xx) +
    \hat\eta_w^* \Delta f_{d,w}^{(k),\prime}(\xx) + O(\hat\eta_a^* \hat\eta_w^* + \hat\eta_w^{*,2}).    
\end{equation}

Let us consider kernel definitions (\ref{eq:app_K_a}) and (\ref{eq:app_K_w}) again.
Their increments are given by:
\begin{multline}
    \Delta K_{a,d}^{(k)}(\xx,\xx') =
    -\hat\eta_w^* (d / d^*)^{2 \tilde q} \sigma^3 \sum_{r=1}^d \left(
        \phi(\hat\ww_r^{(k),T} \xx) \phi'(\hat\ww_r^{(k),T} \xx') + \phi'(\hat\ww_r^{(k),T} \xx) \phi(\hat\ww_r^{(k),T} \xx')
    \right) \times\\\times \nabla_{f_d}^{(k)} \ell(\xx_w^{(k)}, y_w^{(k)}) \hat a_r^{(k)} \phi'(\hat\ww_r^{(k),T} \xx_w^{(k)}) (\xx + \xx')^T \xx_w^{(k)} + O_{\substack{\hat\eta_w^* \to 0\\d\to\infty}}(\hat\eta_w^{*,2} d^{3\tilde q + 4q_\sigma + 1}),
\end{multline}
\begin{multline}
    \Delta K_{w,d}^{(k)}(\xx,\xx') =
    -\hat\eta_w^* (d / d^*)^{2 \tilde q} \sigma^3 \sum_{r=1}^d |\hat a_r^{(k)}|^2 \Bigl(
        \phi'(\hat\ww_r^{(k),T} \xx) \phi''(\hat\ww_r^{(k),T} \xx') + \phi''(\hat\ww_r^{(k),T} \xx) \phi'(\hat\ww_r^{(k),T} \xx')
    \Bigr) \xx^T \xx' \times\\\times \nabla_{f_d}^{(k)} \ell(\xx_w^{(k)}, y_w^{(k)}) \hat a_r^{(k)} \phi'(\hat\ww_r^{(k),T} \xx_w^{(k)}) (\xx + \xx')^T \xx_w^{(k)} + O_{\substack{\hat\eta_w^* \to 0\\d\to\infty}}(\hat\eta_w^{*,2} d^{3\tilde q + 4q_\sigma + 1})
    -\\-
    \hat\eta_a^* (d / d^*)^{2 \tilde q} \sigma^3 \sum_{r=1}^d 2\hat a_r^{(k)} \phi'(\hat\ww_r^{(k),T} \xx) \phi'(\hat\ww_r^{(k),T} \xx') \times\\\times \nabla_f^{(k)} \ell(\xx_a^{(k)}, y_a^{(k)}) \phi(\hat\ww_r^{(k),T} \xx_a^{(k)}) + O_{\substack{\hat\eta_a^* \to 0\\d\to\infty}}(\hat\eta_a^{*,2} d^{3\tilde q + 4q_\sigma + 1}).
\end{multline}
Similarly to what was done for model increments, we define linear parts of the kernel increments with respect to learning rate proportionality factors:
\begin{multline}
    \Delta K_{aw,d}^{(k),\prime}(\xx,\xx') =
    \left.\frac{\partial \Delta K_{a,d}^{(k)}(\xx,\xx')}{\partial \hat\eta_w^*}\right|_{\hat\eta_w^*=0} =
    \\=
    -(d / d^*)^{2 \tilde q} \sigma^3 \sum_{r=1}^d \left(
        \phi(\hat\ww_r^{(k),T} \xx) \phi'(\hat\ww_r^{(k),T} \xx') + \phi'(\hat\ww_r^{(k),T} \xx) \phi(\hat\ww_r^{(k),T} \xx')
    \right) \times\\\times \nabla_{f_d}^{(k)} \ell(\xx_w^{(k)}, y_w^{(k)}) \hat a_r^{(k)} \phi'(\hat\ww_r^{(k),T} \xx_w^{(k)}) (\xx + \xx')^T \xx_w^{(k)},
    \label{eq:app_K_aw}
\end{multline}
\begin{multline}
    \Delta K_{ww,d}^{(k),\prime}(\xx,\xx') =
    \left.\frac{\partial \Delta K_{w,d}^{(k)}(\xx,\xx')}{\partial \hat\eta_w^*}\right|_{\hat\eta_w^*=0} =
    \\=
    -(d / d^*)^{2 \tilde q} \sigma^3 \sum_{r=1}^d |\hat a_r^{(k)}|^2 \Bigl(
        \phi'(\hat\ww_r^{(k),T} \xx) \phi''(\hat\ww_r^{(k),T} \xx') + \phi''(\hat\ww_r^{(k),T} \xx) \phi'(\hat\ww_r^{(k),T} \xx')
    \Bigr) \xx^T \xx' \times\\\times \nabla_{f_d}^{(k)} \ell(\xx_w^{(k)}, y_w^{(k)}) \hat a_r^{(k)} \phi'(\hat\ww_r^{(k),T} \xx_w^{(k)}) (\xx + \xx')^T \xx_w^{(k)},
    \label{eq:app_K_ww}
\end{multline}
\begin{multline}
    \Delta K_{wa,d}^{(k),\prime}(\xx,\xx') =
    \left.\frac{\partial \Delta K_{w,d}^{(k)}(\xx,\xx')}{\partial \hat\eta_a^*}\right|_{\hat\eta_a^*=0} =
    \\=
    -(d / d^*)^{2 \tilde q} \sigma^3 \sum_{r=1}^d 2\hat a_r^{(k)} \phi'(\hat\ww_r^{(k),T} \xx) \phi'(\hat\ww_r^{(k),T} \xx') \nabla_{f_d}^{(k)} \ell(\xx_a^{(k)}, y_a^{(k)}) \phi(\hat\ww_r^{(k),T} \xx_a^{(k)}).
    \label{eq:app_K_wa}
\end{multline}
Note that $\Delta K_{aa,d}^{(k),\prime}(\xx,\xx') = 0$ since $\hat a_r$-terms are absent in the definition of $K_{a,d}$, eq.~(\ref{eq:app_K_a}).

Define $p_{err,d}^{(k)} = \PP_{(y,\xx, y_a^{(:k-1)},\xx_a^{(:k-1)}, y_w^{(:k-1)},\xx_w^{(:k-1)})  \sim \DD^{2k-1}}\{y f_d^{(k)}(\xx) < 0\}$ --- the probability of giving a wrong answer on the step $k$.
Let $k_{term,d} \in \mathbb{N} \cup \{+\infty\}$ be a maximal $k$ such that $\forall k' < k$ $p_{err,d}^{(k')} > 0$.
% Note that $k_{term} \geq 1$.
Generally, $k_{term,d}$ depends on hyperparameters, as well as on the data distribution $\DD$.

Scaling exponents $(q_\sigma, \tilde q_a, \tilde q_w)$ together with proportionality factors $(d^*, \sigma^*, \hat\eta_a^*, \hat\eta_w^*)$ define a limit model $f_\infty^{(k)}(\xx) = \lim_{d\to\infty} f_d^{(k)}(\xx)$.
We call a model "dynamically stable in the limit of large width" if it satisfies the following condition:
\begin{condition}
    $\exists k_{balance} \in \mathbb{N}:$ $\forall k \in [k_{balance}, k_{term,\infty}) \cap \mathbb{N}$ $y_a^{(k)} f_\infty^{(k)}(\xx_a^{(k)}) < 0$ and $y_w^{(k)} f_\infty^{(k)}(\xx_w^{(k)}) < 0$ imply $\Delta f_{d,a/w}^{(k),\prime}(\xx) = \Theta_{d\to\infty}(f_d^{(k_{balance})}(\xx))$ $\xx$-a.e. $(y_{a/w}^{(:k)}, \xx_{a/w}^{(:k)})$-a.s.
\end{condition}

This condition puts a constraint on exponents $(q_\sigma, \tilde q_a, \tilde q_w)$; this constraint generally depends on the train data distribution $\DD$ and on proportionality factors $d^*$, $\sigma^*$, and $\hat\eta_{a/w}^*$.
In order to obtain a data-independent hyperparameter-independent constraint, we need the condition above to hold for any value of $k_{term,\infty}$ and any values of $d^*$, $\sigma^*$, and $\hat\eta_{a/w}^*$.
Without loss of generality we can assume $k_{term,\infty}$ to be infinite, which gives the following condition:
\begin{condition}[a formal version of Condition~\ref{cond:well_def_evolution_informal}]
    \label{cond:well_def_evolution}
    Given $k_{term,\infty} = +\infty$,
    $\exists k_{balance} \in \mathbb{N}:$ $\forall \sigma^* > 0$ $\forall \hat\eta_{a/w}^* > 0$ $\forall k \geq k_{balance}$ $y_a^{(k)} f_\infty^{(k)}(\xx_a^{(k)}) < 0$ and $y_w^{(k)} f_\infty^{(k)}(\xx_w^{(k)}) < 0$ imply $\Delta f_{d,a/w}^{(k),\prime}(\xx) = \Theta_{d\to\infty}(f_d^{(k_{balance})}(\xx))$ $\xx$-a.e. $(y_{a/w}^{(:k)}, \xx_{a/w}^{(:k)})$-a.s.
\end{condition}

\begin{condition}[a formal version of Condition~\ref{cond:separating_conditions_informal}]
    \label{cond:separating_conditions}
    Following conditions separate the band of dynamical stability (Figure \ref{fig:scaling_plane_and_losses_and_accs}, left):
    \begin{enumerate}
        \item A limit model at initialization is finite: $f_d^{(0)}(\xx) = \Theta_{d\to\infty}(1)$ $\xx$-a.e.
        \item Tangent kernels at initialization are finite: $K_{d,a/w}^{(0)}(\xx,\xx') = \Theta_{d\to\infty}(1)$ $(\xx,\xx')$-a.e.
        \item Tangent kernels and a limit model are of the same order at initialization: $K_{d,a/w}^{(0)}(\xx,\xx') = \Theta_{d\to\infty}(f_d^{(0)}(\xx))$ $(\xx,\xx')$-a.e.
        \item Tangent kernels start to evolve: $\Delta K_{d,wa/w}^{(0),\prime}(\xx,\xx') = \Theta_{d\to\infty}(K_{d,w}^{(0)}(\xx,\xx'))$ $(\xx,\xx')$-a.e. and $\Delta K_{d,aw}^{(0),\prime}(\xx,\xx') = \Theta_{d\to\infty}(K_{d,a}^{(0)}(\xx,\xx'))$ $(\xx,\xx')$-a.e.
    \end{enumerate}
\end{condition}

\section{Proofs of propositions}
\label{sec:app_prop_proofs}

We restate all necessary definitions here.
We assume the non-linearity $\phi$ to be real analytic and asymptotically linear: $\phi(z) = \Theta_{z\to\infty}(z)$.
We assume the loss function $\ell(y,z)$ to be the standard binary cross-entropy loss: $\ell(y,z) = \ln(1 + e^{-yz})$, where labels $y \in \{-1,1\}$.

The training dynamics is given as:
\begin{equation}
    \Delta \hat a_r^{(k)} =
    % \hat a_r^{(k+1)} - \hat a_r^{(k)} =
    -\hat\eta_a \sigma \nabla_{f_d}^{(k)} \ell(\xx_a^{(k)},y_a^{(k)}) \; \phi(\hat\ww_r^{(k),T} \xx_a^{(k)}),
    \quad
    \hat a_r^{(0)} \sim \NN(0, 1), 
    \label{eq:app_1hid_sgd_a}
\end{equation}
\begin{equation}
    \Delta \hat\ww_r^{(k)} =
    % \hat\ww_r^{(k+1)} - \hat\ww_r^{(k)} =
    -\hat\eta_w \sigma \nabla_{f_d}^{(k)} \ell(\xx_w^{(k)},y_w^{(k)}) \; \hat a_r^{(k)} \phi'(\hat\ww_r^{(k),T} \xx_w^{(k)}) \xx_w^{(k)},
    \quad 
    \hat\ww_r^{(0)} \sim \NN(0, I)
    \quad
    \forall r \in [d],
    \label{eq:app_1hid_sgd_w}
\end{equation}
\begin{equation*}
    \nabla_{f_d}^{(k)} \ell(\xx,y) = 
    \left. \frac{\partial \ell(y,z)}{\partial z} \right|_{z=f_d^{(k)}(\xx)} =
    \frac{-y}{1 + \exp(f_d^{(k)}(\xx) y)},
    \quad
    f_d^{(k)}(\xx) = 
    \sigma \sum_{r=1}^d \hat a^{(k)}_r \phi(\hat\ww_r^{(k),T} \xx),
\end{equation*}
where $(\xx_{a/w}^{(k)},y_{a/w}^{(k)}) \sim \DD$ for $\DD$ being the data distribution.

We assume hyperparameters to be scaled with width as power-laws:
\begin{equation*}
    \sigma(d) = \sigma^* (d / d^*)^{q_\sigma}, \quad
    \hat\eta_a(d) = \hat\eta_a^* (d / d^*)^{\tilde q_a}, \quad
    \hat\eta_w(d) = \hat\eta_w^* (d / d^*)^{\tilde q_w}.
\end{equation*}

\subsection{Proof of Proposition \ref{prop:well_def_evolution}}
\label{sec:app_prop_well_def_evolution_proof}

Define:
\begin{equation}
    q_{\theta}^{(k)} =
    \inf\{ q: \; \theta^{(k)} = O_{d\to\infty}(d^q) \},
    \quad
    q_{\Delta\theta}^{(k)} =
    \inf\{ q: \; \Delta\theta^{(k)} = O_{d\to\infty}(d^q) \},
\end{equation}
where $\theta$ should be substituted with $a$ or $\ww$.
We define $\inf(\emptyset) = +\infty$.
We introduce similar definitions for other quantities:
\begin{equation}
    q_{f}^{(k)}(\xx) =
    \inf\{ q: \; f_d^{(k)}(\xx) = O_{d\to\infty}(d^q) \},
    \quad
    q_{\nabla \ell}^{(k)}(\xx,y) =
    \inf\{ q: \; \nabla_{f_d}^{(k)} \ell(\xx,y) = O_{d\to\infty}(d^q) \},
\end{equation}
\begin{equation}
    q_{\Delta f}^{(k)}(\xx) =
    \inf\{ q: \; \Delta f_{d}^{(k)}(\xx) = O_{d\to\infty}(d^q) \}, 
    \quad
    q_{\Delta f'_{a/w}}^{(k)}(\xx) =
    \inf\{ q: \; \Delta f_{d,a/w}^{(k),\prime}(\xx) = O_{d\to\infty}(d^q) \}.
\end{equation}

\begin{lemma}
    \label{lemma:technical_app}
    Assume $\DD$ is a continuous distribution.
    Then following hold:
    \begin{enumerate}
        \item $\forall k \geq 0$ $\forall \xx,y$ $q_{\nabla\ell}^{(k)}(\xx,y) \leq 0$, while $[y f_\infty^{(k)}(\xx) < 0]$ implies $q_{\nabla\ell}^{(k)}(\xx,y) = 0$.
        \item $q_{a/w}^{(0)} = 0$, $q_f^{(0)}(\xx) = q_\sigma + \frac12$ $\xx$-a.e.%, $\PP_{\xx,y \sim \DD} \{q_{\nabla\ell}^{(0)}(\xx,y) = 0\} \geq 1/2$.
        \item $\forall k \geq 0$ $q_{\Delta a/\Delta w}^{(k)} = \tilde q_{a/w} + q_\sigma + q_{w/a}^{(k)} + q_{\nabla\ell}^{(k)}(\xx^{(k)},y^{(k)})$ $(\xx_{a/w}^{(k)},y_{a/w}^{(k)})$-a.s.
        \item $\forall k \geq 0$ $q_{\Delta f'_{a/w}}^{(k)}(\xx) = 2 q_\sigma + 1 + \tilde q_{a/w} + 2 q_{w/a}^{(k)} + q_{\nabla\ell}^{(k)}(\xx_{a/w}^{(k)},y_{a/w}^{(k)})$ $\xx$-a.e. $(\xx_{a/w}^{(k)},y_{a/w}^{(k)})$-a.s.
        \item $\forall k \geq 0$ $q_\sigma + \tilde q_w + q_a^{(k)} \leq 0$ implies that for sufficiently small $\hat\eta_a^*$ and $\hat\eta_w^*$ $q_{\Delta f}^{(k)}(\xx) = \max(q_{\Delta f'_a}^{(k)}(\xx), q_{\Delta f'_w}^{(k)}(\xx))$ $\xx$-a.e. $(\xx_a^{(k)},y_a^{(k)}, \xx_w^{(k)},y_w^{(k)})$-a.s.
        \item $\forall k \geq 0$ $q_{a/w}^{(k+1)} = \max(q_{a/w}^{(k)}, q_{\Delta a/\Delta w}^{(k)})$ $(\xx^{(k)},y^{(k)})$-a.s., $q_f^{(k+1)}(\xx) = \max(q_f^{(k)}(\xx), q_{\Delta f}^{(k)}(\xx))$ $\xx$-a.e. $(\xx_a^{(k)},y_a^{(k)}, \xx_w^{(k)},y_w^{(k)})$-a.s.
    \end{enumerate}
\end{lemma}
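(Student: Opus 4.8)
The plan is to prove all six statements jointly by induction on the step index $k$: statement~1 holds at every level as a standalone fact about the cross-entropy gradient, statement~2 is the base case $k=0$, and statements~3, 4, 6 (the last at level $k{+}1$) and~5 are established inside the inductive step from the lower-level data. Along the way I would maintain the extra invariant $q_{a/w}^{(k)}\ge 0$, which holds at $k=0$ and is preserved by the recursion of statement~6 since that recursion only takes maxima with the previous value. For statement~1 I would argue directly from $\nabla_{f_d}^{(k)}\ell(\xx,y)=-y/(1+\exp(yf_d^{(k)}(\xx)))$: its modulus never exceeds $1$, giving $q_{\nabla\ell}^{(k)}(\xx,y)\le 0$ unconditionally, and when $yf_\infty^{(k)}(\xx)<0$ (the value $-\infty$ allowed) the quantity $\exp(yf_d^{(k)}(\xx))$ stays bounded, so the modulus is bounded below by a positive constant for large $d$ and $q_{\nabla\ell}^{(k)}(\xx,y)=0$. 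For statement~2, the initialized quantities $\hat a_r^{(0)}\sim\NN(0,1)$, $\hat\ww_r^{(0)}\sim\NN(0,I)$ are $d$-independent, so $q_{a/w}^{(0)}=0$, while $f_d^{(0)}(\xx)=\sigma\sum_{r=1}^d\hat a_r^{(0)}\phi(\hat\ww_r^{(0),T}\xx)$ is $\sigma$ times a sum of $d$ i.i.d.\ zero-mean summands with common variance positive for a.e.\ $\xx$ (here $\DD$ continuous and $\phi$ non-constant are used), hence of order $\sigma\sqrt d=\Theta_{d\to\infty}(d^{q_\sigma+1/2})$ by the central limit theorem.

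For statements~3 and~4 at level $k$ I would substitute the inductive exponents into the update rules~(\ref{eq:app_1hid_sgd_a})--(\ref{eq:app_1hid_sgd_w}) and into the kernel definitions~(\ref{eq:app_K_a})--(\ref{eq:app_K_w}). Asymptotic linearity $\phi(z)=\Theta_{z\to\infty}(z)$ with continuity gives $\phi(\hat\ww_r^{(k),T}\xx)=\Theta_{d\to\infty}(d^{\max(q_w^{(k)},0)})=\Theta_{d\to\infty}(d^{q_w^{(k)}})$, and $\phi'$ bounded (and nonzero for a.e.\ argument, by analyticity) gives $\phi'(\hat\ww_r^{(k),T}\xx)=O_{d\to\infty}(1)$; plugging these and the bound from statement~1 into $\Delta\hat a_r^{(k)}$, $\Delta\hat\ww_r^{(k)}$, $K_{a,d}^{(k)}$, $K_{w,d}^{(k)}$, $\Delta f_{d,a/w}^{(k),\prime}$ yields the stated exponents, provided the sums over $r$ do not cancel down to a lower order. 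This non-cancellation is the main obstacle. When $q_w^{(k)}=0$ it follows from the law of large numbers: $\tfrac1d\sum_r\phi(\hat\ww_r^{(k),T}\xx)\phi(\hat\ww_r^{(k),T}\xx')\to\EE[\phi(\hat\ww^T\xx)\phi(\hat\ww^T\xx')]$ (and similarly for the $\phi'$ kernel), and this limit is the restriction of a real-analytic function of $(\xx,\xx')$, hence nonzero a.e.\ --- analyticity of $\phi$ is used exactly to confine the bad set to measure zero. When $q_w^{(k)}>0$ I would exploit the structure of the trajectory: each increment $\Delta\hat\ww_r^{(k')}=-\hat\eta_w\sigma\nabla_{f_d}^{(k')}\ell\,\hat a_r^{(k')}\phi'(\cdot)\xx_w^{(k')}$ is a per-neuron scalar times the common sampled vector $\xx_w^{(k')}$, so the divergent part of $\hat\ww_r^{(k)}$ is a scalar times a shared vector, whence $\phi(\hat\ww_r^{(k),T}\xx)\phi(\hat\ww_r^{(k),T}\xx')$ is a nonnegative per-neuron factor times the fixed sign of $((\text{shared vector})^T\xx)((\text{shared vector})^T\xx')$, so the summands share a sign for a.e.\ $(\xx,\xx')$ and cannot cancel. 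The same freshness argument gives statement~6: $\hat a_r^{(k+1)}=\hat a_r^{(k)}+\Delta\hat a_r^{(k)}$ and $f_d^{(k+1)}(\xx)=f_d^{(k)}(\xx)+\Delta f_d^{(k)}(\xx)$, and since the increments depend on a freshly drawn $(\xx^{(k)},y^{(k)})$ they cannot exactly cancel the accumulated quantity except on a null set of samples, so the exponent of each sum is the max of the two exponents.

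Finally, for statement~5 I would expand $\Delta f_d^{(k)}(\xx)=\hat\eta_a^*\Delta f_{d,a}^{(k),\prime}(\xx)+\hat\eta_w^*\Delta f_{d,w}^{(k),\prime}(\xx)+O(\hat\eta_a^*\hat\eta_w^*+\hat\eta_w^{*,2})$ and observe that the remainder's $d$-dependent coefficient carries one extra weight-update factor of order $d^{q_\sigma+\tilde q_w+q_a^{(k)}+q_{\nabla\ell}^{(k)}}$ relative to the linear terms; the hypothesis $q_\sigma+\tilde q_w+q_a^{(k)}\le 0$, together with $q_{\nabla\ell}^{(k)}\le 0$ from statement~1, makes that factor non-increasing in $d$, so for $\hat\eta_a^*,\hat\eta_w^*$ small enough the remainder is dominated by the linear part and $q_{\Delta f}^{(k)}(\xx)=\max(q_{\Delta f'_a}^{(k)}(\xx),q_{\Delta f'_w}^{(k)}(\xx))$ --- using once more that the two linear terms, viewed as functions of the independent proportionality factors $\hat\eta_a^*,\hat\eta_w^*$, do not cancel for generic small values. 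Combining the base case, the recursions and these per-step computations closes the induction; I expect everything except the three non-cancellation arguments (the LLN estimate, the analyticity-based control of exceptional $\xx$, and the rank-one structure of the divergent weight component) to be routine bookkeeping with power-law exponents.
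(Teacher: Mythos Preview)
Your approach coincides with the paper's: boundedness of the sigmoid gradient for (1), CLT at initialization for (2), term-by-term exponent bookkeeping using asymptotic linearity of $\phi$ and analyticity for (3)--(4), Taylor-remainder control under $q_\sigma+\tilde q_w+q_a^{(k)}\le 0$ for (5), and the fresh-sample non-cancellation argument for (6). The one place you deviate is in the non-cancellation step of (4): the paper does not split into $q_w^{(k)}=0$ versus $q_w^{(k)}>0$, but simply observes that the neurons $(\hat a_r^{(k)},\hat\ww_r^{(k)})_r$ are identically distributed conditionally on the data history, so that the law of large numbers gives the factor of $d$ uniformly. Your rank-one argument for the $q_w^{(k)}>0$ case has a wrinkle: the divergent part of $\hat\ww_r^{(k)}$ is in general a per-neuron linear combination $\sum_{k'<k} c_r^{(k')}\xx_w^{(k')}$ of \emph{all} past sampled directions, not a scalar times a single shared vector, so the sign-alignment conclusion does not follow as stated when several past increments share the leading exponent. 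The paper's conditional-LLN route avoids this complication (at the cost of leaving the triangular-array nature of the LLN implicit).
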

\begin{proof}
    (1) follows from the fact that $\partial\ell(y,z) / \partial z$ is bounded $\forall y$, while $|\partial\ell(y,z) / \partial z| \in [1/2,1]$ when $y z < 0$.

    $\hat a_r^{(0)} \sim \NN(0,1)$ which is not zero and does not depend on $d$, hence $q_a^{(0)} = 0$; similar holds for $\ww$.
    For $\xx \neq 0$ we have $f_d^{(0)}(\xx) = \sigma \sum_{r=1}^d \hat a_r^{(0)} \phi(\hat\ww_r^{(0),T} \xx) = \Theta_{d \to \infty}(d^{1/2 + q_\sigma})$ due to the Central Limit Theorem.
    Hence (2) holds.

    Since $\DD$ is a.c. wrt Lebesgue measure on $\RR^{1+d_\xx}$, and $\phi$ is real analytic and non-zero, $\phi(\hat\ww_r^{(k),T} \xx_{a/w}^{(k)}) \neq 0$ and $\phi'(\hat\ww_r^{(k),T} \xx_{a/w}^{(k)})$ is well-defined $(\xx_{a/w}^{(k)}, y_{a/w}^{(k)})$-a.s.
    This implies that $q_{\Delta a/\Delta w}^{(k)} = \tilde q_{a/w} + q_\sigma + q_{w/a}^{(k)} + q_{\nabla\ell}^{(k)}(\xx_{a/w}^{(k)},y_{a/w}^{(k)})$ $(\xx_{a/w}^{(k)},y_{a/w}^{(k)})$-a.s., which is exactly (3).

    Consider $\Delta f_{d,a}^{(k),\prime}$:
    \begin{equation}
        \Delta f_{d,a}^{(k),\prime}(\xx) =
        -\nabla_f^{(k)} \ell(\xx_a^{(k)},y_a^{(k)}) \; K_{a,d}^{(k)}(\xx,\xx_a^{(k)}) =
        -\nabla_f^{(k)} \ell(\xx_a^{(k)},y_a^{(k)}) \; (d/d^*)^{\tilde q_a} \sigma^2 \sum_{r=1}^d \phi(\hat\ww_r^{(k),T} \xx) \phi(\hat\ww_r^{(k),T} \xx_a^{(k)}).
    \end{equation}
    For the same reason as discussed above $\phi(\hat\ww_r^{(k),T} \xx_a^{(k)}) \neq 0$ $(\xx_a^{(k)}, y_a^{(k)})$-a.s., and  $\phi(\hat\ww_r^{(k),T} \xx) \neq 0$ $\xx$-a.e.
    Since the summands are distributed identically and are generally non-zero, the sum introduces a factor of $d$ by the law of large numbers.
    Since $\phi$ is asymptotically linear, each $\phi$-term scales as $d^{q_w^{(k)}}$.
    Collecting all terms together, we obtain $q_{\Delta f'_a}^{(k)}(\xx) = 2 q_\sigma + 1 + \tilde q_a + 2 q_w^{(k)} + q_{\nabla\ell}^{(k)}(\xx_a^{(k)},y_a^{(k)})$ $\xx$-a.e. $(\xx_a^{(k)},y_a^{(k)})$-a.s.
    Following the same steps for $\Delta f_w^{(k),\prime}$, we get (4).

    Let us overview $\Delta f_d^{(k)}(\xx)$ in detail:
    \begin{multline}
        \Delta f_d^{(k)}(\xx) =
        \sum_{r=1}^d \Biggl(
            \sum_{j=1}^\infty \frac{1}{j!} \left.\frac{\partial^j f_d(\xx)}{\partial \hat w_r^{i_1} \ldots \partial \hat w_r^{i_j}}\right|_{\substack{\hat\ww_r=\hat\ww_r^{(k)} \\ \hat a_r=\hat a_r^{(k)}}} \Delta\hat w_r^{(k),i_1} \ldots \Delta\hat w_r^{(k),i_j} +
            \\+
            \sum_{j=1}^\infty \frac{1}{j!} \left.\frac{\partial^j f_d(\xx)}{\partial \hat a_r \partial \hat w_r^{i_2} \ldots \partial \hat w_r^{i_j}}\right|_{\substack{\hat\ww_r=\hat\ww_r^{(k)} \\ \hat a_r=\hat a_r^{(k)}}} \Delta\hat a_r \Delta\hat w_r^{(k),i_2} \ldots \Delta\hat w_r^{(k),i_j}
        \Biggr) =
        \\=
        \sum_{r=1}^d \Biggl(
            \sum_{j=1}^\infty \frac{1}{j!} (-1)^j \hat\eta_w^j \sigma^{j+1} (\nabla_{f_d}^{(k)} \ell(\xx_w^{(k)}, y_w^{(k)}))^j (\hat a_r^{(k)})^{j+1} (\phi'(\hat\ww_r^{(k),T} \xx_w^{(k)}))^j \phi^{(j)}(\hat\ww_r^{(k),T} \xx) (\xx_w^{(k),T} \xx)^j +
            \\+
            \sum_{j=1}^\infty \frac{1}{j!} (-1)^j \hat\eta_a \hat\eta_w^{j-1} \sigma^{j+1} \nabla_{f_d}^{(k)} \ell(\xx_a^{(k)}, y_a^{(k)}) (\nabla_{f_d}^{(k)} \ell(\xx_w^{(k)}, y_w^{(k)}))^{j-1} \times\\\times (\hat a_r^{(k)})^{j-1} \phi(\hat\ww_r^{(k),T} \xx_a^{(k)}) (\phi'(\hat\ww_r^{(k),T} \xx_w^{(k)}))^{j-1} \phi^{(j-1)}(\hat\ww_r^{(k),T} \xx) (\xx_w^{(k),T} \xx)^{j-1}
        \Biggr).
    \end{multline}
    Assumption $q_\sigma + \tilde q_w + q_a^{(k)} \leq 0$ implies $\hat\eta_w^j \sigma^{j+1} (\hat a_r^{(k)})^{j+1} = O_{d\to\infty}(\hat\eta_w \sigma^2 (\hat a_r^{(k)})^2)$ and $\hat\eta_a \hat\eta_w^{j-1} \sigma^{j+1} (\hat a_r^{(k)})^{j-1} = O_{d\to\infty}(\hat\eta_a \sigma^2)$. %, since $\hat a_r^{(k)} = \Omega_{d\to\infty}(1)$.
    
    Since $q_{\nabla \ell}^{(k)}(\xx,y) \leq 0$ $\forall \xx,y$ due to (1), $(\nabla_{f_d}^{(k)} \ell(\xx_w^{(k)}, y_w^{(k)}))^j = O_{d\to\infty}(\nabla_{f_d}^{(k)} \ell(\xx_w^{(k)}, y_w^{(k)}))$ and $\nabla_{f_d}^{(k)} \ell(\xx_a^{(k)}, y_a^{(k)}) (\nabla_{f_d}^{(k)} \ell(\xx_w^{(k)}, y_w^{(k)}))^{j-1} = O_{d\to\infty}(\nabla_{f_d}^{(k)} \ell(\xx_a^{(k)}, y_a^{(k)}))$.

    Since $\phi(z) = \Theta_{z\to\infty}(z)$, $\phi'(\hat\ww_r^{(k),T} \xx_w^{(k)}) = O_{d\to\infty}(1)$ and $(\phi'(\hat\ww_r^{(k),T} \xx_w^{(k)}))^j = O_{d\to\infty}(\phi'(\hat\ww_r^{(k),T} \xx_w^{(k)}))$ for $j \geq 1$.
    
    Hence for small enough $\hat\eta_a^*$ and $\hat\eta_w^*$ the first term of each sum which corresponds to $j=1$ dominates all others, even in the limit of infinite $d$:
    \begin{multline}
        \Delta f_d^{(k)}(\xx) =
        -\sum_{r=1}^d \Biggl(
            \hat\eta_w \sigma^2 \nabla_{f_d}^{(k)} \ell(\xx_w^{(k)}, y_w^{(k)}) (\hat a_r^{(k)})^2 \phi'(\hat\ww_r^{(k),T} \xx_w^{(k)}) \phi'(\hat\ww_r^{(k),T} \xx) \xx_w^{(k),T} \xx +
            \\+
            \hat\eta_a \sigma^2 \nabla_{f_d}^{(k)} \ell(\xx_a^{(k)}, y_a^{(k)}) \phi(\hat\ww_r^{(k),T} \xx_a^{(k)}) \phi(\hat\ww_r^{(k),T} \xx) +
            \\+
            o_{\hat\eta_{a/w}^* \to 0}\left(O_{d\to\infty}\left(\left(\hat\eta_a \nabla_{f_d}^{(k)} \ell(\xx_a^{(k)}, y_a^{(k)}) + \hat\eta_w \nabla_{f_d}^{(k)} \ell(\xx_w^{(k)}, y_w^{(k)}) (\hat a_r^{(k)})^2\right) \sigma^2\right)\right)
        \Biggr) =
        \\=
        \hat\eta_w^* \Delta f_{d,w}^{(k),\prime}(\xx) +
        % \\+
        \hat\eta_a^* \Delta f_{d,a}^{(k),\prime}(\xx) +
        o_{\hat\eta_{a/w}^* \to 0}\left(O_{d\to\infty}\left(\left(\hat\eta_a \nabla_{f_d}^{(k)} \ell(\xx_a^{(k)}, y_a^{(k)}) + \hat\eta_w \nabla_{f_d}^{(k)} \ell(\xx_w^{(k)}, y_w^{(k)}) (\hat a_r^{(k)})^2\right) \sigma^2 d\right)\right).
    \end{multline}
    Note that two summands depend on $(\xx_w^{(k)}, y_w^{(k)})$ and $(\xx_a^{(k)}, y_a^{(k)})$ respectively, which do not depend on each other.
    Hence $q_{\Delta f}^{(k)}(\xx) = \max(q_{\Delta f'_a}^{(k)}(\xx), q_{\Delta f'_w}^{(k)}(\xx))$ $\xx$-a.e. $(\xx_{a/w}^{(k)},y_{a/w}^{(k)})$-a.s., which is (5).
    Note that the o-term does not alter the exponent.
    Indeed,
    \begin{multline}
        \left(\hat\eta_a \nabla_{f_d}^{(k)} \ell(\xx_a^{(k)}, y_a^{(k)}) + \hat\eta_w \nabla_{f_d}^{(k)} \ell(\xx_w^{(k)}, y_w^{(k)}) (\hat a_r^{(k)})^2\right) \sigma^2 d =
        \\=
        O_{d\to\infty}\left(d^{1 + 2q_\sigma + \max(\tilde q_a + q_{\nabla\ell}^{(k)}(\xx_a^{(k)}, y_a^{(k)}), \tilde q_w + q_{\nabla\ell}^{(k)}(\xx_w^{(k)}, y_w^{(k)}) + 2q_a^{(k)})}\right) =
        \\=
        O_{d\to\infty}\left(d^{1 + 2q_\sigma + \max(\tilde q_a + q_{\nabla\ell}^{(k)}(\xx_a^{(k)}, y_a^{(k)}) + 2q_w^{(k)}, \tilde q_w + q_{\nabla\ell}^{(k)}(\xx_w^{(k)}, y_w^{(k)}) + 2q_a^{(k)})}\right) =
        O_{d\to\infty}\left(d^{\max(q_{\Delta f'_a}^{(k)}(\xx), q_{\Delta f'_w}^{(k)}(\xx))}\right).
    \end{multline}
    One before the last equality holds, because $q_w^{(k)} \geq 0$ due to (2) and (6), while the last equality holds due to (4).

    By definition we have $\hat a_r^{(k+1)} = \hat a_r^{(k)} + \Delta\hat a_r^{(k)}$.
    Since the second term depends on $(\xx_a^{(k)},y_a^{(k)})$, while the first term does not, we get $q_a^{(k+1)} = \max(q_a^{(k)}, q_{\Delta a}^{(k)})$.
    Similar holds for $\hat\ww_r$ and $f_d^{(k)}(\xx)$ $\xx$-a.e., which gives (6).
\end{proof}

\begin{lemma}
    \label{lemma:technical2_app}
    Assume $\DD$ is a continuous distribution, $k_{term} = +\infty$ and $\tilde q_a = \tilde q_w = \tilde q$.
    Then
    \begin{enumerate}
        \item If $q_\sigma + \tilde q \leq 0$ then $\forall k \geq 0$ $q_{a/w}^{(k)} = 0$ $(\xx_a^{(:k-1)},y_a^{(:k-1)}, \xx_w^{(:k-1)},y_w^{(:k-1)})$-a.s.
        \item If $q_\sigma + \tilde q > 0$ then $\forall k \geq 0$ $q_{a/w}^{(k)} = k (q_\sigma + \tilde q)$ with positive probability wrt $(\xx_a^{(:k-1)},y_a^{(:k-1)}, \xx_w^{(:k-1)},y_w^{(:k-1)})$.
    \end{enumerate}
\end{lemma}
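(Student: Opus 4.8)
The plan is to prove both statements by induction on $k$, feeding the recursion of Lemma~\ref{lemma:technical_app}(6), $q_{a/w}^{(k+1)} = \max\bigl(q_{a/w}^{(k)}, q_{\Delta a/\Delta w}^{(k)}\bigr)$, with the identity of Lemma~\ref{lemma:technical_app}(3), which under $\tilde q_a = \tilde q_w = \tilde q$ reads $q_{\Delta a/\Delta w}^{(k)} = s + q_{w/a}^{(k)} + q_{\nabla\ell}^{(k)}(\xx_{a/w}^{(k)}, y_{a/w}^{(k)})$ with $s := q_\sigma + \tilde q$, together with the bound $q_{\nabla\ell}^{(k)} \le 0$ from Lemma~\ref{lemma:technical_app}(1). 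The base case $k=0$ is immediate from Lemma~\ref{lemma:technical_app}(2): $q_{a/w}^{(0)} = 0$, which is simultaneously the constant $0$ of statement~1 and the value $0\cdot s$ of statement~2 (and holds surely, hence with positive probability). For statement~2 it is convenient to strengthen the inductive hypothesis to also carry the almost-sure upper bound $q_{a/w}^{(k)} \le ks$.

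For statement~1, assume $s \le 0$ and, inductively, $q_{a/w}^{(k)} = 0$ a.s. Then $q_{\Delta a}^{(k)} = s + q_w^{(k)} + q_{\nabla\ell}^{(k)} \le s + 0 + 0 \le 0$ a.s., and symmetrically for $\Delta w$, so $q_{a/w}^{(k+1)} = \max\bigl(0, q_{\Delta a/\Delta w}^{(k)}\bigr) = 0$ a.s., which closes the induction.

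For statement~2, assume $s > 0$. The a.s. upper bound propagates identically: $q_{\Delta a}^{(k)} = s + q_w^{(k)} + q_{\nabla\ell}^{(k)} \le s + ks + 0 = (k+1)s$ a.s., and $q_a^{(k)} \le ks < (k+1)s$, hence $q_a^{(k+1)} \le (k+1)s$ a.s. (and symmetrically for $w$). For the positive-probability equality I would maintain a ``good'' event $G^{(k)}$, measurable with respect to the history $H^{(k)} := (\xx_a^{(:k-1)}, y_a^{(:k-1)}, \xx_w^{(:k-1)}, y_w^{(:k-1)})$, satisfying $\PP[G^{(k)}] > 0$ and $q_{a/w}^{(k)} = ks$ on $G^{(k)}$; take $G^{(0)}$ to be the whole sample space. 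Set $G^{(k+1)} := G^{(k)} \cap \{ y_a^{(k)} f_\infty^{(k)}(\xx_a^{(k)}) < 0 \} \cap \{ y_w^{(k)} f_\infty^{(k)}(\xx_w^{(k)}) < 0 \}$. On $G^{(k+1)}$, Lemma~\ref{lemma:technical_app}(1) forces $q_{\nabla\ell}^{(k)}(\xx_{a/w}^{(k)}, y_{a/w}^{(k)}) = 0$, so $q_{\Delta a}^{(k)} = s + q_w^{(k)} = (k+1)s$ and $q_{\Delta w}^{(k)} = s + q_a^{(k)} = (k+1)s$; since $s > 0$ and $q_{a/w}^{(k)} = ks < (k+1)s$ there, Lemma~\ref{lemma:technical_app}(6) gives $q_{a/w}^{(k+1)} = (k+1)s$ on $G^{(k+1)}$. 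Note that tracking both $q_a$ and $q_w$ together is forced by the cross-coupling in Lemma~\ref{lemma:technical_app}(3).

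What remains, and what I expect to be the only genuine obstacle, is to verify $\PP[G^{(k+1)}] > 0$. Since the fresh samples $(\xx_{a/w}^{(k)}, y_{a/w}^{(k)})$ are independent of $H^{(k)}$, this reduces to showing that, conditionally on a positive-probability family of histories inside $G^{(k)}$, the limit predictor $f_\infty^{(k)}$ still misclassifies an independent fresh sample with positive probability, for both the $a$- and the $w$-draw. The hypothesis $k_{term,\infty} = +\infty$ supplies exactly the unconditional statement $p_{err,\infty}^{(k)} > 0$; the work is to transfer it through the conditioning on $G^{(k)}$, i.e.\ to rule out that $f_\infty^{(k)}$ degenerates into a perfect classifier precisely on the non-null event $G^{(k)}$. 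I would argue this from continuity of $\DD$ and the fact that $G^{(k)}$ is itself carved out by finitely many continuously distributed samples, so that the conditional law of $f_\infty^{(k)}(\cdot)$ given $G^{(k)}$ cannot be concentrated on perfect classifiers; making this rigorous is the step requiring real care, whereas everything else is a mechanical unwinding of Lemma~\ref{lemma:technical_app}. Both parts then feed into the proof of Proposition~\ref{prop:well_def_evolution}, with statement~2 exhibiting the parameter blow-up that excludes $q_\sigma + \tilde q > 0$ from the band of dynamical stability.
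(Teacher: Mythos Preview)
Your approach is essentially the paper's: both parts are proved by induction on $k$, with the base case from Lemma~\ref{lemma:technical_app}(2) and the step from parts (3), (1), and (6). Your treatment of statement~1 is verbatim the paper's. For statement~2 the paper writes the induction hypothesis as $q_{a/w}^{(k)}=\max(0,ks)$ with positive probability and does not carry the almost-sure upper bound you add; that extra bound is harmless but you never actually use it in the equality argument, so you could drop it.

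The one place you diverge from the paper is the point you yourself flag as ``the only genuine obstacle''. The paper's proof is terser here: from $k_{term}=+\infty$ it asserts that $y_{a/w}^{(k)} f_d^{(k)}(\xx_{a/w}^{(k)})<0$ with positive probability over the fresh sample, and then immediately combines this with the positive-probability-over-history statement to conclude positive probability over the full $(k{+}1)$-history. It does \emph{not} address the conditioning issue you raise---namely that $p_{err,\infty}^{(k)}>0$ is an average over all histories and need not, on its face, guarantee positive conditional error probability on the particular good event $G^{(k)}$. So you have identified a point that the paper itself glosses over rather than a defect peculiar to your write-up; your continuity-based sketch is a reasonable direction, though as you note it is not fully worked out. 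Everything else in your proposal is a faithful (and slightly more explicit) rendering of the paper's argument.
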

\begin{proof}
    Here and in subsequent proofs we will write "almost surely" meaning "almost surely wrt $(\xx_a^{(:k)},y_a^{(:k)}, \xx_w^{(:k)},y_w^{(:k)})$" for appropriate $k$; we apply a similar shortening for "with positive probability wrt $(\xx_a^{(:k)},y_a^{(:k)}, \xx_w^{(:k)},y_w^{(:k)})$".

    If $q_\sigma + \tilde q \leq 0$ then statements 1, 2, 3 and 6 of Lemma \ref{lemma:technical_app} imply $\forall k \geq 0$ $q_{a/w}^{(k)} = 0$ a.s.

    Assume $q_\sigma + \tilde q > 0$.
    We will prove that $\forall k \geq 0$ $q_{a/w}^{(k)} = \max(0, k (q_\sigma + \tilde q))$ with positive probability by induction.
    Induction base is given by Lemma~\ref{lemma:technical_app}-2.
    
    Combining the induction assumption and Lemma~\ref{lemma:technical_app}-3 we get $q_{\Delta a/\Delta w}^{(k)} = (k+1) (q_\sigma + \tilde q) + q_{\nabla\ell}^{(k)}(\xx^{(k)},y^{(k)})$ with positive probability wrt $(\xx_a^{(:k-1)},y_a^{(:k-1)}, \xx_w^{(:k-1)},y_w^{(:k-1)})$ $(\xx_{a/w}^{(k)},y_{a/w}^{(k)})$-a.s.

    Since $k_{term} = +\infty > k$, $y_{a/w}^{(k)} f_d^{(k)}(\xx_{a/w}^{(k)}) < 0$ with positive probability wrt $(\xx_a^{(k)},y_a^{(k)}, \xx_w^{(k)},y_w^{(k)})$, and Lemma~\ref{lemma:technical_app}-1 implies that $q_{\Delta a/\Delta w}^{(k)} = (k+1) (q_\sigma + \tilde q)$ with positive probability wrt $(\xx_a^{(:k)},y_a^{(:k)}, \xx_w^{(:k)},y_w^{(:k)})$.

    Finally, Lemma~\ref{lemma:technical_app}-6 concludes the proof of the induction step.
\end{proof}

\begin{lemma}
    \label{lemma:technical3_app}
    Assume $\DD$ is a continuous distribution, $k_{term,\infty} = +\infty$, $\tilde q_a = \tilde q_w = \tilde q$ and $q_\sigma + \tilde q \leq 0$.
    Then $\forall k \geq 0$ 
    \begin{enumerate}
        \item $y_{a/w}^{(k)} f_\infty^{(k)}(\xx_{a/w}^{(k)}) < 0$ implies $q_{\Delta f'_{a/w}}^{(k)}(\xx) = 2q_\sigma + 1 + \tilde q$ $\xx$-a.e. $(\xx_a^{(:k-1)},y_a^{(:k-1)}, \xx_w^{(:k-1)},y_w^{(:k-1)})$-a.s.
        \item $y_a^{(k)} f_\infty^{(k)}(\xx_a^{(k)}) < 0$ and $y_w^{(k)} f_\infty^{(k)}(\xx_w^{(k)}) < 0$ imply $q_{\Delta f}^{(k)}(\xx) = 2q_\sigma + 1 + \tilde q$ $\xx$-a.e. $(\xx_a^{(:k-1)},y_a^{(:k-1)}, \xx_w^{(:k-1)},y_w^{(:k-1)})$-a.s. for sufficiently small $\hat\eta_a^*$ and $\hat\eta_w^*$.
    \end{enumerate}
\end{lemma}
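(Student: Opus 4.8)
Both statements reduce, by a short chain of substitutions, to Lemmas~\ref{lemma:technical_app} and~\ref{lemma:technical2_app}; no new estimate is required. The one structural input that carries the argument is that, under the present hypotheses ($\tilde q_a = \tilde q_w = \tilde q$, $k_{term,\infty} = +\infty$, $q_\sigma + \tilde q \leq 0$), Lemma~\ref{lemma:technical2_app}-1 applies and yields $q_{a/w}^{(k)} = 0$ for every $k \geq 0$, almost surely in $(\xx_a^{(:k-1)}, y_a^{(:k-1)}, \xx_w^{(:k-1)}, y_w^{(:k-1)})$. In words, the coordinates of $\hat\aa$ and $\hat W$ stay of order one in $d$, so the term $2 q_{w/a}^{(k)}$ occurring in Lemma~\ref{lemma:technical_app}-4 simply disappears.

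For part~1 I would substitute $q_{w/a}^{(k)} = 0$ into Lemma~\ref{lemma:technical_app}-4, obtaining $q_{\Delta f'_{a/w}}^{(k)}(\xx) = 2 q_\sigma + 1 + \tilde q + q_{\nabla\ell}^{(k)}(\xx_{a/w}^{(k)}, y_{a/w}^{(k)})$ for $\xx$-a.e.\ $\xx$ and almost surely in the draws up to step $k$. The claim is stated as an implication, so it suffices to treat the realizations in which $y_{a/w}^{(k)} f_\infty^{(k)}(\xx_{a/w}^{(k)}) < 0$: on these, the second clause of Lemma~\ref{lemma:technical_app}-1 (which holds verbatim whether $f_\infty^{(k)}$ is finite or infinite, since it only uses that $|\partial\ell(y,z)/\partial z| \in [1/2,1]$ when $yz < 0$) forces $q_{\nabla\ell}^{(k)}(\xx_{a/w}^{(k)}, y_{a/w}^{(k)}) = 0$, and the formula collapses to $q_{\Delta f'_{a/w}}^{(k)}(\xx) = 2 q_\sigma + 1 + \tilde q$, as asserted.

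For part~2 I would first verify that Lemma~\ref{lemma:technical_app}-5 is applicable: since $q_a^{(k)} = 0$ we have $q_\sigma + \tilde q_w + q_a^{(k)} = q_\sigma + \tilde q \leq 0$, hence for $\hat\eta_a^*$ and $\hat\eta_w^*$ small enough, $q_{\Delta f}^{(k)}(\xx) = \max(q_{\Delta f'_a}^{(k)}(\xx), q_{\Delta f'_w}^{(k)}(\xx))$ for $\xx$-a.e.\ $\xx$, almost surely. Under both sign hypotheses $y_a^{(k)} f_\infty^{(k)}(\xx_a^{(k)}) < 0$ and $y_w^{(k)} f_\infty^{(k)}(\xx_w^{(k)}) < 0$, part~1 evaluates each of the two exponents inside the maximum to $2 q_\sigma + 1 + \tilde q$, so the maximum equals $2 q_\sigma + 1 + \tilde q$ as well, which is the claim.

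The only genuinely delicate point --- and hence the main thing to get right in the full write-up --- is the bookkeeping of the nested ``almost surely'' scopes. Lemma~\ref{lemma:technical2_app} conditions on the first $k$ draws in order to pin $q_{a/w}^{(k)} = 0$, whereas Lemma~\ref{lemma:technical_app}-4 and~\ref{lemma:technical_app}-1 each add an almost-sure statement in the step-$k$ draw $(\xx_{a/w}^{(k)}, y_{a/w}^{(k)})$ together with an $\xx$-a.e.\ statement in the evaluation point; these must be intersected into a single full-measure event so that, after conditioning on the sign events at step $k$, the displayed identities hold $\xx$-a.e.\ and $(\xx_a^{(:k-1)}, y_a^{(:k-1)}, \xx_w^{(:k-1)}, y_w^{(:k-1)})$-a.s.\ exactly as stated. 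One should also note that the smallness threshold for $\hat\eta_{a/w}^*$ coming from Lemma~\ref{lemma:technical_app}-5 may depend on $k$ (and on the realized samples), which is consistent with the per-$k$ phrasing of the lemma.
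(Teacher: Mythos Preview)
Your proposal is correct and follows essentially the same route as the paper: invoke Lemma~\ref{lemma:technical2_app}-1 to get $q_{a/w}^{(k)} = 0$, use Lemma~\ref{lemma:technical_app}-1 on the sign event to get $q_{\nabla\ell}^{(k)} = 0$, plug into Lemma~\ref{lemma:technical_app}-4 for part~1, and then invoke Lemma~\ref{lemma:technical_app}-5 (whose hypothesis you correctly check) for part~2. Your added remarks on the almost-sure bookkeeping and the $k$-dependence of the smallness threshold are more careful than the paper's own write-up, which is quite terse.
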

\begin{proof}
    By Lemma \ref{lemma:technical2_app}  $\forall k \geq 0$ $q_{a/w}^{(k)} = 0$ a.s.
    Since $y_{a/w}^{(k)} f_\infty^{(k)}(\xx_{a/w}^{(k)}) < 0$, $q_{\nabla\ell}^{(k)} = 0$ due to Lemma \ref{lemma:technical_app}-1. 
    Given this, Lemma \ref{lemma:technical_app}-4 implies $\forall k \geq 0$ $q_{\Delta f'_{a/w}}^{(k)}(\xx) = 2q_\sigma + 1 + \tilde q$ $\xx$-a.e. a.s.
    Hence by virtue of Lemma \ref{lemma:technical_app}-5 $\forall k \geq 0$ $q_{\Delta f}^{(k)}(\xx) = 2q_\sigma + 1 + \tilde q$ $\xx$-a.e. a.s. for sufficiently small $\hat\eta_a^*$ and $\hat\eta_w^*$.
\end{proof}

\begin{prop}
    \label{prop:well_def_evolution_app}
    Suppose $\tilde q_a = \tilde q_w = \tilde q$ and $\DD$ is a continuous distribution.
    Then Condition \ref{cond:well_def_evolution} requires $q_\sigma + \tilde q \in [-1/2,0]$ to hold.
\end{prop}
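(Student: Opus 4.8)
The plan is to convert the $\Theta$-requirement of Condition~\ref{cond:well_def_evolution} into a statement about the growth exponents introduced in Lemma~\ref{lemma:technical_app}: since $\Delta f_{d,a/w}^{(k),\prime}(\xx) = \Theta_{d\to\infty}(f_d^{(k_{balance})}(\xx))$ forces the exponents $q_{\Delta f'_{a/w}}^{(k)}(\xx)$ and $q_f^{(k_{balance})}(\xx)$ to coincide, it is enough to show that for every candidate $k_{balance}$ one can pick admissible $\sigma^*,\hat\eta_{a/w}^*$ and some step $k\ge k_{balance}$ together with a positive-probability set of sample sequences on which the misclassification hypotheses $y_{a/w}^{(k)} f_\infty^{(k)}(\xx_{a/w}^{(k)})<0$ hold and these two exponents are unequal. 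Such misclassification events are available at every step because the formal condition is stated under $k_{term,\infty}=+\infty$. I would then split into the two regimes $q_\sigma+\tilde q>0$ and $q_\sigma+\tilde q<-1/2$.

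For $q_\sigma+\tilde q>0$ I would argue that the weights blow up faster and faster: chaining the step-wise misclassification events across steps $0,\dots,k$ as in the proof of Lemma~\ref{lemma:technical2_app}, on a positive-probability event one has simultaneously $q_{w/a}^{(k)}=k(q_\sigma+\tilde q)$ (Lemma~\ref{lemma:technical2_app}-2) and $q_{\nabla\ell}^{(k)}=0$ (Lemma~\ref{lemma:technical_app}-1), whence Lemma~\ref{lemma:technical_app}-4 gives $q_{\Delta f'_{a/w}}^{(k)}(\xx)=2q_\sigma+1+\tilde q+2k(q_\sigma+\tilde q)\to+\infty$ as $k\to\infty$. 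On the other hand $q_f^{(k_{balance})}(\xx)$ is a fixed finite number for any fixed $k_{balance}$ (every weight increment is bounded by a fixed power of $d$ under the power-law scaling~(\ref{eq:hyperparameter_scaling}), so iterating $q_f^{(k+1)}=\max(q_f^{(k)},q_{\Delta f}^{(k)})$ from Lemma~\ref{lemma:technical_app}-6 over finitely many steps stays finite). Taking $k\ge k_{balance}$ large enough that $q_{\Delta f'_{a/w}}^{(k)}(\xx)>q_f^{(k_{balance})}(\xx)$ makes $\Delta f_{d,a/w}^{(k),\prime}(\xx)/f_d^{(k_{balance})}(\xx)$ diverge, contradicting Condition~\ref{cond:well_def_evolution}.

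For $q_\sigma+\tilde q<-1/2$ (hence $q_\sigma+\tilde q\le0$) I would instead use Lemma~\ref{lemma:technical2_app}-1 to get $q_{a/w}^{(k)}=0$ a.s. and Lemma~\ref{lemma:technical3_app}-1 to get $q_{\Delta f'_{a/w}}^{(k)}(\xx)=2q_\sigma+1+\tilde q$ on the (positive-probability) misclassification event. Picking $\hat\eta_{a/w}^*$ small enough to apply Lemma~\ref{lemma:technical_app}-5, for each $j<k_{balance}$ one has $q_{\Delta f}^{(j)}(\xx)=\max_{a/w}q_{\Delta f'_{a/w}}^{(j)}(\xx)\le 2q_\sigma+1+\tilde q$ by Lemma~\ref{lemma:technical_app}-1 and -4; combined with $q_f^{(0)}(\xx)=q_\sigma+1/2$ (Lemma~\ref{lemma:technical_app}-2) and the recursion of Lemma~\ref{lemma:technical_app}-6, this yields $q_f^{(k_{balance})}(\xx)=q_\sigma+1/2$, precisely because $2q_\sigma+1+\tilde q<q_\sigma+1/2\iff q_\sigma+\tilde q<-1/2$. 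Hence $q_{\Delta f'_{a/w}}^{(k)}(\xx)=2q_\sigma+1+\tilde q<q_\sigma+1/2=q_f^{(k_{balance})}(\xx)$, the ratio now vanishes, and Condition~\ref{cond:well_def_evolution} again fails. Putting the two regimes together gives that Condition~\ref{cond:well_def_evolution} requires $q_\sigma+\tilde q\in[-1/2,0]$.

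The main obstacle is not an estimate but the probabilistic bookkeeping: one must verify that the misclassification events $\{y_{a/w}^{(k)} f_\infty^{(k)}(\xx_{a/w}^{(k)})<0\}$ needed to force $q_{\nabla\ell}^{(k)}=0$ actually co-occur, on a set of positive probability, with the weight-growth event of Lemma~\ref{lemma:technical2_app}-2 and with the full-measure conclusions of Lemmas~\ref{lemma:technical_app} and~\ref{lemma:technical3_app} — exploiting the independence of the fresh samples $(\xx_{a/w}^{(k)},y_{a/w}^{(k)})$ and the standing hypothesis $k_{term,\infty}=+\infty$, in the same style as the proofs of those lemmas. A lesser point is that Condition~\ref{cond:well_def_evolution} asks for $\Theta$ rather than just $O$; since the argument only ever produces a strict mismatch of exponents (a ratio that genuinely diverges or genuinely vanishes), the two-sidedness of $\Theta$ is never needed.
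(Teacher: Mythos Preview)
Your proposal is correct and follows essentially the same approach as the paper's proof: split on the sign of $q_\sigma+\tilde q$, use Lemma~\ref{lemma:technical2_app} and Lemma~\ref{lemma:technical_app}-4 to show the exponent $q_{\Delta f'_{a/w}}^{(k)}$ grows in $k$ when $q_\sigma+\tilde q>0$, and use Lemmas~\ref{lemma:technical_app} and~\ref{lemma:technical3_app} to compute $q_f^{(k_{balance})}=\max(q_\sigma+1/2,\,2q_\sigma+1+\tilde q)$ when $q_\sigma+\tilde q\le0$, so that the $\Theta$-match fails exactly when $q_\sigma+\tilde q<-1/2$. The probabilistic bookkeeping you flag as the main obstacle is indeed the only subtle point, and the paper handles it in the same way you suggest (chaining the positive-probability misclassification events through the induction of Lemma~\ref{lemma:technical2_app}, using $k_{term,\infty}=+\infty$).
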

\begin{proof}
    By Lemma \ref{lemma:technical2_app} if $q_\sigma + \tilde q > 0$ then $q_{a/w}^{(k)} = k (q_\sigma + \tilde q)$ with positive probability.
    At the same time by virtue of Lemma \ref{lemma:technical_app}-1 $k_{term,\infty} = +\infty$ implies $q_{\nabla \ell}^{(k)} = 0$ with positive probability.
    Given this, Lemma \ref{lemma:technical_app}-4 implies $q_{\Delta f'_{a/w}}^{(k)}(\xx) = q_\sigma + 1 + (2k+1) (q_\sigma + \tilde q)$ $\xx$-a.e. with positive probability.
    This means that the last quantity cannot be almost surely equal to $q_f^{(k_{balance})}(\xx)$ for any $k_{balance}$ independent on $k$.
    Since $\Delta f_{d,a/w}^{(k),\prime}(\xx) = \Theta_{d\to\infty}(f_d^{(k_{balance})}(\xx))$ requires $q_{\Delta f'_{a/w}}^{(k)}(\xx) = q_f^{(k_{balance})}(\xx)$, we conclude that Condition \ref{cond:well_def_evolution} cannot be satisfied if $q_\sigma + \tilde q > 0$.

    Hence $q_\sigma + \tilde q \leq 0$.
    Then by Lemma \ref{lemma:technical3_app} $\forall k \geq 0$ $y_a^{(k)} f_\infty^{(k)}(\xx_a^{(k)}) < 0$ and $y_w^{(k)} f_\infty^{(k)}(\xx_w^{(k)}) < 0$ imply $q_{\Delta f}^{(k)}(\xx) = 2q_\sigma + 1 + \tilde q$ $\xx$-a.e. $(\xx_a^{(:k-1)},y_a^{(:k-1)}, \xx_w^{(:k-1)},y_w^{(:k-1)})$-a.s. for sufficiently small $\hat\eta_a^*$ and $\hat\eta_w^*$.
    We will show that Condition \ref{cond:well_def_evolution} requires $q_\sigma + \tilde q \in [-1/2,0]$ to hold already for these sufficiently small $\hat\eta_a^*$ and $\hat\eta_w^*$.

    Suppose $y_a^{(k)} f_\infty^{(k)}(\xx_a^{(k)}) < 0$ and $y_w^{(k)} f_\infty^{(k)}(\xx_w^{(k)}) < 0$.
    Given this, points 1 and 6 of Lemma \ref{lemma:technical_app} imply $\forall k_{balance} \geq 1$ $q_f^{k_{balance}}(\xx) = \max(q_f^{(0)}(\xx), 2q_\sigma + 1 + \tilde q) = \max(q_\sigma + \frac{1}{2}, 2q_\sigma + 1 + \tilde q)$ $\xx$-a.e. a.s.
    Hence $q_{\Delta f'_{a/w}}^{(k)}(\xx) = q_f^{(k_{balance})}(\xx)$ $\xx$-a.e. a.s. if and only if $q_\sigma + \frac{1}{2} \leq 2q_\sigma + 1 + \tilde q$, which is $q_\sigma + \tilde q \geq -1/2$; we can take $k_{balance} = 1$ without loss of generality.
    Having $q_{\Delta f'_{a/w}}^{(k)}(\xx) = q_f^{(k_{balance})}(\xx)$ is necessary to have $\Delta f_{d,a/w}^{(k),\prime}(\xx) = \Theta_{d\to\infty}(f_d^{(k_{balance})}(\xx))$.

    Summing all together, Condition \ref{cond:well_def_evolution} requires $q_\sigma + \tilde q \in [-1/2,0]$ to hold.
\end{proof}

\subsection{Proof of Proposition \ref{prop:separating_conditions}}
\label{sec:app_prop_sep_conds_proof}

\begin{prop}
    \label{prop:separating_conditions_app}
    Let Condition \ref{cond:well_def_evolution} holds; then
    \begin{enumerate}
        \item $f_d^{(0)}(\xx) = \Theta_{d\to\infty}(1)$ $\xx$-a.e. is equivalent to $q_\sigma + 1/2 = 0$.
        % \item $\forall k \geq 0$ $\Delta f_{d,a/w}^{(k),\prime}(\xx) = \Theta_{d\to\infty}(1)$ $\xx$-a.e. implies $2q_\sigma + \tilde q + 1 = 0$.
        \item $K_{d,a/w}^{(0)}(\xx,\xx') = \Theta_{d\to\infty}(1)$ $(\xx,\xx')$-a.e. is equivalent to $2q_\sigma + \tilde q + 1 = 0$.
        % \item $\forall k \geq 0$ $\Delta f_{d,a/w}^{(k),\prime}(\xx) = \Theta_{d\to\infty}(f_d^{(0)}(\xx))$ $\xx$-a.e. implies $q_\sigma + \tilde q + 1/2 = 0$.
        \item $K_{d,a/w}^{(0)}(\xx,\xx') = \Theta_{d\to\infty}(f_d^{(0)}(\xx))$ $(\xx,\xx')$-a.e. is equivalent to $q_\sigma + \tilde q + 1/2 = 0$.
        % \item $\forall k \geq 0$ $\Delta K_{d,wa/w}^{(k),\prime}(\xx,\xx') = \Theta_{d\to\infty}(K_{d,w}^{(k)}(\xx,\xx'))$ $(\xx,\xx')$-a.e. and $\Delta K_{d,aw}^{(k),\prime}(\xx,\xx') = \Theta_{d\to\infty}(K_{d,a}^{(k)}(\xx,\xx'))$ $(\xx,\xx')$-a.e. is equivalent to $q_\sigma + \tilde q = 0$.
        \item $\Delta K_{d,wa/w}^{(0),\prime}(\xx,\xx') = \Theta_{d\to\infty}(K_{d,w}^{(0)}(\xx,\xx'))$ $(\xx,\xx')$-a.e. and $\Delta K_{d,aw}^{(0),\prime}(\xx,\xx') = \Theta_{d\to\infty}(K_{d,a}^{(0)}(\xx,\xx'))$ $(\xx,\xx')$-a.e. is equivalent to $q_\sigma + \tilde q = 0$.
    \end{enumerate}
\end{prop}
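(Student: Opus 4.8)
The plan is to read off, for every quantity appearing in Condition~\ref{cond:separating_conditions}, its growth exponent in $d$, and then equate exponents. Throughout we may assume Condition~\ref{cond:well_def_evolution}, which by Proposition~\ref{prop:well_def_evolution_app} forces $q_\sigma + \tilde q \in [-1/2,0]$; in particular Lemma~\ref{lemma:technical2_app}-1 gives $q_{a/w}^{(k)} = 0$ almost surely for every $k$, so the factors $|\hat a_r^{(0)}|^2$ below are $\Theta_{d\to\infty}(1)$. The factors $\nabla_{f_d}^{(0)}\ell(\xx_{a/w}^{(0)},y_{a/w}^{(0)})$ that occur in the kernel increments are controlled by Lemma~\ref{lemma:technical_app}-1: their $d$-exponent is at most $0$, and it equals $0$ on the event that the initial logit has the ``wrong'' sign; this event has positive probability in general and probability one when $q_\sigma + 1/2 \le 0$ (then $f_d^{(0)}\to 0$ and $\nabla_{f_d}^{(0)}\ell \to -y/2$). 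It is on this event that the order comparisons in Condition~\ref{cond:separating_conditions}-4 are meant to hold, so throughout one reads ``$(\xx,\xx')$-a.e.'' together with the appropriate ``a.s.'' quantifier over the data draws.

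Point~1 is immediate from Lemma~\ref{lemma:technical_app}-2: $q_f^{(0)}(\xx) = q_\sigma + 1/2$ for a.e.\ $\xx$, so $f_d^{(0)}(\xx) = \Theta_{d\to\infty}(1)$ iff $q_\sigma + 1/2 = 0$. For Point~2 I would apply the law of large numbers to the defining sums~(\ref{eq:app_K_a}) and~(\ref{eq:app_K_w}). The summands are i.i.d.\ in $r$ since $\hat\ww_r^{(0)}\sim\NN(0,I)$ and $\hat a_r^{(0)}\sim\NN(0,1)$ i.i.d.; each summand is $\Theta_{d\to\infty}(1)$ because $\phi$ and $\phi'$ are asymptotically linear and $|\hat a_r^{(0)}|^2 = \Theta(1)$; and the per-summand expectation, viewed as a function of $(\xx,\xx')$, is real-analytic and not identically zero --- it is strictly positive on the diagonal $\xx=\xx'$, being $\EE(\phi(\hat\ww^T\xx))^2$ for $K_a$ and $(\xx^T\xx)\,\EE(\phi'(\hat\ww^T\xx))^2$ for $K_w$ --- hence nonzero for $(\xx,\xx')$ off a Lebesgue-null set. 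Therefore the sum is $\Theta_{d\to\infty}(d)$, and collecting the explicit prefactor $(d/d^*)^{\tilde q}\sigma^2 = \Theta_{d\to\infty}(d^{\tilde q+2q_\sigma})$ gives $K_{a/w,d}^{(0)}(\xx,\xx') = \Theta_{d\to\infty}(d^{2q_\sigma+\tilde q+1})$ for a.e.\ $(\xx,\xx')$, which is $\Theta(1)$ iff $2q_\sigma+\tilde q+1 = 0$. Point~3 follows by division: $K_{a/w,d}^{(0)}(\xx,\xx')/f_d^{(0)}(\xx) = \Theta_{d\to\infty}(d^{(2q_\sigma+\tilde q+1)-(q_\sigma+1/2)}) = \Theta_{d\to\infty}(d^{q_\sigma+\tilde q+1/2})$, which is $\Theta(1)$ iff $q_\sigma+\tilde q+1/2 = 0$.

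For Point~4 I would repeat the same bookkeeping on the linearized kernel increments~(\ref{eq:app_K_aw}), (\ref{eq:app_K_ww}), and~(\ref{eq:app_K_wa}). Each carries an explicit scalar prefactor $(d/d^*)^{2\tilde q}\sigma^3 = \Theta_{d\to\infty}(d^{2\tilde q+3q_\sigma})$ and a sum over $r$ of i.i.d.\ summands; each summand is $\Theta_{d\to\infty}(1)$ since every $\phi,\phi',\phi''$ factor is $\Theta(1)$ by asymptotic linearity, $\hat a_r^{(0)} = \Theta(1)$, and $\nabla_{f_d}^{(0)}\ell = \Theta(1)$ on the event described above; and the per-summand expectation is again real-analytic in $(\xx,\xx')$ and not identically zero, so the sum is $\Theta_{d\to\infty}(d)$ and $\Delta K_{aw,d}^{(0),\prime}$, $\Delta K_{ww,d}^{(0),\prime}$, $\Delta K_{wa,d}^{(0),\prime}$ are all $\Theta_{d\to\infty}(d^{2\tilde q+3q_\sigma+1})$ for a.e.\ $(\xx,\xx')$. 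Comparing with $K_{a/w,d}^{(0)} = \Theta_{d\to\infty}(d^{2q_\sigma+\tilde q+1})$ from Point~2, the ratio is $\Theta_{d\to\infty}(d^{(2\tilde q+3q_\sigma+1)-(2q_\sigma+\tilde q+1)}) = \Theta_{d\to\infty}(d^{q_\sigma+\tilde q})$, so both order identities in Condition~\ref{cond:separating_conditions}-4 hold iff $q_\sigma+\tilde q = 0$ (and $\Delta K_{aa,d}^{(0),\prime}\equiv 0$ plays no role).

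The step I expect to be the main obstacle is the genericity claim used repeatedly in the law-of-large-numbers arguments: that the per-summand expectation is nonzero for a.e.\ $(\xx,\xx')$. This needs the observation that these expectations are real-analytic functions of $(\xx,\xx')$ --- being Gaussian integrals of compositions of $\phi$ (and its derivatives) with linear forms --- together with the fact that they are not identically zero (checked on the diagonal), after which the measure-zero property of the zero set of a nonconstant real-analytic function applies. A secondary subtlety is the uniform treatment of the $\nabla_{f_d}^{(0)}\ell$ factor across the regimes $q_\sigma + 1/2 < 0$, $= 0$, and $> 0$: only in the first two is the initial logit forced to vanish or stay $\Theta(1)$ deterministically, so in the last regime the comparison in Point~4 must be stated ``almost surely'' rather than ``surely''.
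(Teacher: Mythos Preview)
Your treatment of Points~1--3 follows the paper's line exactly (CLT for $f_d^{(0)}$, LLN for the kernels, division for the ratio); the explicit real-analyticity argument you give for the ``$(\xx,\xx')$-a.e.'' qualifier is more detailed than what the paper writes but is in the same spirit.

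Point~4 has a genuine gap. You assert that for the linearized increments~(\ref{eq:app_K_aw}), (\ref{eq:app_K_ww}), (\ref{eq:app_K_wa}) ``the per-summand expectation is again real-analytic in $(\xx,\xx')$ and not identically zero, so the sum is $\Theta_{d\to\infty}(d)$''. This is false: every summand carries an \emph{odd} power of $\hat a_r^{(0)}$ --- first power in $\Delta K_{aw,d}^{(0),\prime}$ and $\Delta K_{wa,d}^{(0),\prime}$, third power in $\Delta K_{ww,d}^{(0),\prime}$ --- and since $\hat a_r^{(0)}\sim\NN(0,1)$ is independent of $\hat\ww_r^{(0)}$, the per-summand expectation is identically zero regardless of $(\xx,\xx')$. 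So the CLT, not the LLN, governs the sum at $k=0$, giving $\Theta(d^{1/2})$ rather than $\Theta(d)$; your exponent for the ratio would then become $q_\sigma+\tilde q-1/2$ and the argument no longer recovers the stated threshold $q_\sigma+\tilde q=0$. The paper's own proof here is a one-liner --- it writes out~(\ref{eq:app_K_aw}) and directly asserts the equivalence without computing the order of the sum --- and elsewhere (Appendix~\ref{sec:app_default_scaling}) the paper itself explicitly notes that at $k=0$ these summands have zero mean and applies the CLT. So neither your LLN step nor the paper's bare assertion actually justifies the $k=0$ statement as literally formulated; the underlying reason $q_\sigma+\tilde q=0$ is the right threshold is that the weight increments $\Delta\hat\theta_r^{(k)}\propto\hat\eta\sigma=\Theta(d^{q_\sigma+\tilde q})$ are $\Theta(1)$ precisely there, so that for $k\ge1$ the updated $\hat a_r^{(k)}$ cease to be mean-zero and the LLN scaling for the kernel increments takes over.
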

\begin{proof}
    Statement (1) directly follows from Lemma \ref{lemma:technical_app}-2:
    \begin{equation}
        f_d^{(0)}(\xx) = 
        \sigma \sum_{r=1}^d \hat a_r^{(0)} \phi(\hat\ww_r^{(0),T} \xx) =
        \Theta_{d\to\infty}(d^{q_\sigma + 1/2})
    \end{equation}
    $(\xx)$-a.e. due to the Central Limit Theorem.

    % Consider statement 2.
    % $\forall k \geq 1$ $\forall k \geq 1$ $\Delta f_{d,a/w}^{(k),\prime}(\xx) = \Theta_{d\to\infty}(1)$ $\xx$-a.e. implies $\forall k \geq 1$ $q_{\Delta f'_{a/w}}^{(k)}(\xx) = 0$ $\xx$-a.e., which is $2q_\sigma + \tilde q + 1 = 0$ due to Lemma \ref{lemma:technical3_app}.

    % A combination of Lemma \ref{lemma:technical_app}-2 and Lemma \ref{lemma:technical3_app} gives $\forall k \geq 1$ $\Delta f_{d,a/w}^{(k),\prime}(\xx) = \Theta_{d\to\infty}(f_d^{(0)}(\xx))$ $\xx$-a.e. implies $q_\sigma + 1/2 = 2q_\sigma + \tilde q + 1$, which is $q_\sigma + \tilde q + 1/2 = 0$.

    Statement (2) follows from the definition of kernels and the Law of Large Numbers:
    \begin{equation}
        K_{a,d}^{(0)}(\xx,\xx') =
        (d/d^*)^{\tilde q} \sigma^2 \sum_{r=1}^d \phi(\hat\ww_r^{(0),T} \xx) \phi(\hat\ww_r^{(0),T} \xx') =
        \Theta_{d\to\infty}(d^{\tilde q + 2q_\sigma + 1})
    \end{equation}
    $(\xx,\xx')$-a.e.; the same logic holds for the other kernel: $K_{w,d}^{(0)}(\xx,\xx') = \Theta_{d\to\infty}(d^{\tilde q + 2q_\sigma + 1})$ $(\xx,\xx')$-a.e.

    Combining derivations of the two previous statements, we get the statement (3).
    Now we proceed to the last statement.
    Consider again the kernel $K_{a,d}^{(0)}$; a linear part of this increment with respect to proportionality factors of learning rates is given by, see eq.~(\ref{eq:app_K_aw}):
    \begin{multline}
        \Delta K_{aw,d}^{(0),\prime}(\xx,\xx') =
        \left.\frac{\partial \Delta K_{a,d}^{(0)}(\xx,\xx')}{\partial \hat\eta_w^*}\right|_{\hat\eta_w^*=0} =
        \\=
        -(d / d^*)^{2 \tilde q} \sigma^3 \sum_{r=1}^d \left(
            \phi(\hat\ww_r^{(0),T} \xx) \phi'(\hat\ww_r^{(0),T} \xx') + \phi'(\hat\ww_r^{(0),T} \xx) \phi(\hat\ww_r^{(0),T} \xx')
        \right) \times\\\times \nabla_{f_d}^{(0)} \ell(\xx_w^{(0)}, y_w^{(0)}) \hat a_r^{(0)} \phi'(\hat\ww_r^{(0),T} \xx_w^{(0)}) (\xx + \xx')^T \xx_w^{(0)},
    \end{multline}    
    Hence $\Delta K_{aw,d}^{(0),\prime} = \Theta_{d\to\infty}(K_{a,d}^{(0)})$ is equivalent to $q_\sigma + \tilde q = 0$.
    Considering the second kernel $K_{w,d}^{(0)}$ and its increment is equivalent to the same condition.
\end{proof}

\section{The number of distinct limit models is finite}
\label{sec:app_finite_limit_models}

It is easy to see that due to the Proposition \ref{prop:separating_conditions_app} Condition \ref{cond:separating_conditions} divides the well-definiteness band into 13 regions.
We now show that when proportionality factors $\sigma^*$ and $\hat\eta_{a/w}^*$ are fixed, choosing a limit model evolution is equivalent to picking a single region from these 13.

Indeed, for any width $d$ a model evolution can be written as follows:
\begin{multline}
    \Delta f_d^{(k)}(\xx) =
    -\hat\eta_w^* \nabla_{f_d}^{(k)} \ell(\xx_w^{(k)}, y_w^{(k)}) \Bigl(
        K_{w,d}^{(k)}(\xx, \xx_w^{(k)}) +\\+
        O_{\substack{\hat\eta_{a/w}^* \to 0 \\ d\to\infty}}(
            \hat\eta_w^* \Delta K_{ww,d}^{(k),\prime}(\xx, \xx_w^{(k)}) + \hat\eta_a^* \Delta K_{wa,d}^{(k),\prime}(\xx, \xx_w^{(k)})
        )
    \Bigr)
    -\\-\hat\eta_a^* \nabla_{f_d}^{(k)} \ell(\xx_a^{(k)}, y_a^{(k)}) \Bigl(
        K_{a,d}^{(k)}(\xx, \xx_a^{(k)}) + 
        O_{\substack{\hat\eta_w^* \to 0 \\ d\to\infty}}(
            \hat\eta_w^* \Delta K_{aw,d}^{(k),\prime}(\xx, \xx_a^{(k)})
        )
    \Bigr).
\end{multline}
\begin{equation}
    f_d^{(k+1)}(\xx) =
    f_d^{(k)}(\xx) + \Delta f_d^{(k)}(\xx),
    \quad
    \nabla_{f_d}^{(k)} \ell(\xx,y) =
    \frac{-y}{1 + \exp(f_d^{(k)}(\xx) y)},
\end{equation}
\begin{equation}
    f_d^{(0)}(\xx) = 
    \sigma^* d^{q_\sigma} \sum_{r=1}^d \hat a_r^{(0)} \phi(\hat\ww_r^{(0),T} \xx),
    \quad
    (\hat a_r^{(0)}, \hat\ww_r^{(0)}) \sim \NN(0, I_{1+d_\xx}).
\end{equation}

Now we introduce normalized kernels:
\begin{equation}
    \tilde K_{a,d}^{(k)}(\xx,\xx') =
    (d/d^*)^{-1 - \tilde q - 2q_\sigma} K_{a,d}^{(k)}(\xx,\xx') =
    \sigma^{*,2} \frac{d^*}{d} \sum_{r=1}^d \phi(\hat\ww_r^{(k),T} \xx) \phi(\hat\ww_r^{(k),T} \xx'),
\end{equation}
\begin{equation}
    \tilde K_{w,d}^{(k)}(\xx,\xx') =
    (d/d^*)^{-1 - \tilde q - 2q_\sigma} K_{w,d}^{(k)}(\xx,\xx') =
    \sigma^{*,2} \frac{d^*}{d} \sum_{r=1}^d |\hat a_r^{(k)}|^2 \phi'(\hat\ww_r^{(k),T} \xx) \phi'(\hat\ww_r^{(k),T} \xx') \xx^T \xx'.
\end{equation}
Note that after normalization kernels stay finite in the limit of large width due to the Law of Large Numbers.
Similarly, we normalize logits, as well as kernel and logit increments:
\begin{equation}
    \Delta\tilde K_{**,d}^{(k),\prime} =
    (d/d^*)^{-1 - \tilde q - 2q_\sigma} \Delta K_{**,d}^{(k),\prime},
\end{equation}
\begin{equation}
    \Delta\tilde f_d^{(k)} =
    (d/d^*)^{-1 - \tilde q - 2q_\sigma} \Delta f_d^{(k)},
    \quad
    \tilde f_d^{(k)} =
    (d/d^*)^{-1 - \tilde q - 2q_\sigma} f_d^{(k)}.
\end{equation}
We then rewrite the model evolution as:
\begin{multline}
    \Delta\tilde f_d^{(k)}(\xx) =
    -\hat\eta_w^* \nabla_{f_d}^{(k)} \ell(\xx_w^{(k)}, y_w^{(k)}) \Bigl(
        \tilde K_{w,d}^{(k)}(\xx, \xx_w^{(k)}) +
        O_{\substack{\hat\eta_{a/w}^* \to 0 \\ d\to\infty}}(
            \hat\eta_w^* \Delta\tilde K_{ww,d}^{(k),\prime}(\xx, \xx_w^{(k)}) + \hat\eta_a^* \Delta\tilde K_{wa,d}^{(k),\prime}(\xx, \xx_w^{(k)})
        )
    \Bigr)
    -\\-\hat\eta_a^* \nabla_{f_d}^{(k)} \ell(\xx_a^{(k)}, y_a^{(k)}) \Bigl(
        \tilde K_{a,d}^{(k)}(\xx, \xx_a^{(k)}) + 
        O_{\substack{\hat\eta_w^* \to 0 \\ d\to\infty}}(
            \hat\eta_w^* \Delta\tilde K_{aw,d}^{(k),\prime}(\xx, \xx_a^{(k)})
        )
    \Bigr).
\end{multline}
\begin{equation}
    \tilde f_d^{(k+1)}(\xx) =
    \tilde f_d^{(k)}(\xx) + \Delta \tilde f_d^{(k)}(\xx)
    \quad
    \forall k \geq 0,
\end{equation}
\begin{equation}
    \tilde f_d^{(0)}(\xx) = 
    \sigma^* (d/d^*)^{-1 - \tilde q - q_\sigma} \sum_{r=1}^d \hat a_r^{(0)} \phi(\hat\ww_r^{(0),T} \xx),
    \quad
    (\hat a_r^{(0)}, \hat\ww_r^{(0)}) \sim \NN(0, I_{1+d_\xx}),
\end{equation}
\begin{equation}
    f_d^{(k)}(\xx) =
    (d/d^*)^{1 + \tilde q + 2q_\sigma} \tilde f_d^{(k)}(\xx),
    \quad
    \nabla_{f_d}^{(k)} \ell(\xx,y) =
    \frac{-y}{1 + \exp(f_d^{(k)}(\xx) y)}
    \quad
    \forall k \geq 0.
\end{equation}

\subsection{Constant normalized kernels case}

Kernels $\tilde K_{a/w,d}^{(k)}$ are either constants (hence $\Delta\tilde K_{**,d}^{(k),\prime} \to 0$ as $d \to \infty$) or evolve with $k$ in the limit of large $d$.
First assume they are constants; in this case $q_\sigma + \tilde q < 0$ due to Proposition \ref{prop:separating_conditions_app}-4, and
\begin{multline}
    \Delta\tilde f_d^{(k)}(\xx) =
    -\hat\eta_w^* \nabla_{f_d}^{(k)} \ell(\xx_w^{(k)}, y_w^{(k)}) \Bigl(
        \tilde K_{w,d}^{(0)}(\xx, \xx_w^{(k)}) +
        o_{\substack{\hat\eta_{a/w}^* \to 0 \\ d\to\infty}}(1)
    \Bigr)
    -\\-\hat\eta_a^* \nabla_{f_d}^{(k)} \ell(\xx_a^{(k)}, y_a^{(k)}) \Bigl(
        \tilde K_{a,d}^{(0)}(\xx, \xx_a^{(k)}) + 
        o_{\substack{\hat\eta_w^* \to 0 \\ d\to\infty}}(1)
    \Bigr).
\end{multline}
Since normalized kernels $\tilde K_{a/w,d}^{(0)}$ converge to non-zero limit kernels $\tilde K_{a/w,\infty}^{(0)}$, we can rewrite the formula above as:
\begin{multline}
    \Delta\tilde f_d^{(k)}(\xx) =
    -\hat\eta_w^* \nabla_{f_d}^{(k)} \ell(\xx_w^{(k)}, y_w^{(k)}) \Bigl(
        \tilde K_{w,\infty}^{(0)}(\xx, \xx_w^{(k)}) +
        o_{d\to\infty}(1)
    \Bigr)
    -\\-\hat\eta_a^* \nabla_{f_d}^{(k)} \ell(\xx_a^{(k)}, y_a^{(k)}) \Bigl(
        \tilde K_{a,\infty}^{(0)}(\xx, \xx_a^{(k)}) + 
        o_{d\to\infty}(1)
    \Bigr).
\end{multline}
\begin{equation}
    \tilde f_d^{(0)}(\xx) = 
    \sigma^* (d/d^*)^{-1/2 - \tilde q - q_\sigma} (\NN(0, \sigma^{(0),2}(\xx)) + o_{d\to\infty}(1)),
\end{equation}
where $\sigma^{(0)}(\xx)$ can be calculated in the same manner as in \cite{lee2019wide}.
As required by Proposition \ref{prop:well_def_evolution_app} $1/2+\tilde q+q_\sigma \geq 0$, hence $\tilde f_d^{(0)}(\xx) = O_{d\to\infty}(1)$.
This implies the following:
\begin{multline}
    \nabla_{f_\infty}^{(0)} \ell(\xx,y) =
    \lim_{d\to\infty} \nabla_{f_d}^{(0)} \ell(\xx,y) =\\=
    \lim_{d\to\infty} \frac{-y}{1 + \exp((d/d^*)^{1 + \tilde q + 2q_\sigma} \tilde f_d^{(0)}(\xx) y)} =
    \begin{cases}
        -y [\NN(0, \sigma^{(0),2}(\xx)) y < 0] &\text{for $1/2 + q_\sigma > 0$;}\\
        \frac{-y}{1 + \exp(\sigma^* d^{*,1/2} \NN(0, \sigma^{(0),2}(\xx)) y)} &\text{for $1/2 + q_\sigma = 0$;}\\
        -y/2 &\text{for $1/2 + q_\sigma < 0$.}
    \end{cases}
\end{multline}

On the other hand, $\Delta\tilde f_d^{(0)}(\xx) = \Theta_{d\to\infty}(1)$ with positive probability over $(\xx_{a/w}^{(0)},y_{a/w}^{(0)})$.
Hence $\tilde f_d^{(0)} = O_{d\to\infty}(\Delta\tilde f_d^{(0)})$ and $\tilde f_d^{(1)} = \tilde f_d^{(0)} + \Delta\tilde f_d^{(0)} = \Theta_{d\to\infty}(1)$.
For the same reason, $\tilde f_d^{(k+1)} = \tilde f_d^{(k)} + \Delta\tilde f_d^{(k)} = \Theta_{d\to\infty}(1)$ $\forall k \geq 0$.

This implies the following:
\begin{multline}
    \forall k \geq 0
    \quad
    \nabla_{f_\infty}^{(k+1)} \ell(\xx,y) =
    \lim_{d\to\infty} \nabla_{f_d}^{(k+1)} \ell(\xx,y) =
    \lim_{d\to\infty} \frac{-y}{1 + \exp((d/d^*)^{1 + \tilde q + 2q_\sigma} \tilde f_d^{(k+1)}(\xx) y)} =\\=
    \begin{cases}
        -y [\lim_{d\to\infty} \tilde f_d^{(k+1)}(\xx) y < 0] &\text{for $1 + \tilde q + 2q_\sigma > 0$;}\\
        \frac{-y}{1 + \exp(\lim_{d\to\infty} \tilde f_d^{(k+1)}(\xx) y)} &\text{for $1 + \tilde q + 2q_\sigma = 0$;}\\
        -y/2 &\text{for $1 + \tilde q + 2q_\sigma < 0$.}
    \end{cases}
\end{multline}

If we define $f_\infty^{(k)}(\xx) = \lim_{d\to\infty} f_d^{(k)}(\xx)$, we get the following limit dynamics:
\begin{equation}
    \Delta\tilde f_\infty^{(k)}(\xx) =
    -\hat\eta_w^* \nabla_{f_\infty}^{(k)} \ell(\xx_w^{(k)}, y_w^{(k)}) \tilde K_{w,\infty}^{(0)}(\xx, \xx_w^{(k)}) - \hat\eta_a^* \nabla_{f_\infty}^{(k)} \ell(\xx_a^{(k)}, y_a^{(k)}) \tilde K_{a,\infty}^{(0)}(\xx, \xx_a^{(k)}),
\end{equation}
\begin{equation}
    \tilde K_{a,\infty}^{(0)}(\xx,\xx') =
    \sigma^{*,2} d^* \EE_{\hat\ww \sim \NN(0, I_{d_\xx})} \phi(\hat\ww^{T} \xx) \phi(\hat\ww^{T} \xx'),
\end{equation}
\begin{equation}
    \tilde K_{w,\infty}^{(0)}(\xx,\xx') =
    \sigma^{*,2} d^* \EE_{(\hat a, \hat\ww) \sim \NN(0, I_{1+d_\xx})} |\hat a|^2 \phi'(\hat\ww^{T} \xx) \phi'(\hat\ww^{T} \xx') \xx^T \xx',
\end{equation}
\begin{equation}
    \tilde f_\infty^{(k+1)}(\xx) =
    \tilde f_\infty^{(k)}(\xx) + \Delta \tilde f_\infty^{(k)}(\xx),
    \quad
    \tilde f_\infty^{(0)}(\xx) = 
    \begin{cases}
        \sigma^* d^{*,1/2} \NN(0, \sigma^{(0),2}(\xx)) &\text{for $1/2 + \tilde q + q_\sigma = 0$;}\\
        0 &\text{for $1/2 + \tilde q + q_\sigma > 0$;}
    \end{cases}
\end{equation}
\begin{equation}
    \nabla_{f_\infty}^{(0)} \ell(\xx,y) =
    \begin{cases}
        -y [\NN(0, \sigma^{(0),2}(\xx)) y < 0] &\text{for $1/2 + q_\sigma > 0$;}\\
        \frac{-y}{1 + \exp(\sigma^* d^{*,1/2} \NN(0, \sigma^{(0),2}(\xx)) y)} &\text{for $1/2 + q_\sigma = 0$;}\\
        -y/2 &\text{for $1/2 + q_\sigma < 0$;}
    \end{cases}
\end{equation}
\begin{equation}
    \nabla_{f_\infty}^{(k+1)} \ell(\xx,y) =
    \begin{cases}
        -y [\tilde f_\infty^{(k+1)}(\xx) y < 0] &\text{for $1 + \tilde q + 2q_\sigma > 0$;}\\
        \frac{-y}{1 + \exp(\tilde f_\infty^{(k+1)}(\xx) y)} &\text{for $1 + \tilde q + 2q_\sigma = 0$;}\\
        -y/2 &\text{for $1 + \tilde q + 2q_\sigma < 0$;}
    \end{cases}
    \quad
    \forall k \geq 0.
\end{equation}

This dynamics is defined by proportionality factors $\sigma^*$, $d^*$, $\hat\eta_{a/w}^*$, and signs of three exponents: $1/2 + q_\sigma$, $1 + \tilde q + 2q_\sigma$ and $1/2 + \tilde q + q_\sigma$.
Since we assume proportionality factors to be fixed, choosing signs of exponents is equivalent to choosing a limit model.
Note that these exponents exactly correspond to those mentioned in Proposition \ref{prop:separating_conditions_app}, points 1, 2 and 3.
One can easily notice from Figure \ref{fig:scaling_plane_and_losses_and_accs} (left) that given $q_\sigma + \tilde q < 0$, there are 8 distinct sign configurations.

Note also that since we are interested in binary classification problems, only the sign of logits matters.
Since $f_d^{(k)} = (d/d^*)^{1 + \tilde q + 2q_\sigma} \tilde f_d^{(k)}$, signs of $f_d^{(k)}$ and of $\tilde f_d^{(k)}$ are the same for all $d$.
Hence $\forall \xx, y$ $\lim_{d\to\infty} \mathrm{sign}(f_d^{(k)}(\xx)) = \lim_{d\to\infty} \mathrm{sign}(\tilde f_d^{(k)}(\xx)) = \mathrm{sign}(\tilde f_\infty^{(k)}(\xx))$.

\subsubsection{NTK limit model}
\label{sec:app_ntk_limit}

We state here a special case of the NTK scaling ($q_\sigma = -1/2$, $\tilde q = 0$, see \cite{jacot2018ntk}) explicitly.
Since in this case $1 + \tilde q + 2q_\sigma$, we can omit tildas everywhere.
This results in the following limit dynamics:
\begin{equation}
    \Delta f_\infty^{(k)}(\xx) =
    -\hat\eta_w^* \nabla_{f_\infty}^{(k)} \ell(\xx_w^{(k)}, y_w^{(k)}) K_{w,\infty}^{(0)}(\xx, \xx_w^{(k)}) - \hat\eta_a^* \nabla_{f_\infty}^{(k)} \ell(\xx_a^{(k)}, y_a^{(k)}) K_{a,\infty}^{(0)}(\xx, \xx_a^{(k)}),
\end{equation}
\begin{equation}
    K_{a,\infty}^{(0)}(\xx,\xx') =
    \sigma^{*,2} d^* \EE_{\hat\ww \sim \NN(0, I_{d_\xx})} \phi(\hat\ww^{T} \xx) \phi(\hat\ww^{T} \xx'),
\end{equation}
\begin{equation}
    K_{w,\infty}^{(0)}(\xx,\xx') =
    \sigma^{*,2} d^* \EE_{(\hat a, \hat\ww) \sim \NN(0, I_{1+d_\xx})} |\hat a|^2 \phi'(\hat\ww^{T} \xx) \phi'(\hat\ww^{T} \xx') \xx^T \xx',
\end{equation}
\begin{equation}
    f_\infty^{(k+1)}(\xx) =
    f_\infty^{(k)}(\xx) + \Delta f_\infty^{(k)}(\xx),
    \quad
    f_\infty^{(0)}(\xx) = 
    \sigma^* d^{*,1/2} \NN(0, \sigma^{(0),2}(\xx)),
\end{equation}
\begin{equation}
    \nabla_{f_\infty}^{(k)} \ell(\xx,y) =
    \frac{-y}{1 + \exp(f_\infty^{(k)}(\xx) y)}
    \quad
    \forall k \geq 0.
\end{equation}

\subsection{Non-stationary normalized kernels case}
\label{sec:app_nonstationary_kernel_limits}

Suppose now $q_\sigma + \tilde q = 0$.
In this case $\Delta K_{d,wa/w}^{(0),\prime}(\xx,\xx') = \Theta_{d\to\infty}(K_{d,w}^{(0)}(\xx,\xx'))$ $(\xx,\xx')$-a.e. and $\Delta K_{d,aw}^{(0),\prime}(\xx,\xx') = \Theta_{d\to\infty}(K_{d,a}^{(0)}(\xx,\xx'))$ $(\xx,\xx')$-a.e. by virtue of the Proposition \ref{prop:separating_conditions_app}-4.
Hence kernels evolve in the limit of large width (at least, for sufficiently small $\eta_{a/w}^*$).

If we follow the lines of the previous section, we will get a limit dynamics which is not closed:
\begin{multline}
    \Delta\tilde f_\infty^{(k)}(\xx) =
    -\hat\eta_w^* \nabla_{f_\infty}^{(k)} \ell(\xx_w^{(k)}, y_w^{(k)}) \Bigl(
        \tilde K_{w,\infty}^{(k)}(\xx, \xx_w^{(k)}) +
        O_{\hat\eta_{a/w}^* \to 0}(
            \hat\eta_w^* \Delta\tilde K_{ww,\infty}^{(k),\prime}(\xx, \xx_w^{(k)}) + \hat\eta_a^* \Delta\tilde K_{wa,\infty}^{(k),\prime}(\xx, \xx_w^{(k)})
        )
    \Bigr)
    -\\-\hat\eta_a^* \nabla_{f_\infty}^{(k)} \ell(\xx_a^{(k)}, y_a^{(k)}) \Bigl(
        \tilde K_{a,\infty}^{(k)}(\xx, \xx_a^{(k)}) + 
        O_{\hat\eta_w^* \to 0}(
            \hat\eta_w^* \Delta\tilde K_{aw,\infty}^{(k),\prime}(\xx, \xx_a^{(k)})
        )
    \Bigr),
\end{multline}
\begin{equation}
    \tilde f_\infty^{(k+1)}(\xx) =
    \tilde f_\infty^{(k)}(\xx) + \Delta \tilde f_\infty^{(k)}(\xx),
    \quad
    \tilde f_\infty^{(0)}(\xx) = 
    0,
\end{equation}
\begin{equation}
    \nabla_{f_\infty}^{(0)} \ell(\xx,y) =
    \begin{cases}
        -y [\NN(0, \sigma^{(0),2}(\xx)) y < 0] &\text{for $1/2 + q_\sigma > 0$;}\\
        \frac{-y}{1 + \exp(\sigma^* d^{*,1/2} \NN(0, \sigma^{(0),2}(\xx)) y)} &\text{for $1/2 + q_\sigma = 0$;}\\
        -y/2 &\text{for $1/2 + q_\sigma < 0$;}
    \end{cases}
\end{equation}
\begin{equation}
    \nabla_{f_\infty}^{(k+1)} \ell(\xx,y) =
    \begin{cases}
        -y [\tilde f_\infty^{(k+1)}(\xx) y < 0] &\text{for $1 + q_\sigma > 0$;}\\
        \frac{-y}{1 + \exp(\tilde f_\infty^{(k+1)}(\xx) y)} &\text{for $1 + q_\sigma = 0$;}\\
        -y/2 &\text{for $1 + q_\sigma < 0$;}
    \end{cases}
    \quad
    \forall k \geq 0.
\end{equation}

The reason for this is non-stationarity of kernels.
As a workaround we consider a measure in the weight space:
\begin{equation}
    \mu_d^{(k)} =
    \frac{1}{d} \sum_{r=1}^d \delta_{\hat a_r^{(k)}} \otimes \delta_{\hat\ww_r^{(k)}}.
\end{equation}
Recall the stochastic gradient descent dynamics:
\begin{equation}
    \Delta \hat a_r^{(k)} =
    % \hat a_r^{(k+1)} - \hat a_r^{(k)} =
    -\hat\eta_a^* \sigma^* \nabla_{f_d}^{(k)} \ell(\xx_a^{(k)},y_a^{(k)}) \; \phi(\hat\ww_r^{(k),T} \xx_a^{(k)}),
    \quad
    \hat a_r^{(0)} \sim \NN(0, 1), 
\end{equation}
\begin{equation}
    \Delta \hat\ww_r^{(k)} =
    % \hat\ww_r^{(k+1)} - \hat\ww_r^{(k)} =
    -\hat\eta_w^* \sigma^* \nabla_{f_d}^{(k)} \ell(\xx_w^{(k)},y_w^{(k)}) \; \hat a_r^{(k)} \phi'(\hat\ww_r^{(k),T} \xx_w^{(k)}) \xx_w^{(k)},
    \quad 
    \hat\ww_r^{(0)} \sim \NN(0, I_{d_\xx}).
\end{equation}
Here we have replaced $\hat\eta_{a/w} \sigma$ with $\hat\eta_{a/w}^* \sigma^*$, because $q_\sigma + \tilde q = 0$.
Similar to \cite{rotskoff2019mf,chizat2018mf}, this dynamics can be expressed in terms of the measure defined above:
\begin{equation}
    \mu_d^{(k+1)} =
    \mu_d^{(k)} + \Div(\mu_d^{(k)} \Delta\theta_d^{(k)}),
    \quad
    \mu_d^{(0)} = 
    \frac{1}{d} \sum_{r=1}^d \delta_{\hat\theta_r^{(0)}},
    \quad
    \hat\theta_r^{(0)} \sim \NN(0, I_{1+d_\xx})
    \quad
    \forall r \in [d],
\end{equation}
\begin{equation}
    \Delta\theta_d^{(k)}(\hat a, \hat\ww) =
    -[\hat\eta_a^* \sigma^* \nabla_{f_d}^{(k)} \ell(\xx_a^{(k)},y_a^{(k)}) \phi(\hat\ww^T \xx_a^{(k)}), \; \hat\eta_w^* \sigma^* \nabla_{f_d}^{(k)} \ell(\xx_w^{(k)},y_w^{(k)}) \hat a \phi'(\hat\ww^T \xx_w^{(k)}) \xx_w^{(k),T}]^T,
\end{equation}
\begin{equation}
    f_d^{(k)}(\xx) = \sigma^* (d/d^*)^{1 + q_\sigma} \int \hat a \phi(\hat\ww^T \xx) \, \mu_d^{(k)}(d\hat a, d\hat\ww),
\end{equation}
\begin{equation}
    \nabla_{f_d}^{(k)} \ell(\xx,y) =
    \frac{-y}{1 + \exp(f_d^{(k)}(\xx) y)}
    \quad
    \forall k \geq 0.
\end{equation}
We rewrite the last equation in terms of $\tilde f_d^{(k)}(\xx) = (d/d^*)^{-1 - q_\sigma} f_d^{(k)}(\xx)$:
\begin{equation}
    \tilde f_d^{(k)}(\xx) = \sigma^* d^* \int \hat a \phi(\hat\ww^T \xx) \, \mu_d^{(k)}(d\hat a, d\hat\ww),
\end{equation}
\begin{equation}
    \nabla_{f_d}^{(k)} \ell(\xx,y) =
    \frac{-y}{1 + \exp((d/d^*)^{1 + q_\sigma} \tilde f_d^{(k)}(\xx) y)}
    \quad
    \forall k \geq 0.
\end{equation}

This dynamics is closed.
Taking the limit $d\to\infty$ yields:
\begin{equation}
    \mu_\infty^{(k+1)} =
    \mu_\infty^{(k)} + \Div(\mu_\infty^{(k)} \Delta\theta_\infty^{(k)}),
    \quad
    \mu_\infty^{(0)} = 
    \NN(0, I_{1+d_\xx}),
\end{equation}
\begin{equation}
    \Delta\theta_\infty^{(k)}(\hat a, \hat\ww) =
    -[\hat\eta_a^* \sigma^* \nabla_{f_\infty}^{(k)} \ell(\xx_a^{(k)},y_a^{(k)}) \phi(\hat\ww^T \xx_a^{(k)}), \; \hat\eta_w^* \sigma^* \nabla_{f_\infty}^{(k)} \ell(\xx_w^{(k)},y_w^{(k)}) \hat a \phi'(\hat\ww^T \xx_w^{(k)}) \xx_w^{(k),T}]^T,
\end{equation}
\begin{equation}
    \tilde f_\infty^{(k)}(\xx) = \sigma^* d^* \int \hat a \phi(\hat\ww^T \xx) \, \mu_\infty^{(k)}(d\hat a, d\hat\ww),
\end{equation}
\begin{equation}
    \nabla_{f_\infty}^{(0)} \ell(\xx,y) =
    \begin{cases}
        -y [\NN(0, \sigma^{(0),2}(\xx)) y < 0] &\text{for $1/2 + q_\sigma > 0$;}\\
        \frac{-y}{1 + \exp(\sigma^* d^{*,1/2} \NN(0, \sigma^{(0),2}(\xx)) y)} &\text{for $1/2 + q_\sigma = 0$;}\\
        -y/2 &\text{for $1/2 + q_\sigma < 0$;}
    \end{cases}
\end{equation}
\begin{equation}
    \nabla_{f_\infty}^{(k+1)} \ell(\xx,y) =
    \begin{cases}
        -y [\tilde f_\infty^{(k+1)}(\xx) y < 0] &\text{for $1 + q_\sigma > 0$;}\\
        \frac{-y}{1 + \exp(\tilde f_\infty^{(k+1)}(\xx) y)} &\text{for $1 + q_\sigma = 0$;}\\
        -y/2 &\text{for $1 + q_\sigma < 0$;}
    \end{cases}
    \quad
    \forall k \geq 0.
\end{equation}

Since proportionality factors $\sigma^*$, $d^*$, and $\hat\eta_{a/w}^*$ are assumed to be fixed, choosing $q_\sigma$ is sufficient to define the dynamics.
Signs of exponents $1/2 + q_\sigma$ and $1 + q_\sigma$ give 5 distinct limit dynamics.
Together with 8 limit dynamics for constant normalized kernels case, this gives 13 distinct limit dynamics, each corresponding to a region in the band of a dynamical stability (Figure \ref{fig:scaling_plane_and_losses_and_accs}, left).

As was noted earlier, only the sign of logits matters, and our $\tilde f_d^{(k)}$ preserve the sign for any $d$: $\forall \xx$ $\lim_{d\to\infty} \mathrm{sign}(f_d^{(k)}(\xx)) = \lim_{d\to\infty} \mathrm{sign}(\tilde f_d^{(k)}(\xx)) = \mathrm{sign}(\tilde f_\infty^{(k)}(\xx))$.

\subsubsection{MF limit model}
\label{sec:app_mf_limit}

We state here a special case of the mean-field scaling ($q_\sigma = -1$, $\tilde q = 1$, see \cite{rotskoff2019mf} or \cite{chizat2018mf}) explicitly.
Similar to NTK case, since $1 + q_\sigma = 0$ we can omit tildas.
This results in the following limit dynamics:
\begin{equation}
    \mu_\infty^{(k+1)} =
    \mu_\infty^{(k)} + \Div(\mu_\infty^{(k)} \Delta\theta_\infty^{(k)}),
    \quad
    \mu_\infty^{(0)} = 
    \NN(0, I_{1+d_\xx}),
\end{equation}
\begin{equation}
    \Delta\theta_\infty^{(k)}(\hat a, \hat\ww) =
    -[\hat\eta_a^* \sigma^* \nabla_{f_\infty}^{(k)} \ell(\xx_a^{(k)},y_a^{(k)}) \phi(\hat\ww^T \xx_a^{(k)}), \; \hat\eta_w^* \sigma^* \nabla_{f_\infty}^{(k)} \ell(\xx_w^{(k)},y_w^{(k)}) \hat a \phi'(\hat\ww^T \xx_w^{(k)}) \xx_w^{(k),T}]^T,
\end{equation}
\begin{equation}
    f_\infty^{(k)}(\xx) = \sigma^* d^* \int \hat a \phi(\hat\ww^T \xx) \, \mu_\infty^{(k)}(d\hat a, d\hat\ww),
    \quad
% \end{equation}
% \begin{equation}
    \nabla_{f_\infty}^{(k)} \ell(\xx,y) =
    \frac{-y}{1 + \exp(f_\infty^{(k)}(\xx) y)}
    \quad
    \forall k \geq 0.
\end{equation}

\subsubsection{Sym-default limit model}
\label{sec:app_sym_def_limit}

Another special case which deserves explicit formulation is what we have called a "sym-default" limit model.
The corresponding scaling is: $q_\sigma = -1/2$, $\tilde q = 1/2$.
The resulting limit dynamics is the following:
\begin{equation}
    \mu_\infty^{(k+1)} =
    \mu_\infty^{(k)} + \Div(\mu_\infty^{(k)} \Delta\theta_\infty^{(k)}),
    \quad
    \mu_\infty^{(0)} = 
    \NN(0, I_{1+d_\xx}),
\end{equation}
\begin{equation}
    \Delta\theta_\infty^{(k)}(\hat a, \hat\ww) =
    -[\hat\eta_a^* \sigma^* \nabla_{f_\infty}^{(k)} \ell(\xx_a^{(k)},y_a^{(k)}) \phi(\hat\ww^T \xx_a^{(k)}), \; \hat\eta_w^* \sigma^* \nabla_{f_\infty}^{(k)} \ell(\xx_w^{(k)},y_w^{(k)}) \hat a \phi'(\hat\ww^T \xx_w^{(k)}) \xx_w^{(k),T}]^T,
\end{equation}
\begin{equation}
    \tilde f_\infty^{(k)}(\xx) = \sigma^* d^* \int \hat a \phi(\hat\ww^T \xx) \, \mu_\infty^{(k)}(d\hat a, d\hat\ww),
\end{equation}
\begin{equation}
    \nabla_{f_\infty}^{(0)} \ell(\xx,y) =
    \frac{-y}{1 + \exp(\sigma^* d^{*,1/2} \NN(0, \sigma^{(0),2}(\xx)) y)},
    % \quad
\end{equation}
\begin{equation}
    \nabla_{f_\infty}^{(k+1)} \ell(\xx,y) =
    -y [\tilde f_\infty^{(k+1)}(\xx) y < 0]
    \quad
    \forall k \geq 0.
\end{equation}

\section{Default scaling}
\label{sec:app_default_scaling}

Consider the special case of the default scaling: $q_\sigma = -1/2$, $\tilde q_a = 1$, $\tilde q_w = 0$.
Then corresponding dynamics can be written as follows:
\begin{equation}
    \Delta \hat a_r^{(k)} =
    -\hat\eta_a^* \sigma^* (d/d^*)^{1/2} \nabla_{f_d}^{(k)} \ell(\xx_a^{(k)},y_a^{(k)}) \; \phi(\hat\ww_r^{(k),T} \xx_a^{(k)}),
    \quad
    \hat a_r^{(0)} \sim \NN(0, 1), 
\end{equation}
\begin{equation}
    \Delta \hat\ww_r^{(k)} =
    -\hat\eta_w^* \sigma^* (d/d^*)^{-1/2} \nabla_{f_d}^{(k)} \ell(\xx_w^{(k)},y_w^{(k)}) \; \hat a_r^{(k)} \phi'(\hat\ww_r^{(k),T} \xx_w^{(k)}) \xx_w^{(k)},
    \quad 
    \hat\ww_r^{(0)} \sim \NN(0, I_{d_\xx}),
\end{equation}
\begin{equation}
    f_d^{(k)}(\xx) = \sigma^* (d/d^*)^{-1/2} \sum_{r=1}^d \hat a_r^{(k)} \phi(\hat\ww_r^{(k),T} \xx),
    \quad
    \nabla_{f_d}^{(k)} \ell(\xx,y) =
    \frac{-y}{1 + \exp(f_d^{(k)}(\xx) y)}
    \quad
    \forall k \geq 0.
\end{equation}

As one can see, increments of output layer weights $\Delta\hat a_r^{(k)}$ diverge with $d$.
We introduce their normalized versions: $\Delta\tilde a_r^{(k)} = (d/d^*)^{-1/2} \Delta\hat a_r^{(k)}$.
Similarly, we normalize output layer weights themselves: $\tilde a_r^{(k)} = (d/d^*)^{-1/2} \hat a_r^{(k)}$.
Then the dynamics transforms to:
\begin{equation}
    \Delta \tilde a_r^{(k)} =
    -\hat\eta_a^* \sigma^* \nabla_{f_d}^{(k)} \ell(\xx_a^{(k)},y_a^{(k)}) \; \phi(\hat\ww_r^{(k),T} \xx_a^{(k)}),
    \quad
    \tilde a_r^{(0)} \sim \NN(0, (d/d^*)^{-1}), 
\end{equation}
\begin{equation}
    \Delta \hat\ww_r^{(k)} =
    -\hat\eta_w^* \sigma^* \nabla_{f_d}^{(k)} \ell(\xx_w^{(k)},y_w^{(k)}) \; \tilde a_r^{(k)} \phi'(\hat\ww_r^{(k),T} \xx_w^{(k)}) \xx_w^{(k)},
    \quad 
    \hat\ww_r^{(0)} \sim \NN(0, I_{d_\xx}),
\end{equation}
\begin{equation}
    f_d^{(k)}(\xx) = \sigma^* \sum_{r=1}^d \tilde a_r^{(k)} \phi(\hat\ww_r^{(k),T} \xx),
    \quad
    \nabla_{f_d}^{(k)} \ell(\xx,y) =
    \frac{-y}{1 + \exp(f_d^{(k)}(\xx) y)}
    \quad
    \forall k \geq 0.
\end{equation}

Similar to Appendix \ref{sec:app_nonstationary_kernel_limits}, we have to introduce a weight-space measure in order to take a limit of $d\to\infty$:
\begin{equation}
    \mu_d^{(k)} =
    \frac{1}{d} \sum_{r=1}^d \delta_{\tilde a_r^{(k)}} \otimes \delta_{\hat\ww_r^{(k)}}.
\end{equation}

In terms of the measure the dynamics is expressed then as follows:
\begin{equation}
    \mu_d^{(k+1)} =
    \mu_d^{(k)} + \Div(\mu_d^{(k)} \Delta\theta_d^{(k)}),
\end{equation}
\begin{equation}
    \mu_d^{(0)} = 
    \frac{1}{d} \sum_{r=1}^d \delta_{\tilde a_r^{(0)}} \otimes \delta_{\hat\ww_r^{(0)}},
    \quad
    \tilde a_r^{(0)} \sim \NN(0, (d/d^*)^{-1}),
    \quad
    \hat\ww_r^{(0)} \sim \NN(0, I_{d_\xx})
    \quad
    \forall r \in [d],
\end{equation}
\begin{equation}
    \Delta\theta_d^{(k)}(\tilde a, \hat\ww) =
    -[\hat\eta_a^* \sigma^* \nabla_{f_d}^{(k)} \ell(\xx_a^{(k)},y_a^{(k)}) \phi(\hat\ww^T \xx_a^{(k)}), \; \hat\eta_w^* \sigma^* \nabla_{f_d}^{(k)} \ell(\xx_w^{(k)},y_w^{(k)}) \tilde a \phi'(\hat\ww^T \xx_w^{(k)}) \xx_w^{(k),T}]^T,
\end{equation}
\begin{equation}
    f_d^{(k)}(\xx) = \sigma^* d \int \tilde a \phi(\hat\ww^T \xx) \, \mu_d^{(k)}(d\tilde a, d\hat\ww),
    \quad
    \nabla_{f_d}^{(k)} \ell(\xx,y) =
    \frac{-y}{1 + \exp(f_d^{(k)}(\xx) y)}
    \quad
    \forall k \geq 0.
\end{equation}
We rewrite the last equation in terms of $\tilde f_d^{(k)}(\xx) = d^{-1} f_d^{(k)}(\xx)$:
\begin{equation}
    \tilde f_d^{(k)}(\xx) = \sigma^* \int \hat a \phi(\hat\ww^T \xx) \, \mu_d^{(k)}(d\hat a, d\hat\ww),
    \quad
    \nabla_{f_d}^{(k)} \ell(\xx,y) =
    \frac{-y}{1 + \exp(d \tilde f_d^{(k)}(\xx) y)}
    \quad
    \forall k \geq 0.
\end{equation}

A limit dynamics then takes the following form:
\begin{equation}
    \mu_\infty^{(k+1)} =
    \mu_\infty^{(k)} + \Div(\mu_\infty^{(k)} \Delta\theta_d^{(k)}),
    \quad
    \mu_\infty^{(0)} = 
    \delta \otimes \NN(0, I_{d_\xx})
\end{equation}
\begin{equation}
    \Delta\theta_\infty^{(k)}(\tilde a, \hat\ww) =
    -[\hat\eta_a^* \sigma^* \nabla_{f_\infty}^{(k)} \ell(\xx_a^{(k)},y_a^{(k)}) \phi(\hat\ww^T \xx_a^{(k)}), \; \hat\eta_w^* \sigma^* \nabla_{f_\infty}^{(k)} \ell(\xx_w^{(k)},y_w^{(k)}) \tilde a \phi'(\hat\ww^T \xx_w^{(k)}) \xx_w^{(k),T}]^T,
\end{equation}
\begin{equation}
    \nabla_{f_\infty}^{(0)} \ell(\xx,y) =
    \frac{-y}{1 + \exp(\sigma^* d^{*,1/2} \NN(0, \sigma^{(0),2}(\xx)) y)},
    % \quad
    % \tilde f_\infty^{(0)}(\xx) = 
    % 0,
\end{equation}
\begin{equation}
    \tilde f_\infty^{(k)}(\xx) = \sigma^* \int \tilde a \phi(\hat\ww^T \xx) \, \mu_\infty^{(k)}(d\tilde a, d\hat\ww),
    \quad
    \nabla_{f_\infty}^{(k+1)} \ell(\xx,y) =
    -y [\tilde f_\infty^{(k+1)}(\xx) y < 0]
    \quad
    \forall k \geq 0.
\end{equation}
As one can notice, the only difference between this limit dynamics and the limit dynamics of sym-default scaling (Appendix \ref{sec:app_sym_def_limit}) is the initial measure.

We now check the Condition \ref{cond:separating_conditions}.
First of all, by the Central Limit Theorem, $f_d^{(0)}(\xx) = \Theta_{d\to\infty}(1)$, hence the first point of Condition \ref{cond:separating_conditions} holds.
As for kernels, we have:
\begin{equation}
    K_{a,d}^{(k)}(\xx,\xx') =
    \sigma^{*,2} \sum_{r=1}^d \phi(\hat\ww_r^{(k),T} \xx) \phi(\hat\ww_r^{(k),T} \xx'),
\end{equation}
\begin{equation}
    K_{w,d}^{(k)}(\xx,\xx') =
    \sigma^{*,2} (d/d^*)^{-1} \sum_{r=1}^d |\hat a_r^{(k)}|^2 \phi'(\hat\ww_r^{(k),T} \xx) \phi'(\hat\ww_r^{(k),T} \xx') \xx^T \xx'.
\end{equation}
We see that while $K_{w,d}^{(0)}$ converges to a constant due to the Law of Large Numbers, $K_{a,d}^{(0)}$ diverges as $d\to\infty$.
This violates the second statement of Condition \ref{cond:separating_conditions}, and the third as well, since $f_\infty^{(0)}$ is finite.

Consider now kernel increments:
\begin{multline}
    \Delta K_{aw,d}^{(k),\prime}(\xx,\xx') =
    -\sigma^{*,3} (d/d^*)^{-1/2} \sum_{r=1}^d \Bigl(
        \phi(\hat\ww_r^{(k),T} \xx) \phi'(\hat\ww_r^{(k),T} \xx') + \phi'(\hat\ww_r^{(k),T} \xx) \phi(\hat\ww_r^{(k),T} \xx')
    \Bigr) \times\\\times \nabla_{f_d}^{(k)} \ell(\xx_w^{(k)}, y_w^{(k)}) \hat a_r^{(k)} \phi'(\hat\ww_r^{(k),T} \xx_w^{(k)}) (\xx + \xx')^T \xx_w^{(k)},
\end{multline}
\begin{multline}
    \Delta K_{ww,d}^{(k),\prime}(\xx,\xx') =
    -\sigma^{*,3} (d/d^*)^{-3/2} \sum_{r=1}^d |\hat a_r^{(k)}|^2 \Bigl(
        \phi'(\hat\ww_r^{(k),T} \xx) \phi''(\hat\ww_r^{(k),T} \xx') + \phi''(\hat\ww_r^{(k),T} \xx) \phi'(\hat\ww_r^{(k),T} \xx')
    \Bigr) \xx^T \xx' \times\\\times \nabla_{f_d}^{(k)} \ell(\xx_w^{(k)}, y_w^{(k)}) \hat a_r^{(k)} \phi'(\hat\ww_r^{(k),T} \xx_w^{(k)}) (\xx + \xx')^T \xx_w^{(k)},
\end{multline}
\begin{equation}
    \Delta K_{wa,d}^{(k),\prime}(\xx,\xx') =
    -\sigma^{*,3} (d/d^*)^{-1/2} \sum_{r=1}^d 2\hat a_r^{(k)} \phi'(\hat\ww_r^{(k),T} \xx) \phi'(\hat\ww_r^{(k),T} \xx') \nabla_{f_d}^{(k)} \ell(\xx_a^{(k)}, y_a^{(k)}) \phi(\hat\ww_r^{(k),T} \xx_a^{(k)}).
\end{equation}
For $k=0$ terms inside sums of each increment have zero expectations.
Hence the Central Limit Theorem can be used here.
We get: $\Delta K_{aw,d}^{(0),\prime} = \Theta_{d\to\infty}(1)$, $\Delta K_{ww,d}^{(0),\prime} = \Theta_{d\to\infty}(d^{-1})$, $\Delta K_{wa,d}^{(0),\prime} = \Theta_{d\to\infty}(1)$.
Since $K_{a,d}^{(0)} = \Theta_{d\to\infty}(d)$, $K_{w,d}^{(0)} = \Theta_{d\to\infty}(1)$, the last statement of Condition \ref{cond:separating_conditions} is violated as well.

\section{Initialization-corrected mean-field (IC-MF) limit}
\label{sec:app_icmf_limit_model}

Here we consider the same training dynamics as for the mean-field scaling (see Appendix \ref{sec:app_nonstationary_kernel_limits}), but with a modified model definition:
\begin{equation}
    \Delta \hat a_r^{(k)} =
    % \hat a_r^{(k+1)} - \hat a_r^{(k)} =
    -\hat\eta_a^* \sigma^* \nabla_{f_d}^{(k)} \ell(\xx_a^{(k)},y_a^{(k)}) \; \phi(\hat\ww_r^{(k),T} \xx_a^{(k)}),
    \quad
    \hat a_r^{(0)} \sim \NN(0, 1), 
\end{equation}
\begin{equation}
    \Delta \hat\ww_r^{(k)} =
    % \hat\ww_r^{(k+1)} - \hat\ww_r^{(k)} =
    -\hat\eta_w^* \sigma^* \nabla_{f_d}^{(k)} \ell(\xx_w^{(k)},y_w^{(k)}) \; \hat a_r^{(k)} \phi'(\hat\ww_r^{(k),T} \xx_w^{(k)}) \xx_w^{(k)},
    \quad 
    \hat\ww_r^{(0)} \sim \NN(0, I_{d_\xx}).
\end{equation}
\begin{equation}
    f_d^{(k)}(\xx) =
    %f_{d,mf}^{(k)}(\xx) + f_{d,ntk}^{(0)}(\xx) =
    \sigma^* (d/d^*)^{-1} \sum_{r=1}^d \hat a_r^{(k)} \phi(\hat\ww_r^{(k),T} \xx) +
    \sigma^* (d/d^*)^{-1/2} \sum_{r=1}^d \hat a_r^{(0)} \phi(\hat\ww_r^{(0),T} \xx),
    \label{eq:app_icmf_model}
\end{equation}
\begin{equation}
    \nabla_{f_d}^{(k)} \ell(\xx,y) =
    \frac{-y}{1 + \exp(f_d^{(k)}(\xx) y)}
    \quad
    \forall k \geq 0.
\end{equation}

Similar to the mean-field case (Appendix \ref{sec:app_nonstationary_kernel_limits}), we rewrite the dynamics above in terms of the weight-space measure:
\begin{equation}
    \mu_d^{(k+1)} =
    \mu_d^{(k)} + \Div(\mu_d^{(k)} \Delta\theta_d^{(k)}),
    \quad
    \mu_d^{(0)} = 
    \frac{1}{d} \sum_{r=1}^d \delta_{\hat\theta_r^{(0)}},
    \quad
    \hat\theta_r^{(0)} \sim \NN(0, I_{1+d_\xx})
    \quad
    \forall r \in [d],
\end{equation}
\begin{equation}
    \Delta\theta_d^{(k)}(\hat a, \hat\ww) =
    -[\hat\eta_a^* \sigma^* \nabla_{f_d}^{(k)} \ell(\xx_a^{(k)},y_a^{(k)}) \phi(\hat\ww^T \xx_a^{(k)}), \; \hat\eta_w^* \sigma^* \nabla_{f_d}^{(k)} \ell(\xx_w^{(k)},y_w^{(k)}) \hat a \phi'(\hat\ww^T \xx_w^{(k)}) \xx_w^{(k),T}]^T,
\end{equation}
\begin{equation}
    f_d^{(k)}(\xx) = 
    \sigma^* d^* \int \hat a \phi(\hat\ww^T \xx) \, \mu_d^{(k)}(d\hat a, d\hat\ww) + \sigma^* (d d^*)^{1/2} \int \hat a \phi(\hat\ww^T \xx) \, \mu_d^{(0)}(d\hat a, d\hat\ww),
\end{equation}
\begin{equation}
    \nabla_{f_d}^{(k)} \ell(\xx,y) =
    \frac{-y}{1 + \exp(f_d^{(k)}(\xx) y)}
    \quad
    \forall k \geq 0.
\end{equation}

Note that here $f_d^{(k)}$ stays finite in the limit of $d \to \infty$ for any $k \geq 0$.
Hence taking the limit $d\to\infty$ yields:
\begin{equation}
    \mu_\infty^{(k+1)} =
    \mu_\infty^{(k)} + \Div(\mu_\infty^{(k)} \Delta\theta_\infty^{(k)}),
    \quad
    \mu_\infty^{(0)} = 
    \NN(0, I_{1+d_\xx}),
\end{equation}
\begin{equation}
    \Delta\theta_\infty^{(k)}(\hat a, \hat\ww) =
    -[\hat\eta_a^* \sigma^* \nabla_{f_\infty}^{(k)} \ell(\xx_a^{(k)},y_a^{(k)}) \phi(\hat\ww^T \xx_a^{(k)}), \; \hat\eta_w^* \sigma^* \nabla_{f_\infty}^{(k)} \ell(\xx_w^{(k)},y_w^{(k)}) \hat a \phi'(\hat\ww^T \xx_w^{(k)}) \xx_w^{(k),T}]^T,
\end{equation}
\begin{equation}
    f_\infty^{(k)}(\xx) = \sigma^* d^* \int \hat a \phi(\hat\ww^T \xx) \, \mu_\infty^{(k)}(d\hat a, d\hat\ww) + \sigma^* d^{*,1/2} \NN(0, \sigma^{(0),2}(\xx)),
\end{equation}
\begin{equation}
    \nabla_{f_\infty}^{(k)} \ell(\xx,y) =
    \frac{-y}{1 + \exp(f_\infty^{(k)}(\xx) y)}
    \quad
    \forall k \geq 0.
\end{equation}

\section{Experimental details}
\label{sec:app_experiments}

We perform our experiments on a feed-forward fully-connected network with a single hidden layer with no biases.
We learn our network as a binary classifier on a subset of the CIFAR2 dataset (which is a dataset of first two classes of CIFAR10\footnote{CIFAR10 can be downloaded at \url{https://www.cs.toronto.edu/~kriz/cifar.html}}) of size 1024.
We report results using a test set from the same dataset of size 2000.
We do not do a hyperparameter search, for this reason we do not use a validation set.

We train our network for 2000 training steps to minimize the binary cross-entropy loss.
We use a full-batch GD as an optimization algorithm.
We repeat our experiments for 10 random seeds and report mean and deviations in plots for logits and kernels (e.g. Figure \ref{fig:scaling_plane_and_losses_and_accs}, left).
For plots of the KL-divergence, we use logits from these 10 random seeds to fit a single gaussian.
Where necessary, we estimate data expectations (e.g. $\EE_{\xx\sim\DD} |f(\xx)|$) using 10 samples from the test dataset.

We experiment with other setups (i.e. using a mini-batch gradient estimation instead of exact one, a larger train dataset, a multi-class classification) in Appendix \ref{sec:app_other_setups}.
All experiments were conducted on a single NVIDIA GeForce GTX 1080 Ti GPU using the PyTorch framework \cite{paszke2017automatic}.
Our code is available online: \url{https://github.com/deepmipt/research/tree/master/Infinite_Width_Limits_of_Neural_Classifiers}.

%\paragraph{A note on initialization.}
Although our analysis assumes initializing variables with samples from a gaussian, nothing changes if we sample $\sigma \xi$ instead, where $\xi$ can be any symmetric random variable with a distribution independent on hyperparameters.

In our experiments, we took a network of width $d^* = 2^7 = 128$ and apply the Kaiming He uniform initialization \cite{he2015init} to its layers; we call this network a reference network.
According to the Kaiming He initialization strategy, initial weights have a zero mean and a standard deviation $\sigma^* \propto (d^*)^{-1/2}$ for the output layer, while the standard deviation of the input layer does not depend on the reference width $d^*$.
For this network we take learning rates in the original parameterization $\eta_a^* = \eta_w^* = 0.02$.
After that, we scale its initial weights and learning rates with width $d$ according to a scaling at hand:
\begin{equation*}
    \sigma = 
    \sigma^* \left(\frac{d}{d^*}\right)^{q_\sigma},
    \quad
    \hat\eta_{a/w} = 
    \hat\eta_{a/w}^* \left(\frac{d}{d^*}\right)^{\tilde q_{a/w}}.
\end{equation*}
Note that we have assumed $\sigma_w = 1$.
By definition, $\hat\eta_{a/w} = \eta_{a/w} / \sigma_{a/w}^2$; this implies:
\begin{equation*}
    \eta_a = 
    \eta_a^* \left(\frac{\sigma}{\sigma^*}\right)^2 \left(\frac{d}{d^*}\right)^{\tilde q_a} =
    \eta_a^* \left(\frac{d}{d^*}\right)^{\tilde q_a+2q_\sigma},
    \quad
% \end{equation*}
% \begin{equation*}
    \eta_w = 
    \eta_w^* \left(\frac{d}{d^*}\right)^{\tilde q_w}.
\end{equation*}

\section{Experiments for other setups}
\label{sec:app_other_setups}

Although plots provided in the main body represent the full-batch GD on a subset of CIFAR2, we have experimented with other setups as well.
In particular, we have varied the batch size and the size of the train dataset.
Results are shown in Figures \ref{fig:app_test_accs_all_setups}-\ref{fig:app_kl_logits_all_setups}.
Differences are marginal and not qualitative.

\begin{figure}[h]
    \centering
    \includegraphics[width=0.4\linewidth]{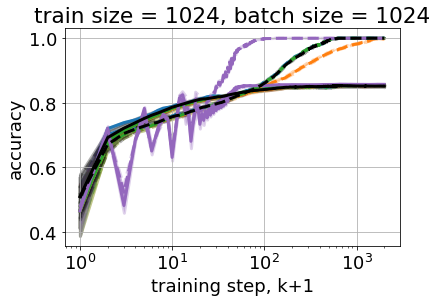}
    \includegraphics[width=0.4\linewidth]{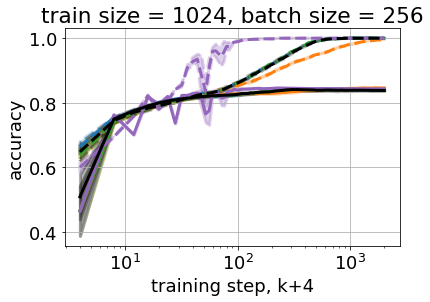}
    \\
    \includegraphics[width=0.4\linewidth]{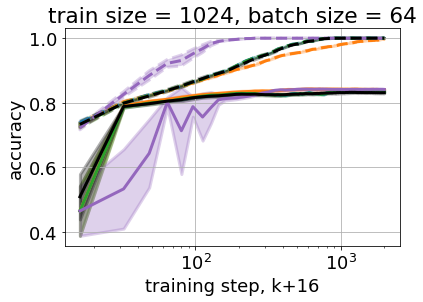}
    \includegraphics[width=0.4\linewidth]{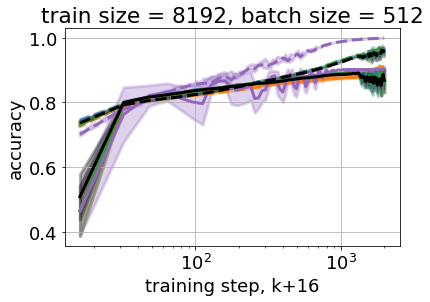}
    \caption{
        Test accuracy of different limit models, as well as of the reference model.
        \textit{Setup:} We train a one hidden layer network on subsets of the CIFAR2 dataset of different sizes with SGD with varying batch sizes.
    }
    \label{fig:app_test_accs_all_setups}
\end{figure}

\begin{figure}[h]
    \centering
    \includegraphics[width=0.4\linewidth]{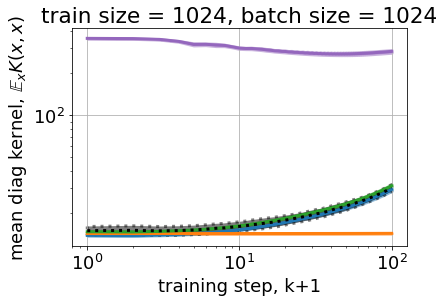}
    \includegraphics[width=0.4\linewidth]{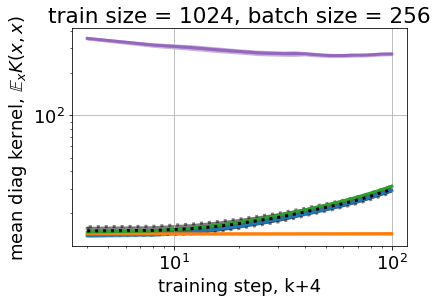}
    \\
    \includegraphics[width=0.4\linewidth]{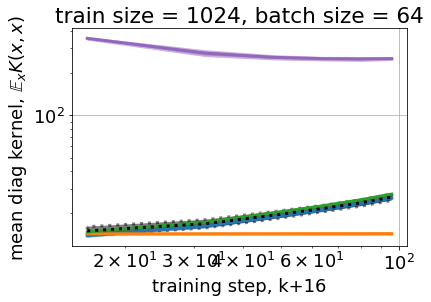}
    \includegraphics[width=0.4\linewidth]{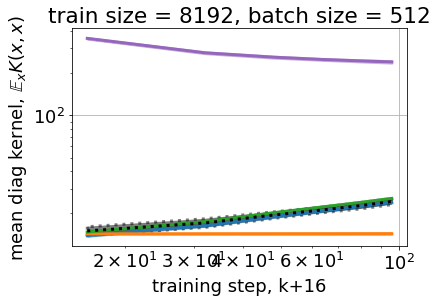}
    \caption{
        Mean kernel diagonals $\EE_{\xx\sim\DD} (\hat\eta_a^* K_{a,d}(\xx,\xx) + \hat\eta_w^* K_{w,d}(\xx,\xx))$ of different limit models, as well as of the reference model.
        \textit{Setup:} We train a one hidden layer network on subsets of the CIFAR2 dataset of different sizes with SGD with varying batch sizes.
        Data expectations are estimated with 10 test data samples.
    }
    \label{fig:app_kernel_diags_all_setups}
\end{figure}

\begin{figure}[h]
    \centering
    \includegraphics[width=0.4\linewidth]{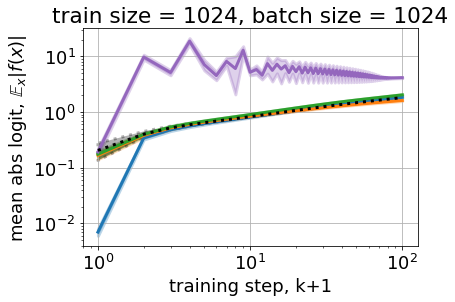}
    \includegraphics[width=0.4\linewidth]{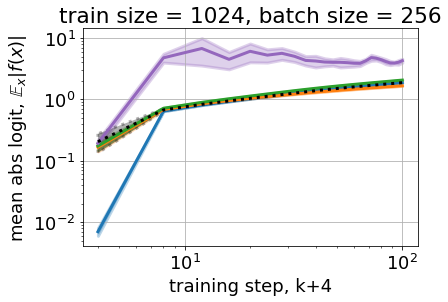}
    \\
    \includegraphics[width=0.4\linewidth]{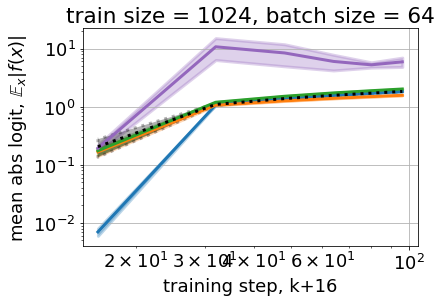}
    \includegraphics[width=0.4\linewidth]{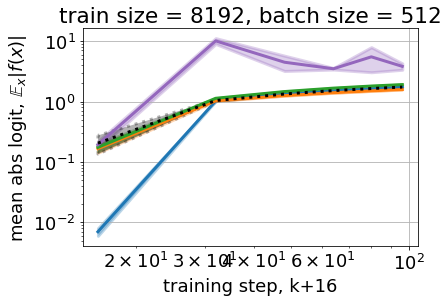}
    \caption{
        Mean absolute logits $\EE_{\xx\sim\DD} |f(\xx)|$ of different limit models, as well as of the reference model.
        \textit{Setup:} We train a one hidden layer network on subsets of the CIFAR2 dataset of different sizes with SGD with varying batch sizes.
        Data expectations are estimated with 10 test data samples.
    }
    \label{fig:app_abs_logits_all_setups}
\end{figure}

\begin{figure}[h]
    \centering
    \includegraphics[width=0.4\linewidth]{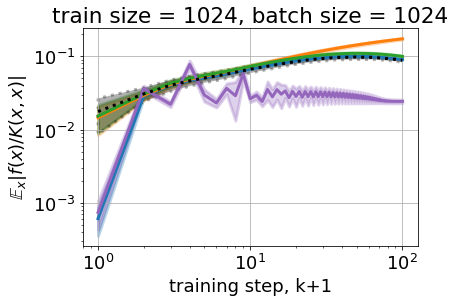}
    \includegraphics[width=0.4\linewidth]{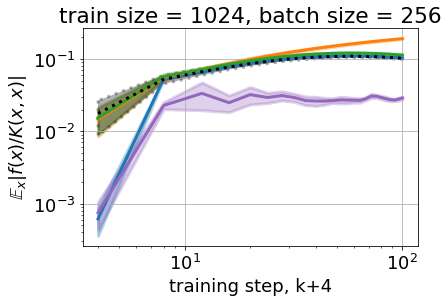}
    \\
    \includegraphics[width=0.4\linewidth]{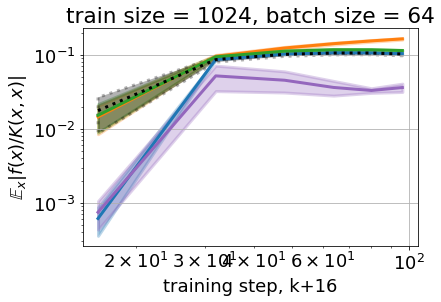}
    \includegraphics[width=0.4\linewidth]{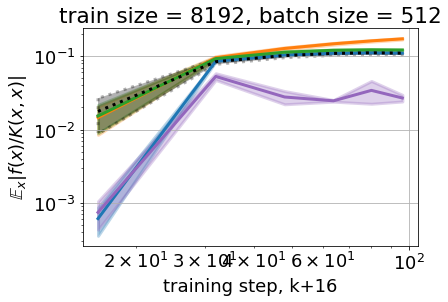}
    \caption{
        Mean absolute logits relative to kernel diagonals $\EE_{\xx\sim\DD} |f_d(\xx) / (\hat\eta_a^* K_{a,d}(\xx,\xx) + \hat\eta_w^* K_{w,d}(\xx,\xx))|$ of different limit models, as well as of the reference model.
        \textit{Setup:} We train a one hidden layer network on subsets of the CIFAR2 dataset of different sizes with SGD with varying batch sizes.
        Data expectations are estimated with 10 test data samples.
    }
    \label{fig:app_abs_logits_by_k_diags_all_setups}
\end{figure}

\begin{figure}[h]
    \centering
    \includegraphics[width=0.49\linewidth]{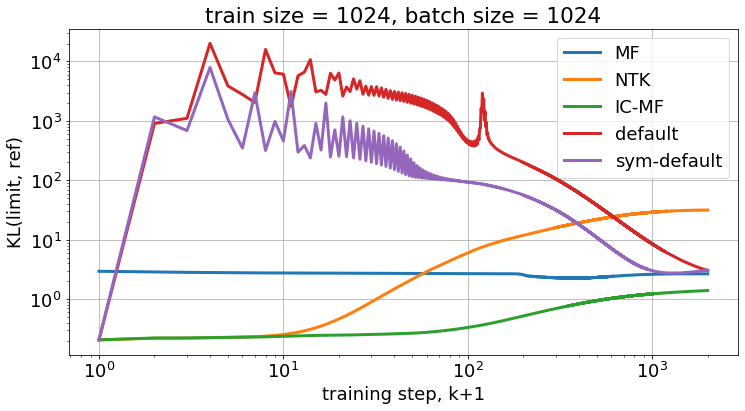}
    \includegraphics[width=0.49\linewidth]{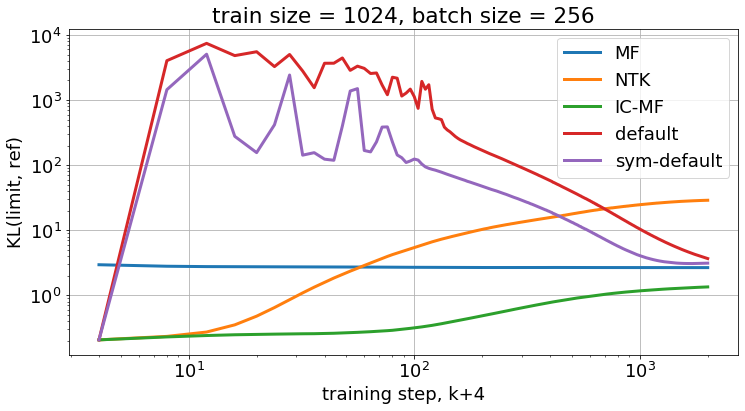}
    \\
    \includegraphics[width=0.49\linewidth]{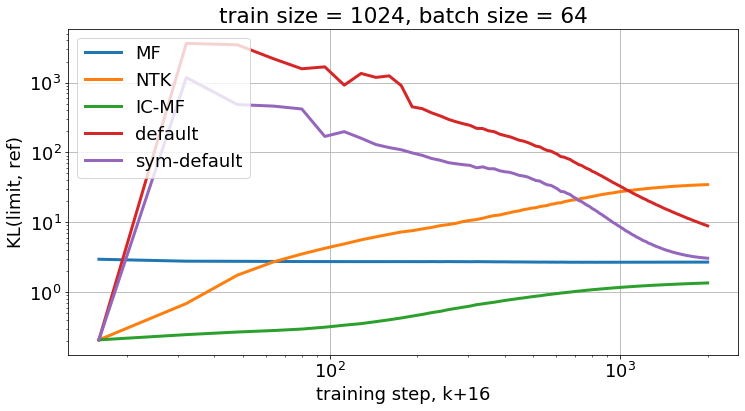}
    \includegraphics[width=0.49\linewidth]{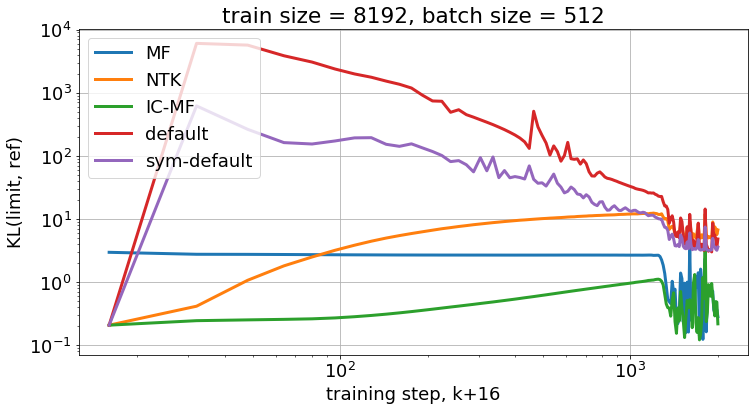}
    \caption{
        KL-divergence of different limit models relative to a reference model.
        \textit{Setup:} We train a one hidden layer network on subsets of the CIFAR2 dataset of different sizes with SGD with varying batch sizes.
    }
    \label{fig:app_kl_logits_all_setups}
\end{figure}

\section{Generalization to deep nets proposal}
\label{sec:app_deep_nets}

While our present analysis is devoted to networks with a single hidden layer, we discuss possible generalizations to deep nets here.

Consider a network with $H$ hidden layers.
For simplicity, assume that widths of all hidden layers are equal to $d$.
We thus have to consider $H+1$ learning rates $\tilde q_{0:H}$, one for each layer, and similarly $H+1$ initialization variances $\sigma_{0:H}$.
Without loss of generality, we may assume the input layer variance to be equal to $1$ (we can rescale inputs otherwise).
This gives $2H+1$ hyperparameters in total.

Similarly to what we did for $H=1$, we assume that each hyperparameter obeys a power-law with respect to width.
Let us refer the set of the power-law exponents as a "scaling".
Again, we want to reason about what the scaling should be in order to converge to a dynamically stable limit model: see Condition~\ref{cond:well_def_evolution_informal}.
Moreover, we want to derive conditions that separate the domain of "dynamically stable" scalings, such that each region corresponds to a distinct unique dynamically stable limit model: see Condition~\ref{cond:separating_conditions_informal}.

Having that much hyperparameters seems burdening, and this prohibits us to draw a nice two-dimensional scaling plane as we did for $H=1$: see Figure~\ref{fig:scaling_plane_and_losses_and_accs}.
For this reason, one have to reduce the dimensionality of a scaling.

First, it is tempting to consider a homogeneous activation function: a leaky ReLU.
This introduces a symmetry in the weight space that guarantees that dynamics depends only on the product of initialization variances: $\sigma_H \times \ldots \times \sigma_0$; let us refer this product as $\sigma$.
This approach was previously used by \cite{golikov2020towards}, however we have to note that non-smoothness of the activation function introduces certain mathematical obstacles.
Nevertheless, one may consider sacrificing mathematical rigor in favor of reducing the number of hyperparameters from $2H+1$ to $H+1$.

The next simplification should affect learning rate scaling exponents.
Similar to what we have done for a shallow net, we may assume all learning rate exponents to be equal: $\tilde q_0 = \ldots = \tilde q_H = \tilde q$.
The NTK limit, which generalizes naturally to deep nets, requires $\tilde q_0 = \ldots = \tilde q_H = 0$, and hence conforms the assumption above.
However, a possible generalization of the mean-field limit requires $\tilde q_0 = \tilde q_H = 1$, while $\tilde q_1 = \ldots = \tilde q_{H-1} = 2$; see \cite{sirignano2019deep_mf,araujo2019mean,golikov2020towards}.
This aspect suggests the following alternatives:
\begin{enumerate}
    \item Consider $\tilde q_0 = \tilde q_H = \tilde q$, while $\tilde q_1 = \ldots = \tilde q_{H-1} = \tilde q_{hid}$; this results in a three-dimensional space of scalings: $(q_\sigma, \tilde q, \tilde q_{hid})$.
    \item Consider $\tilde q_0 = \tilde q_H = \tilde q$, while $\tilde q_1 = \ldots = \tilde q_{H-1} = 2 \tilde q$; this results in a two-dimensional space of scalings that covers both of the NTK and the mean-field scalings.
\end{enumerate}
The former class of scalings is richer, but if it does not contain any interesting limit models that are present in the second class, it can be more expository to tighten the class to the latter.
By "interesting" we mean limit models that are "non-dominated" in a similar sense as we have specified in Section~\ref{sec:approximating_finite_nets}.

In order to define which limit models are better than others in approximating finite-width nets ("non-dominated"), we have to derive conditions that separate the domain of dynamically stable scalings into regions of distinct unique corresponding limit models, similar to Condition~\ref{cond:separating_conditions_informal}.
We hypothesize that these conditions are similar to the shallow case: (1) a limit model at initialization is finite, (2) kernels at initialization are finite, (3) a limit model and kernels are of the same order, (4) kernels evolve at initialization.
Since we have decided to consider separate learning rate scalings for hidden layers and for input and output layers, we expect that the above-proposed conditions should consider two distinct families of kernels respectively: hidden kernels and input plus output kernels.

It will be very interesting to check if all of the dynamically stable limit models are specified either by an evolution in a model space driven by constant kernel, or by an evolution of a weight-space measure, as was the case for $H=1$; see Appendix~\ref{sec:app_finite_limit_models}.
Investigating a non-dominated limit model, different from both the NTK and the mean-field models, should be a valuable outcome of the proposed research program; it will be even more valuable if this limit model will not be covered by both mean-field and constant kernel formalisms.

We also have to note that according to \cite{golikov2020towards}, the mean-field limit vanishes for $H > 2$.
This fact suggests that the analysis for deep nets should be held for $H = 2$ and for $H > 2$ separately.

\end{document}